\newcommand{\bigb}{\textsc{BigBird}\xspace}
\Crefname{equation}{Eq.}{Eqs.}
\Crefname{figure}{Fig.}{Fig.}
\Crefname{tabular}{Tab.}{Tabs.}
\Crefname{table}{Tab.}{Tabs.}
\Crefname{section}{Sec.}{Sec.}
\Crefname{appendix}{App.}{App.}
\newtheorem{theorem}{Theorem}
\newtheorem{lemma}{Lemma}
\newtheorem{proposition}{Proposition}
\newtheorem{definition}{Definition}
\newtheorem{task}{Task}
\newtheorem{conjecture}{Conjecture}
\DeclareMathOperator*{\argmax}{arg\,max}
\DeclareMathOperator*{\argmin}{arg\,min}
\def\N{\mathbb{N}}
\def\Q{\mathbb{Q}}
\def\R{\mathbb{R}}
\def\mK{{\bm{K}}}
\def\mV{{\bm{V}}}
\def\mX{{\bm{X}}}
\def\mY{{\bm{Y}}}
\def\mZ{{\bm{Z}}}
\def\vzero{{\bm{0}}}
\def\va{{\bm{a}}}
\def\vp{{\bm{p}}}
\def\vq{{\bm{q}}}
\def\vs{{\bm{s}}}
\def\vv{{\bm{v}}}
\def\vx{{\bm{x}}}
\def\vy{{\bm{y}}}
\def\vz{{\bm{z}}}
\newcommand{\lu}{\underline{\phantom{A}}}
\newcommand{\Gd}{\ensuremath{\mathbb{G}_{\delta}}}
\newcommand{\relu}{\operatorname{ReLU}}
\newcommand{\Attn}{\ensuremath{\textsc{Attn}\xspace}}
\newcommand{\CrossAttn}{\ensuremath{\textsc{CrossAttn}\xspace}}
\newcommand{\penc}{\operatorname{pos}}
\newcommand{\Enc}{\operatorname{Enc}}
\newcommand{\Dec}{\operatorname{Dec}}
\newcommand{\sz}{0,\ldots,0}
\newcommand{\vzs}{\vzero_s}
\newcommand{\oh}[1]{\llbracket\ #1\ \rrbracket}
\DeclarePairedDelimiter{\norm}{\|}{\|}
\DeclarePairedDelimiter{\set}{ \{ }{ \} }
\newcommand{\bkt}[2]{\left \langle #1, #2 \right \rangle}
\definecolor{violet}{rgb}{0.50, 0.16, 0.88}
\title{Big Bird: Transformers for Longer Sequences}
\author{%
  Manzil Zaheer, $\quad$ $\quad$ $\quad$ Guru Guruganesh, $\quad$ $\quad$ $\quad$Avinava Dubey, \\
  {\bf  Joshua Ainslie, $\quad$ Chris Alberti, $\quad$ Santiago Ontanon, $\quad$ Philip Pham,}  \\
  {\bf Anirudh Ravula, $\quad$ Qifan Wang, $\quad$ Li Yang, $\quad$ Amr Ahmed } \\
  Google Research\\
  \texttt{\{manzilz, gurug, avinavadubey\}@google.com}
}
\begin{document}

\maketitle
\begin{abstract}

Transformers-based models, such as BERT, have been one of the most successful deep learning models for NLP. Unfortunately, one of their core limitations is the quadratic dependency (mainly in terms of memory) on the sequence length due to their full attention mechanism.  To remedy this, we propose, \bigb, a sparse attention mechanism that reduces this quadratic dependency to linear.  We show that \bigb is a universal approximator of sequence functions and is Turing complete, thereby preserving these  properties of the quadratic, full attention model. Along the way, our theoretical analysis reveals  some of the benefits of having $O(1)$ global tokens (such as CLS), that attend to the entire sequence  as part of the sparse attention mechanism. The proposed sparse attention can handle sequences of length up to 8x of  what was previously possible using  similar hardware.  As a consequence of the capability to handle longer context, \bigb  drastically improves performance  on various NLP tasks  such as question answering and summarization. We also propose novel applications to genomics data.
\end{abstract}

\section{Introduction}
Models based on Transformers \citep{vaswani2017attention}, such as BERT \citep{devlin2018bert,liu2019roberta}, 
are wildly successful for a wide variety of Natural Language Processing (NLP) tasks and consequently are mainstay of modern NLP research. 
Their versatility and robustness are the primary drivers behind the wide-scale adoption of Transformers. 
The model is easily adapted for a diverse range of sequence based tasks -- as a seq2seq model for translation~\citep{vaswani2017attention}, summarization~\citep{miller2019leveraging}, generation~\citep{chen2019distilling}, etc.~or as a standalone encoders for sentiment analysis~\citep{sun2019utilizing}, POS tagging~\citep{martin2019camembert}, machine reading comprehension~\citep{wang2019multi}, etc. -- and it is known to vastly outperform previous 
sequence models like LSTM~\citep{hochreiter1997long}.
The key innovation in Transformers is the introduction of a self-attention mechanism, which can be evaluated in parallel for each token of the input sequence, eliminating the sequential dependency in recurrent neural networks, like LSTM.
This parallelism enables Transformers to leverage the full power of modern SIMD hardware accelerators like GPUs/TPUs, thereby facilitating training of NLP models on datasets of unprecedented size.
This ability to train on large scale data has led to surfacing of models like BERT~\citep{devlin2018bert} and T5~\citep{raffel2019exploring}, which pretrain transformers on large general purpose corpora and transfer the knowledge to down-stream task. The pretraining has led to significant improvement in low data regime downstream tasks~\citep{kumar2020data} as well as tasks with sufficient data~\citep{yang2019xlnet} and thus have been a major force behind the ubiquity of transformers in contemporary NLP.

The self-attention mechanism  overcomes  constraints of RNNs (namely the sequential nature of RNN)
by allowing each token in the input sequence to attend independently to every other token 
in the sequence.    This design choice has several interesting repercussions.
In particular, the full self-attention have computational and memory requirement that 
is quadratic in the sequence length. We note that while the corpus can be large, the sequence length, which provides the context in many applications is very limited. 
Using commonly available current hardware and model sizes, this 
requirement translates to roughly being able to handle input sequences of length 512 tokens. 
This reduces its direct applicability to tasks that require larger context, 
like QA~\citep{lin2003makes}, document classification, etc.

However, while we know that self-attention and Transformers are useful, our theoretical
understanding is rudimentary. What aspects of the self-attention model are necessary for its performance?
What can we say about the expressivity of Transformers and similar models? Apriori, it was not 
even clear from the  design if the proposed self-attention mechanism was as effective as RNNs. 
For example, the self-attention does not even obey sequence order as it is permutation equivariant. 
This concern has been partially resolved, as \citet{Yun19} showed that transformers are expressive 
enough to capture all continuous sequence to sequence functions with a compact domain. 
Meanwhile, \citet{Perez19} showed that the full  transformer is Turing Complete 
(i.e.~can simulate a full Turing machine). Two natural questions arise: Can we achieve the empirical 
benefits of a fully quadratic self-attention scheme using fewer inner-products? Do these 
sparse attention mechanisms preserve the expressivity and flexibility of the original network?

In this paper, we address both the above questions and produce a sparse attention mechanism  that
improves performance on a multitude of tasks that require long contexts.
We systematically develop \bigb, an attention mechanism whose complexity 
is linear in the number of tokens (\Cref{sec:arch}). We take inspiration from graph 
sparsification methods and understand  where the proof for expressiveness of Transformers 
breaks down when full-attention is relaxed to form the proposed attention pattern. This understanding
helped us develop \bigb, which is theoretically as expressive and also empirically useful. 
In particular, our \bigb consists of three main part:

\begin{itemize}[leftmargin=6mm, itemsep=0mm, partopsep=0pt,parsep=0pt]
    \item A set of $g$ global tokens attending on all parts of the sequence.
    \item All tokens attending to a set of $w$ local neighboring tokens.
    \item All tokens attending to a set of $r$ random tokens.
\end{itemize}
This leads to a high performing attention mechanism scaling to much longer sequence lengths (8x).

To summarize, our main \textbf{contributions} are:
\begin{enumerate}[leftmargin=6mm, itemsep=2mm, partopsep=0pt,parsep=0pt]
    \item 
    \bigb satisfies all the known theoretical properties of full transformer (\Cref{sec:theory}). In particular, we show that adding extra tokens allows one to express all continuous sequence to sequence functions with only $O(n)$-inner products. Furthermore, we show that under standard assumptions regarding precision, \bigb is Turing complete.
    
    \item 
    Empirically, we show that the extended context modelled by \bigb benefits variety of NLP tasks. 
    We achieve \emph{state of the art} results for  question answering and document summarization on a number of different datasets. 
    Summary of these results are presented in \Cref{sec:expt-nlp}.
    
    \item
    Lastly, we introduce a novel application of attention based models where long contexts 
    are beneficial: extracting contextual representations of genomics sequences like DNA. 
    With longer masked LM pretraining, \bigb improves performance on downstream tasks such as promoter-region and chromatin profile prediction (\Cref{sec:expt-bio}). 
\end{enumerate}

\subsection{Related Work}
There have been a number of interesting attempts, that were aimed at alleviating the quadratic dependency of Transformers, which can broadly categorized into two directions.
First line of work embraces the length limitation and develops method around it.
Simplest methods in this category just employ sliding window~\citep{wang2019multi}, but in general most work fits in the following general paradigm:
using some other mechanism select a smaller subset of relevant contexts to feed in the transformer and optionally iterate, i.e. call transformer block multiple time with different contexts each time. 
Most prominently, SpanBERT~\citep{joshi2020spanbert}, ORQA~\citep{lee2019latent}, REALM~\citep{guu2020realm}, RAG~\citep{lewis2020retrieval} have achieved strong performance for different tasks. However, it is worth noting that these methods often require significant engineering efforts (like back prop through large scale nearest neighbor search) and are hard to train.

Second line of work questions if full attention is essential and have tried to come up with approaches that do not require full attention, thereby reducing the memory and computation requirements.
Prominently,~\citet{dai2019transformer, sukhbaatar2019adaptive, rae2019compressive} have proposed auto-regresive models that work well for left-to-right language modeling but suffer in tasks which require bidirectional context.
\citet{child2019generating} proposed a sparse model that reduces the complexity to $O(n\sqrt{n})$, \citet{kitaev2019reformer} further reduced the complexity to $O(n\log(n))$ by using LSH to compute  nearest neighbors. 
\citet{ye2019bp} proposed binary partitions of the data where as \citet{qiu2019blockwise} reduced complexity by using block sparsity. 
Recently, Longformer \cite{beltagy2020longformer} introduced a localized sliding 
window based mask with few global mask to reduce computation and extended BERT to longer sequence based tasks. 
Finally, our work is closely related to and built on the work of Extended Transformers Construction~\citep{ainslie2020etc}. 
This work was designed to encode structure in text for transformers. The idea of global tokens was used extensively by them to achieve their goals. Our theoretical work can be seen as providing a  justification for the success of these models as well.
It is important to note that most of the aforementioned methods are heuristic based and empirically are not as versatile and robust as the original transformer, i.e. the same architecture do not attain SoTA on multiple standard benchmarks. (There is one exception of Longformer which we include in all our comparisons, see~\Cref{sec:app-related-work} for a more detailed comparison). 
Moreover, these approximations do not come with theoretical guarantees.

\begin{figure}
    \vspace{-3mm}
    \centering
    \begin{subfigure}{.22\textwidth}
        \includegraphics[width=\linewidth]{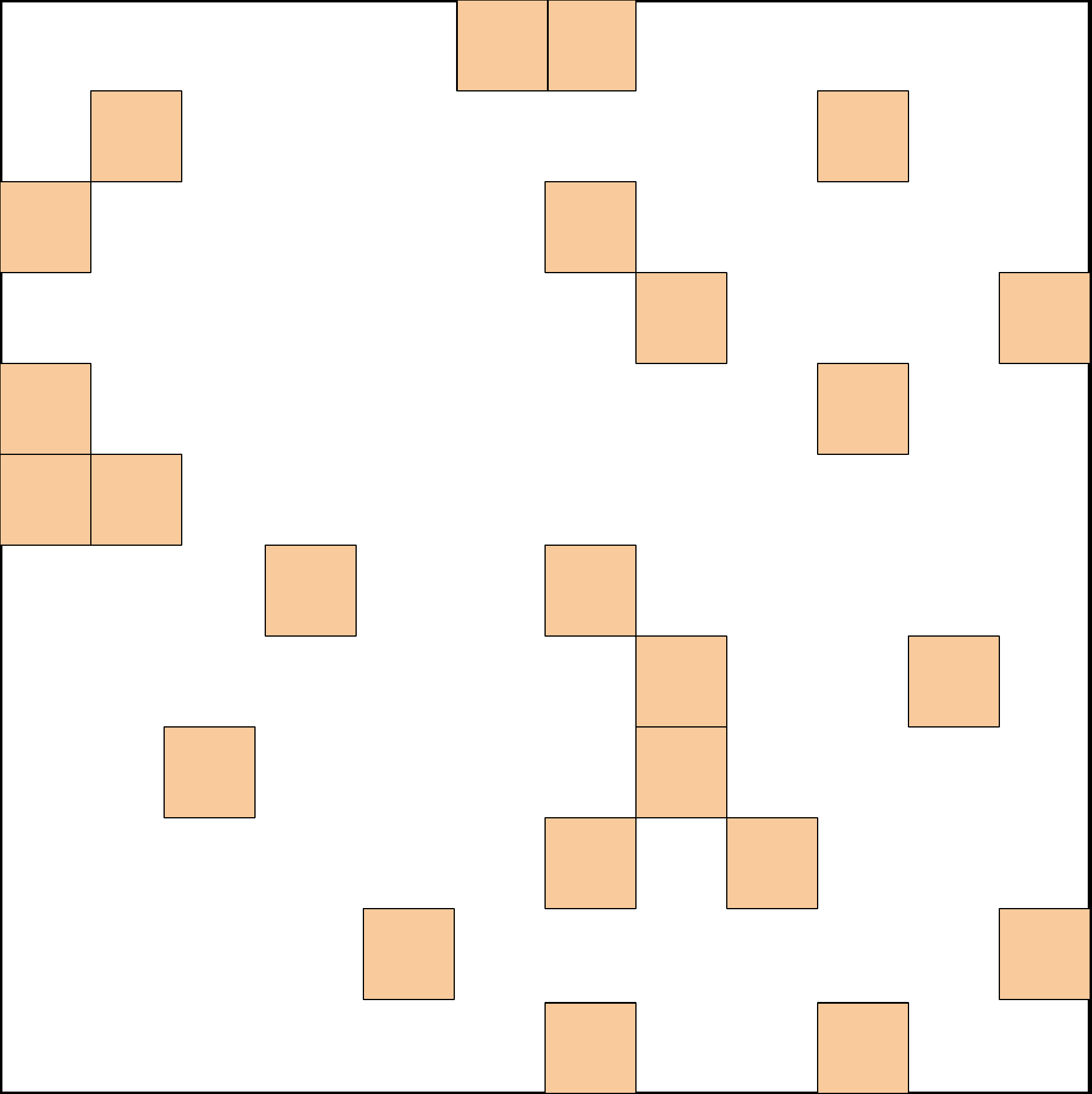}
        \caption{Random attention}
        \label{fig:rnd_atn}
    \end{subfigure}\hfill
    \begin{subfigure}{.22\textwidth}
        \includegraphics[width=\linewidth]{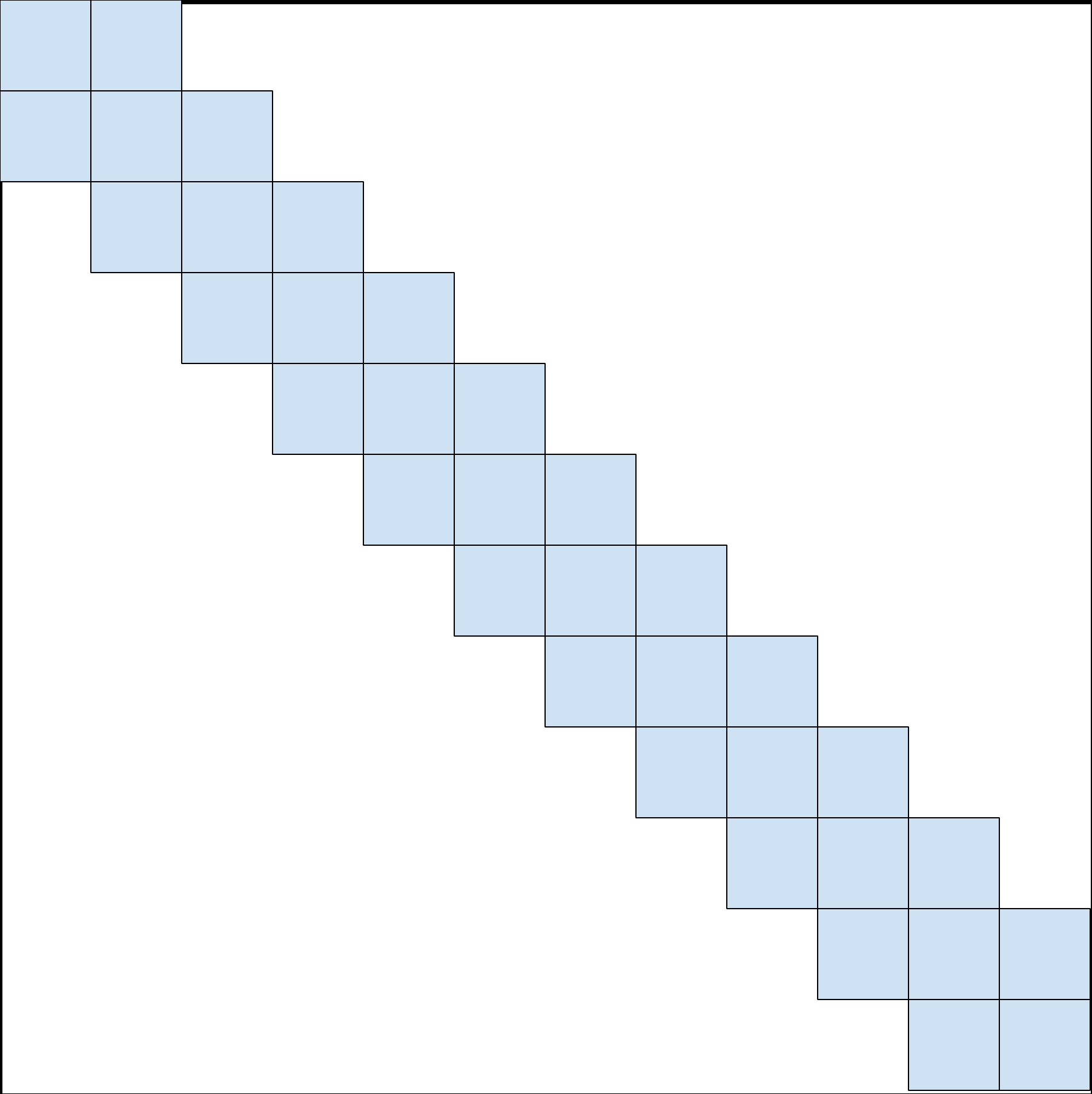}
        \caption{Window attention}
        \label{fig:wnd:atn}
    \end{subfigure}\hfill
    \begin{subfigure}{.22\textwidth}
        \includegraphics[width=\linewidth]{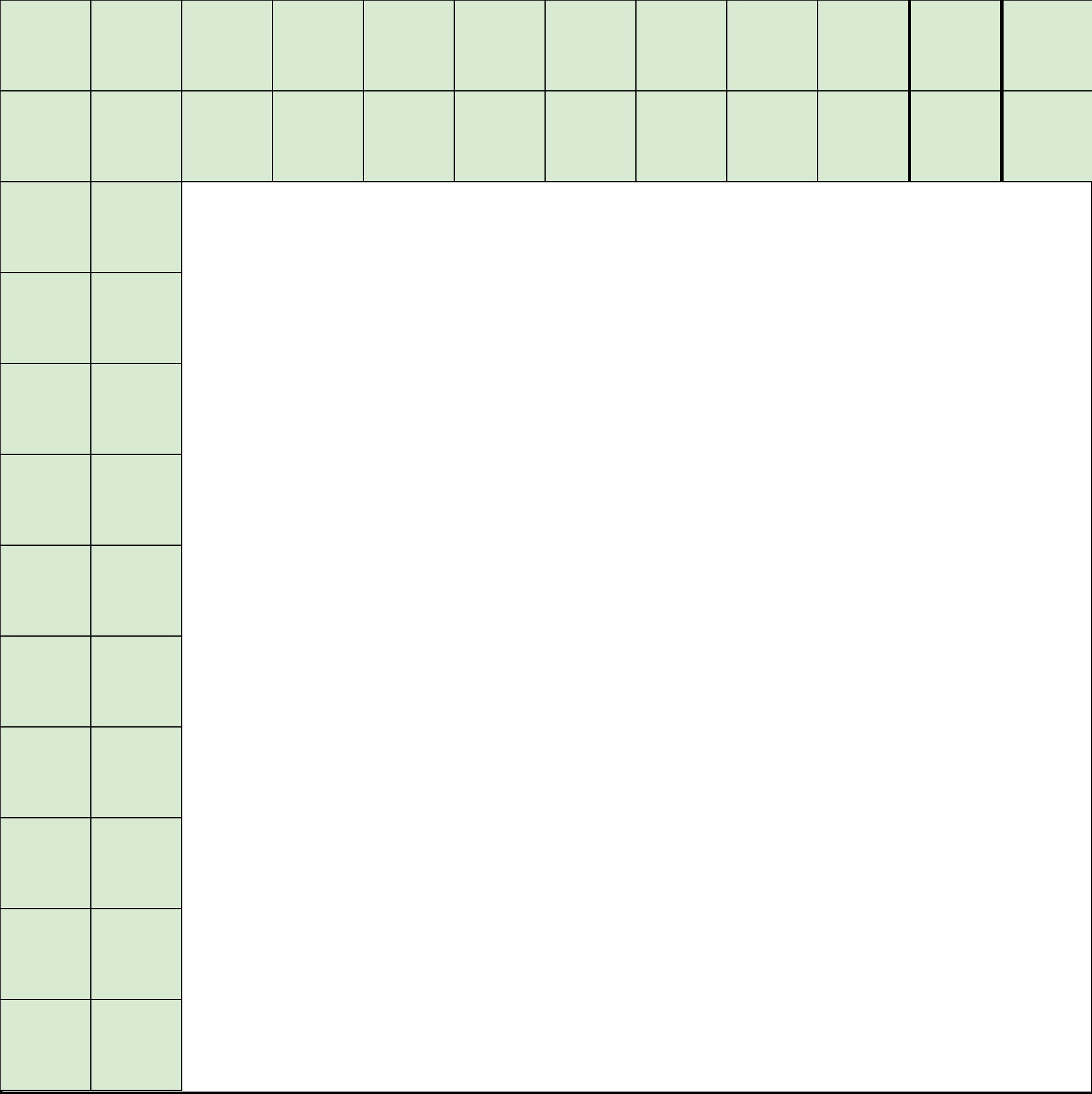}
        \caption{Global Attention}
        \label{fig:gbl_atn}
    \end{subfigure}\hfill
    \begin{subfigure}{.22\textwidth}
        \includegraphics[width=\linewidth]{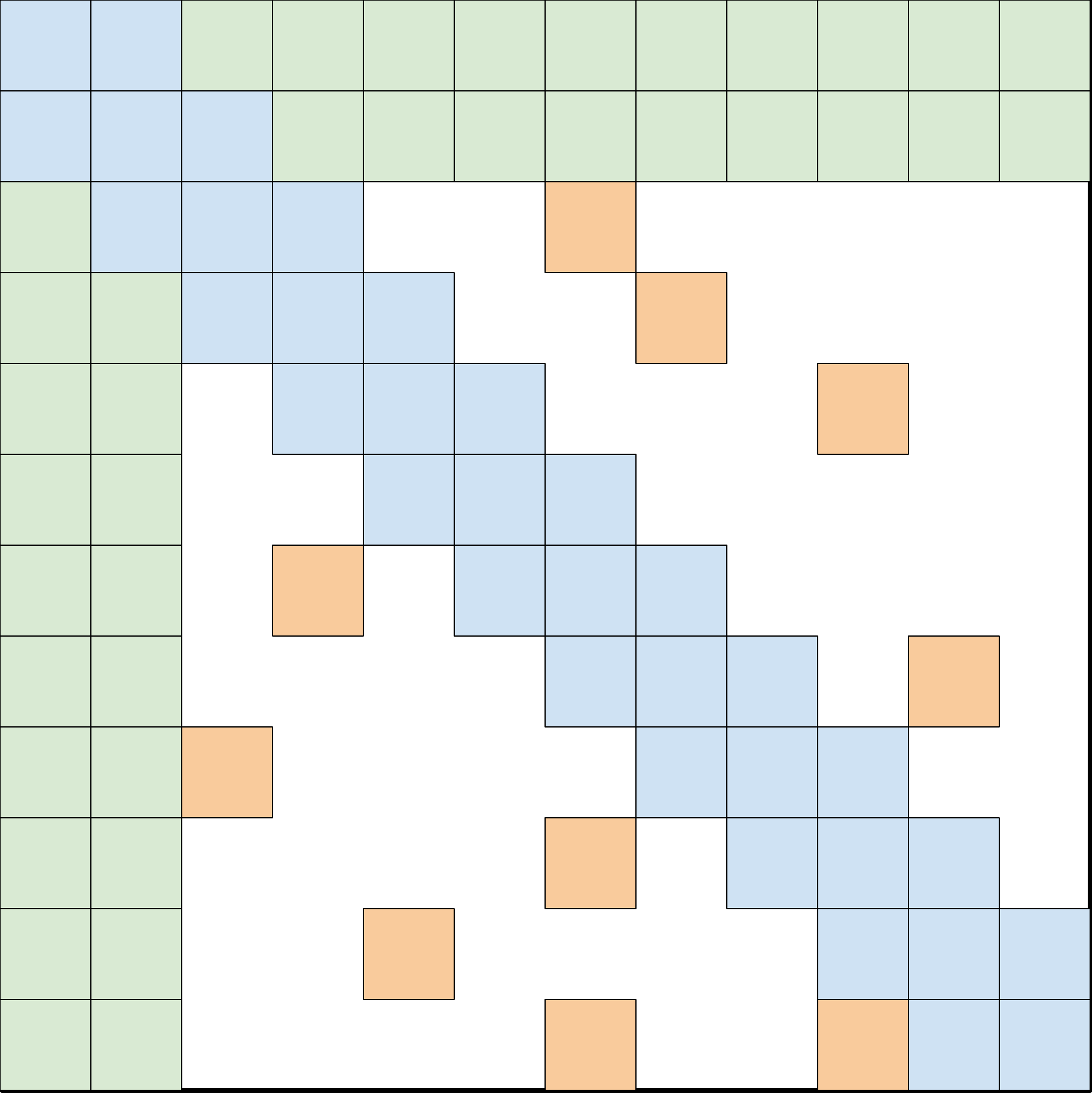}
        \caption{\bigb}
        \label{fig:bigb_atn}
    \end{subfigure}
    \hfill
    \caption{Building blocks of the attention mechanism used in \bigb. White color indicates absence of attention. (a) random attention with $r=2$, (b) sliding window attention with $w=3$ (c) global attention with $g=2$. (d) the combined \bigb model.}
    \label{fig:my_label}
\end{figure}

\section{\bigb Architecture}
\label{sec:arch}
In this section, we describe the \bigb model using the \emph{generalised attention mechanism} that is used in each layer of transformer operating on an input sequence $\mX = (\vx_1, ..., \vx_n) \in \R^{n\times d}$. 
The \emph{generalized attention mechanism} is described by a directed graph $D$ whose vertex set is $[n] = \set{1,\dots,n}$. 
The set of arcs (directed edges) represent the set of inner products that the attention mechanism will consider. 
Let $N(i)$ denote the out-neighbors set of node $i$ in $D$, then the $i^\text{th}$ output vector of the generalized attention mechanism is defined as \useshortskip
\vspace{-1mm}
\begin{equation}\vspace{-2mm}
\small
\Attn_D(\mX)_i = \vx_i +  \sum_{h=1}^H \sigma \left(Q_h(\vx_i) K_h(\mX_{N(i)})^T \right) \cdot V_h(\mX_{N(i)}) 
\label{AT} \tag{AT}
\end{equation}
where $Q_h, K_h:\R^d \to \R^m$ are query and key functions respectively, $V_h:\R^d \to \R^d$ is a value function, $\sigma$ is a scoring function (e.g. softmax or hardmax) and $H$ denotes the number of heads. 
Also note $X_{N(i)}$ corresponds to the matrix formed by only stacking $\{\vx_j : j\in N(i) \}$ and not all the inputs. 
If $D$ is the complete digraph, we recover the full quadratic attention mechanism of \citet{vaswani2017attention}. 
To simplify our exposition, we will operate on the adjacency matrix $A$ of the graph $D$  even though the underlying graph maybe sparse.
To elaborate, $A \in [0, 1]^{n \times n}$ with $A(i,j)=1$ if query $i$ attends to key $j$ and is zero otherwise.
For example, when $A$ is the ones matrix (as in BERT), it leads to quadratic complexity, since all tokens attend on every other token.
This view of self-attention as a fully connected graph allows us to exploit existing graph theory to help reduce its complexity. 
The problem of reducing the quadratic complexity of self-attention can now be seen as a \emph{graph sparsification problem}. 
It is well-known that random graphs are expanders and can approximate complete graphs in a number of different contexts including in their spectral properties~\citep{spielman2011spectral,hoory2006expander}.
We believe sparse random graph for attention mechanism should have two desiderata: small average path length between nodes and a notion of locality, each of which we discuss below.

Let us consider the simplest random graph construction, known as  Erd\H{o}s-R\'enyi model, where each edge is independently chosen with a fixed probability.
In such a random graph with just $\tilde{\Theta}(n)$ edges, 
the shortest path between any two nodes is logarithmic in the number of nodes~\citep{chung2002average, katzav2018distribution}.
As a consequence, such a random graph approximates the complete graph spectrally and its second eigenvalue (of the adjacency matrix) is quite far from the first eigenvalue ~\citep{benaych2019largest,benaych2020spectral,alt2019extremal}. 
This property leads to a rapid mixing time for random walks in the grpah, which informally suggests that information can flow fast between any pair of nodes.
Thus, we propose a sparse attention where each query attends over $r$ random number of keys i.e. $A(i, \cdot ) = 1$ for $r$ randomly chosen keys (see~\Cref{fig:rnd_atn}).

The second viewpoint which inspired the creation of \bigb is that  most contexts within NLP and computational biology have data which displays a great deal of \emph{locality of reference}. 
In this phenomenon, a great deal of information about a token can be derived from its neighboring tokens. 
Most pertinently, \citet{clark2019does} investigated self-attention models in NLP tasks and concluded that that neighboring inner-products are extremely important. 
The concept of locality, proximity of tokens in linguistic structure, also forms the basis of various linguistic theories such as transformational-generative grammar.
In the terminology of graph theory, clustering coefficient is a measure of locality of connectivity, and is high when the graph contains many cliques or near-cliques (subgraphs that are almost fully interconnected).
Simple Erd\H{o}s-R\'enyi random graphs do not have a high clustering coefficient~\citep{sussman2017clusteringcoeff}, but
a class of random graphs, known as small world graphs, exhibit high clustering coefficient~\citep{watts1998collective}.
A particular model introduced by~\citet{watts1998collective} is of high relevance to us as it achieves a good balance between average shortest path and the notion of locality.
The generative process of their model is as follows:
Construct a regular ring lattice, a graph with $n$ nodes each connected to $w$ neighbors, $w$/2 on each side.

\begin{wraptable}{r}{64mm}
    \vspace{-3mm}
    \centering
    \small
    \begin{tabular}{@{}lrrr@{}}
    \toprule
    Model &  MLM & SQuAD & MNLI \\
    \midrule
    BERT-base      &  64.2 & 88.5 & 83.4 \\
    Random (R)     &  60.1 & 83.0 & 80.2 \\
    Window (W)     &  58.3 & 76.4 & 73.1 \\
    R + W          &  62.7 & 85.1 & 80.5 \\
    \bottomrule
    \end{tabular}
    \caption{Building block comparison @512}
    \label{tab:init}
    \vspace{-2mm}
\end{wraptable}
In other words we begin with a sliding window on the nodes.
Then a random subset ($k$\%) of all connections is replaced with a random connection. 
The other (100 - $k$)\% local connections are retained.
However, deleting such random edges might be inefficient on modern hardware, so we retain it, which will not affect its properties.
In summary, to capture these local structures in the context, in \bigb, we define a sliding window attention, so that during self attention of width $w$, query at location $i$ attends from $i-w/2$ to $i+w/2$ keys.
In our notation, $A(i, i-w/2:i+w/2) = 1$  (see \Cref{fig:wnd:atn}). 
As an initial sanity check, we performed basic experiments to test whether these intuitions are sufficient in getting performance close to BERT like models, while keeping attention linear in the number of tokens.
We found that random blocks and local window were insufficient in capturing all the context necessary to 
compete with the performance of BERT.

The final piece of \bigb is inspired from our theoretical analysis  (\Cref{sec:theory}), which is critical for empirical performance. 
More specifically, our theory utilizes the importance of ``global tokens'' (tokens that attend to all tokens in the sequence and to whom all tokens attend to (see \Cref{fig:gbl_atn}). 
These global tokens can be defined in two ways:
\begin{itemize}[leftmargin=6mm, itemsep=2mm, partopsep=0pt,parsep=0pt]
    \item \bigb-\textsc{itc}: In internal transformer construction (\textsc{itc}), we make some existing tokens ``global'', which attend over the entire sequence. Concretely, we choose a subset $G$ of indices 
    (with $g:=|G|$), such that $A(i, :) = 1$ and $A(:,i) =1$ for all $i \in G$. 
    \item \bigb-\textsc{etc}: In extended transformer construction (\textsc{etc}), we include additional ``global'' tokens such as CLS. Concretely, we add $g$ global tokens that attend to all existing tokens. In our notation, this corresponds to creating a new matrix $B \in [0, 1]^{(N+g)\times (N+g)}$ by adding $g$ rows to matrix $A$, such that $B(i,:) = 1$, and $B(:, i) =1$ for all $i \in \{1,2, \ldots g\}$, and $B(g+i, g+j) = A(i, j) \forall\ i,j \in \{1, \ldots, N \}$. This adds extra location to store context and as we will see in the experiments improves performance.
\end{itemize}

The final attention mechanism for \bigb (\Cref{fig:bigb_atn}) has all three of these properties: queries attend to $r$ random keys, each query attends to $w/2$ tokens to the left of its location and $w/2$ to the right of its location and they contain $g$ global tokens (The global tokens can be from existing tokens or extra added tokens).
We provide implementation details in~\Cref{sec:apndx-impl}.

\section{Theoretical Results about Sparse Attention Mechanism}
\label{sec:theory}

In this section, we will show that that sparse attention mechanisms are as 
powerful and expressive as full-attention mechanisms in two respects.
First, we show that when sparse attention mechanisms are used in a standalone encoder 
(such as BERT), they are Universal Approximators of sequence 
to sequence functions in the style of \citet{Yun19}. 
We note that this property was also explored theoretically in contemporary work~\citet{yun2020on}.
Second, unlike~\citep{yun2020on}, we further show that sparse encoder-decoder transformers are Turing Complete 
(assuming the same conditions defined in~\citep{Perez19}).
Complementing the above positive results, we also show that moving to a sparse-attention mechanism 
incurs a cost, i.e.~there is no free lunch. In~\Cref{sec:limit}, we show lower bounds 
by exhibiting a natural task where any  sufficiently sparse  mechanism will 
require polynomially more layers. 

\subsection{Notation}
The complete Transformer {\em encoder} stack is nothing but the repeated application of a single-layer encoder (with independent parameters). 
We denote class of such Transformer encoders stack, defined using  generalized encoder (\Cref{sec:arch}),  by $\mathcal{T}_{D}^{H,m,q}$ which consists of $H$-heads with head size $m$ and $q$ is the hidden layer size of the output network, and the attention layer is defined by the directed graph $D$. 

The key difference between our proposed attention mechanism to that of \citet{vaswani2017attention,Yun19} is that we add a special token at the beginning of each sequence and assign it a special vector. 
We will refer to this as $\vx_0$. 
Therefore our graph $D$ will have vertex set $\set{0} \cup [n] = \set{0,1,2,\dots,n}$.
We will assume that this extra node and its respective vector will be dropped at the final output layer of transformer. 
To avoid cumbersome notation, we will still treat transformer as mapping sequences 
$\mX \in \R^{n \times d}$ to $\R^{n \times d}$. We will also allow the 
transformer to append position embeddings  $E \in \R^{d\times n}$ to 
matrix $X$ in the input layer. 

Finally, we need to define the function class and distance measure for proving universal approximation property. 
Let $\mathcal{F}_{CD}$ denote the set of continuous functions $f: [0,1]^{n \times d} \to \R^{n \times d} $ 
which are continuous with respect to the topology defined by  $\ell_p$ norm. 
Recall for any  $p \geq 1$, the $\ell_p$ distance is $d_p(f_1,f_2)  = \left(\int \norm{f_1(X) - f_2(X)}_p^p dX \right)^{1/p}$.

\subsection{Universal Approximators}
\begin{definition}
The star-graph $S$ centered at $0$ is the graph defined on $\set{0,\dots, n}$. 
The neighborhood of all vertices $i$ is $N(i) = \{0,i\} $  for $i \in \{1\dots n\}$ and 
$N(0) = \{1,\dots n\}$. 
\end{definition} 
Our main theorem is that the sparse attention mechanism defined by any graph containing $S$
is a universal approximator:
\begin{theorem}
\label{thm:universal}
    Given $1 < p < \infty$ and $\epsilon > 0 $, for any $f \in \mathcal{F}_{CD}$, there exists a 
    transformer with sparse-attention, $g \in \mathcal{T}_D^{H,m,q}$  such that 
    $d_p(f,g)\leq \epsilon$ where $D$ is any graph containing star graph $S$. 
\end{theorem}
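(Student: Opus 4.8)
The plan is to adapt the three-step template of \citet{Yun19} for full attention, isolating exactly where the sparse graph must carry the argument; throughout, it suffices to treat $D=S$, since any additional arcs can be neutralized by designing the heads so that in each hardmax step the intended star neighbor (the global token $\vx_0$ or the token itself) attains the maximal score. First I would reduce the target to a piecewise-constant function: by continuity of $f$ on the compact cube $[0,1]^{n\times d}$ and because $1<p<\infty$, I can partition the domain into the grid \Gd\ of cubes of side $\delta$ and approximate $f$ by a function $\bar f$ that is constant on each cube, incurring $\ell_p$-error at most $\epsilon/3$ once $\delta$ is small enough. This step does not see the attention graph at all.

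The core of the proof is to realize $\bar f$ by a \emph{modified} transformer that uses hardmax in place of softmax. As in \citet{Yun19}, a first block of feed-forward layers quantizes each input coordinate into $\{0,\delta,\dots,1-\delta\}$ (coordinatewise, hence graph-agnostic, with position embeddings making the tokens distinguishable), and a final block of feed-forward layers maps a per-token ``context code'' to the prescribed value of $\bar f$. The only place sparsity intervenes is in building the \emph{contextual mapping}: an assignment to each token of an identifier that uniquely encodes the \emph{entire} quantized sequence. Here I would use the global token $\vx_0$ of $S$ as a shared aggregate-and-broadcast register: since $\vx_0$ attends to every token and every token attends to $\vx_0$, I can implement the selective-shift operations of \citet{Yun19} in two phases --- an \emph{aggregation} phase in which successive hardmax layers let $\vx_0$ read the tokens block by block and fold their quantized values into a single number stored at position $0$, followed by a \emph{broadcast} phase in which each token $i$, attending only to $\vx_0$ and itself, merges the global code with its own quantized value into a globally unique identifier. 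Since the set of admissible quantized sequences is finite, a bounded (if large) number of such layers suffices.

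The \textbf{main obstacle} is precisely this contextual mapping. In the full-attention proof every token sees every other token directly, whereas here every long-range interaction must be serialized through the single relay $\vx_0$; I must therefore verify that the serialized selective shift still produces an \emph{injective} encoding --- that $\vx_0$ can accumulate the whole sequence rather than merely an extremal token (by iterating the shift over successive ranges of values), and that the broadcast step preserves distinctness across all tokens and all sequences. The delicate bookkeeping lies in showing that no two distinct quantized sequences collide at any position despite the bottleneck at node $0$.

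Finally I would convert hardmax back to softmax: scaling the query/key maps by a large temperature drives softmax attention to the hardmax limit on quantized inputs, and since the two disagree only on a set of small measure near the grid boundaries, the resulting standard transformer $g\in\mathcal{T}_D^{H,m,q}$ achieves $d_p(\bar f,g)\le\epsilon/3$. Combining the three estimates by the triangle inequality then yields $d_p(f,g)\le\epsilon$.
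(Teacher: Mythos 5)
Your proposal is correct and takes essentially the same route as the paper's proof: the three-step template of Yun et al., with the crucial contextual mapping built by serializing selective-shift operations through the star graph's global token (the paper's construction interleaves a ``low'' shift that boosts column $k$ using the global register with a ``high'' shift that folds $k$'s value into the register, followed by a final broadcast layer making all columns distinct), and a closing hardmax-to-softmax approximation. The injectivity bookkeeping you correctly flag as the main obstacle is exactly what the paper discharges via explicit ordering invariants and the range constants $S_k, T_k$ in its Lemma~\ref{lem:contextual}.
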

To prove the theorem, we will follow the standard proof structure outlined in~\citep{Yun19}.

\textbf{Step 1: Approximate $\mathcal{F}_{CD}$ by piece-wise constant functions.} 
    Since $f$ is a continuous function with bounded domain $[0,1)^{n \times d}$, we will 
    approximate it with a  suitable piece-wise constant function. This is accomplished by a suitable partition of the  region $[0,1)$ into a grid of granularity $\delta$ to get a discrete set $\Gd$. Therefore, we can assume that we are dealing with a function 
    $\bar{f}: \Gd \to \R^{n \times d}$, where $d_p(f,\bar{f}) \leq \frac{\epsilon}{3}$. 
    
\textbf{Step 2:  Approximate piece-wise constant functions by modified transformers.} 
    This is the key step  of the proof where  the self-attention mechanism is used to generate a \emph{contextual-mapping} of the input. Informally, a contextual mapping is a unique code
    for the pair consisting of a matrix $(\mX,\vx_{i})$ and a column. Its uniqueness allows
    the Feed forward layers to use each code to map it to a unique output column. 
    
    The main technical challenge is computing the contextual mapping using only sparse attention mechanism. 
    This was done in \cite{Yun19} using a ``selective'' shift operator which shift up entries that are in a 
    specific interval. Key to their proof was the fact that the shift, was exactly the range of the largest 
    entry to the smallest entry.  
    
    Creating a contextual mapping with a sparse attention mechanism is quite a challenge.
    In particular, because each query only attends to a few keys, it is not at all clear
    that sufficient information can be corralled to make a contextual embedding of the 
    entire matrix. To get around this, we develop a sparse shift operator which shifts the entries of the matrices if they lie in a certain range. The exact amount of the shift is controlled by the directed sparse attention graphg $D$. The second key ingredient is the use of additional global token.  By carefully applying the operator
    to a set of chosen ranges, we will show that each column will contain a unique mapping of the
    full mapping.   Therefore, we can   
    augment the loss of inner-products in the self attention mechanism by using multiple 
    layers and an auxiliary global token. 
    
\textbf{Step 3: Approximate modified transformers by original Transformers}: The final 
step is to approximate the modified transformers by the original transformer which uses ReLU and 
softmax. 

We provide the full details in~\Cref{sec:apndx-universal}.

\subsection{Turing Completeness}
Transformers are a very general class. In the original paper of \citet{vaswani2017attention}, they were used 
in both an encoder and a decoder. While the previous section outlined how powerful just the encoders were,  another natural question is to ask what the
additional power of both a decoder along with an encoder is? \citet{Perez19} showed that the 
full  transformer based on a quadratic attention mechanism is Turing Complete. This result 
makes one  unrealistic assumption, which is that the model works on arbitrary precision 
model. Of course,  this is necessary as otherwise, Transformers are bounded finite 
state machines and cannot be Turing Complete. 

It is natural to ask if the full attention mechanism is necessary. Or can a 
sparse attention mechanism also be used to simulate any Turing Machine? 
We show that this is indeed the case: we can use a sparse encoder and sparse decoder
to simulate any Turing Machine. 

To use the sparse attention mechanism in the transformer architecture, we need to 
define a suitable modification where each token only reacts to previous tokens. 
Unlike the case for BERT, where the entire attention mechanism is applied once, in  full 
transformers, the sparse attention mechanism at decoder side is used token by token. 
Secondly the work of \citet{Perez19}, uses each token as a representation of the tape 
history and  uses the full attention to move and retrieve the correct tape symbol.
Most of the construction of \citet{Perez19} goes through for sparse attentions, except for their 
addressing scheme to point back in history (Lemma B.4 in \citep{Perez19}).
We show how to simulate this using a sparse attention mechanism and defer the 
details to~\Cref{sec:apndx-turing}.

\subsection{Limitations}
\label{sec:limit}
We demonstrate a natural task which can be solved by the full attention mechanism in $O(1)$-layers.
However, under standard complexity theoretic assumptions, this problem requires 
$\tilde{\Omega}(n)$-layers for any sparse attention layers with $\tilde{O}(n)$ edges (not just \bigb). (Here $\tilde{O}$ hides poly-logarthmic  factors). 
Consider the simple problem of finding the corresponding furthest vector 
for each vector in the given sequence of length $n$. Formally,

\textbf{Task 1.} \ Given $n$ unit vectors $\{u_1,\dots,u_n\}$, find $f(u_1,\dots,u_n) \to (u_{1^*}, \dots, u_{n^*})$  where for a fixed $j \in [n]$, we define $ j^* = \argmax_{k} \|u_k - u_j\|_2^2$.

Finding vectors that are furthest apart boils down to minimize 
inner product search in case of unit vectors. For a full-attention mechanism 
with appropriate query 
and keys, this task is very easy as we can evaluate all pair-wise inner products. 

The impossibility for sparse-attention follows from hardness results 
stemming from Orthogonal Vector Conjecture(OVC) \citep{abboud2014consequences,abboud2015tight,backurs2015edit,williams2005new}. The OVC  is a widely used assumption
in fine-grained complexity. Informally, it states that one cannot determine if the minimum inner product among 
$n$ boolean vectors is $0$ in subquadratic time. In~\Cref{sec:apndx-limit}, we show a 
reduction using  OVC to show that if a transformer $g \in \mathcal{T}_D^{H=1,m=2d,q=0}$ for 
any sparse directed graph $D$ can evaluate the Task $1$, it can solve the orthogonal vector problem.
\begin{proposition}
    There exists a single layer full self-attention $g\in\mathcal{T}^{H=1,m=2d,q=0}$ that can 
    evaluate Task 1,  i.e. $g(u_1,...,u_n) = [u_{1^*},\dots, u_{n^*}]$, but for any sparse-attention 
    graph $D$ with $\tilde{O}(n)$ edges (i.e.~inner product evaluations), would require $\tilde{\Omega}(n^{1-o(1)})$ layers.
\end{proposition}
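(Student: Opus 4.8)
The statement has two independent halves, and I would treat them separately. For the \textbf{easy direction} (full attention suffices), I would exploit that for unit vectors $\norm{u_k - u_j}_2^2 = 2 - 2\bkt{u_j}{u_k}$, so $j^* = \argmax_k \norm{u_k-u_j}_2^2 = \argmin_k \bkt{u_j}{u_k}$. The plan is to build a single full-attention head whose score between query $i$ and key $k$ equals $-\bkt{u_i}{u_k}$ — e.g. take the query map $u_i \mapsto u_i$ and the key map $u_k \mapsto -u_k$ — and use the hardmax scoring function, which then selects exactly the index $i^*$ minimizing $\bkt{u_i}{u_k}$. Letting the value map return $u_k$ transports $u_{i^*}$ to position $i$; the head size $2d$ and the trivial ($q=0$) output layer give enough slack to cancel the residual term $u_i$ and read out exactly $u_{i^*}$, so that $g(u_1,\dots,u_n) = [u_{1^*},\dots,u_{n^*}]$.

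For the \textbf{hard direction} (sparse attention needs depth) I would argue by contradiction through a fine-grained reduction from Orthogonal Vectors. Suppose some $g \in \mathcal{T}_D^{H=1,m=2d,q=0}$ with a sparse graph $D$ on $\tilde{O}(n)$ arcs and only $L$ layers computes Task 1. Given an OV instance $a_1,\dots,a_n,\, b_1,\dots,b_n \in \{0,1\}^{D_0}$ with $D_0 = \mathrm{polylog}(n)$, I would build $2n$ unit vectors, run $g$ once on them, and read off the answer. The embedding must do two things: (i) make the inner product between an $a$-vector and a $b$-vector minimized precisely when the pair is orthogonal, and (ii) force every $a$-vector's furthest partner to lie in the $b$-group and vice versa, so that Task 1's output is informative about cross pairs.

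For (i)--(ii) I would normalise $\tilde a_i = a_i/\norm{a_i}$, $\tilde b_j = b_j/\norm{b_j}$ (so $\bkt{\tilde a_i}{\tilde b_j}\ge 0$, with equality iff $a_i \perp b_j$) and append one coordinate of opposite sign to the two groups:
\[
\hat a_i = \tfrac{1}{\sqrt2}(\tilde a_i,\, 1), \qquad \hat b_j = \tfrac{1}{\sqrt2}(\tilde b_j,\, -1).
\]
These are unit vectors; same-group products satisfy $\bkt{\hat a_i}{\hat a_{i'}} \ge \tfrac12 > 0$, whereas cross products $\bkt{\hat a_i}{\hat b_j} = \tfrac12\bkt{\tilde a_i}{\tilde b_j} - \tfrac12 \le 0$ are always smaller, so the furthest vector from each $\hat a_i$ is some $\hat b_{i^*}$. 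Moreover $\bkt{\hat a_i}{\hat b_{i^*}} = -\tfrac12$ exactly when $a_i$ has an orthogonal partner, and is at least $-\tfrac12 + \tfrac{1}{2D_0}$ otherwise — a $\mathrm{polylog}$ margin that makes the test robust. Inspecting the $n$ outputs of $g$ at the $a$-positions and checking whether any achieves inner product $-\tfrac12$ therefore decides OV, in extra time $\tilde O(n)$.

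The final step is the \textbf{cost count}. Evaluating one sparse layer touches each arc of $D$ once, i.e. $\tilde O(n)$ inner products, each in dimension $D_0+1 = \mathrm{polylog}(n)$, plus the token-wise query/key/value and feed-forward work; so a layer is simulable in $\tilde O(n)$ time and all $L$ layers in $\tilde O(L\,n)$. If $L = \tilde O(n^{1-\epsilon})$ for a fixed $\epsilon>0$, the whole pipeline would decide OV on $2n$ vectors in $\tilde O(n^{2-\epsilon})$ time, contradicting OVC; hence $L = \tilde\Omega(n^{1-o(1)})$. I expect the main obstacle to be not the reduction arithmetic but pinning down honestly what an ``inner-product evaluation'' costs, so that the per-layer $\tilde O(n)$ bound holds: one must argue that the query/key/value maps and the softmax/hardmax are computable (and the whole network simulable) in near-linear time at $\mathrm{polylog}$ precision, and that the robustness margin above survives this precision. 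Getting these simulation and precision details clean is where the care lies; the OVC contradiction itself is then immediate.
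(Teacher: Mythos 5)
Your proposal is correct, and it follows the same two-part skeleton as the paper's proof: an explicit one-layer full-attention construction with attention score $-\bkt{u_i}{u_k}$ under hardmax, then a fine-grained reduction from Orthogonal Vectors combined with a per-layer cost count ($\tilde{O}(n)$ work per layer, $\tilde{O}(Ln)$ total) to contradict OVC. The differences are in the two reduction details. For the easy direction, the paper does not ``cancel'' the residual --- indeed it cannot, since the value function acts on the attended token $x_{i^*}$, not on $x_i$, so no choice of $V$ removes the additive $u_i$; instead the paper embeds $x_i = [u_i; 0] \in \R^{2d}$ and takes $V([a;b]) = [0;a]$, so the output $z_i = [u_i; u_{i^*}]$ carries the answer in the second coordinate block. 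You would need this block-separation trick (or an equivalent) to make your read-out step rigorous. For the hard direction, the paper's Conjecture is stated monochromatically (a single set of $n$ Boolean vectors), so its reduction is simpler than yours: normalize the nonzero Boolean vectors; all pairwise inner products are then nonnegative, so an orthogonal pair exists iff $\bkt{u_i}{u_{i^*}} = 0$ for some $i$, checkable with $O(n)$ extra inner products after solving Task 1. Your bichromatic gadget --- appending the $\pm\nicefrac{1}{\sqrt{2}}$ coordinate to force every furthest neighbor across the two groups --- is a correct alternative (the two OV variants are equivalent for these purposes), and it buys something the paper's argument glosses over: an explicit $\nicefrac{1}{2D_0}$ separation margin, which is exactly what is needed to address the finite-precision concern you raise at the end. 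One small gap shared by your reduction and the paper's: Boolean zero vectors cannot be normalized, but they are orthogonal to everything and can be detected and dispatched in linear time beforehand.
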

\vspace{-3mm}
We give a formal proof of this fact in~\Cref{sec:apndx-limit}.

\section{Experiments: Natural Language Processing} \label{sec:expt-nlp}

In this section our goal is to showcase benefits of modeling longer input sequence for NLP tasks, for which we select three representative tasks.
We begin with basic masked language modeling (MLM; \citealt{devlin2018bert}) to check if better contextual representations can be learnt by utilizing longer contiguous sequences.
Next, we consider QA with supporting evidence, for which capability to handle longer sequence would allow us to retrieve more evidence using crude systems like TF-IDF/BM25.
Finally, we tackle long document classification where discriminating information may not be located in first 512 tokens.
Below we summarize the results for \bigb using sequence length 4096\footnote{code available at \url{http://goo.gle/bigbird-transformer}}, while we defer all other setup details including computational resources, batch size, step size, to~\Cref{sec:apndx-expt-nlp}.

\paragraph{Pretraining and MLM}
We follow \citep{devlin2018bert, liu2019roberta} to create base and large versions of \bigb and pretrain it using MLM objective. 
This task involves predicting a random subset of tokens which have been masked out. 
We use four standard data-sets for pretraining (listed in \Cref{sec:app-expt-nlp:mlm}, \Cref{tab:mlm_data}), warm-starting from the public RoBERTa checkpoint\footnote{\url{https://github.com/pytorch/fairseq/tree/master/examples/roberta}}. 
We compare performance in predicting the masked out tokens in terms of bits per character, following~\citep{beltagy2020longformer}. 
As seen in~\Cref{sec:app-expt-nlp:mlm}, ~\Cref{tab:mlm_bpc},  both \bigb and Longformer perform better than limited length RoBERTa, with \bigb-\textsc{etc} performing the best.
We note that we trained our models on a 
reasonable $16GB$ memory/chip with batch size of 32-64.
Our memory efficiency is due to efficient blocking and sparsity structure of 
the sparse attention mechanism described in~\Cref{sec:arch}. 
\begin{table}
    \centering
    \small
    \begin{tabular}{@{}l c c c ccc c cc c c@{}}
    \toprule
    \multirow{2}{*}{Model} & & 
    \multicolumn{3}{c}{HotpotQA} & &
    \multicolumn{2}{c}{NaturalQ} & &
    TriviaQA & & 
    WikiHop\\
    \cmidrule{3-5} \cmidrule{7-8} \cmidrule{10-10}  \cmidrule{12-12} 
               && Ans & Sup & Joint  &&  LA & SA && Full && MCQ  \\
    \midrule
    RoBERTa     && 73.5 & 83.4 & 63.5 &&  -   &  -  && 74.3 && 72.4  \\
    Longformer  && 74.3 & 84.4 & 64.4 &&  -   &  -  && 75.2 && 75.0  \\
    \bigb-\textsc{itc}  && \textbf{75.7} & 86.8 & 67.7 && 70.8 & 53.3 && \textbf{79.5} && \textbf{75.9} \\
    \bigb-\textsc{etc}  && 75.5 & \textbf{87.1} & \textbf{67.8} && \textbf{73.9} & \textbf{54.9} && 78.7 && \textbf{75.9} \\
    \bottomrule
    \end{tabular}
    \vspace{2mm}
    \caption{QA Dev results using Base size models. We report accuracy for WikiHop and F1 for HotpotQA, Natural Questions, and TriviaQA.}
    \label{tab:QADev}
\end{table}

\begin{table}
    \centering
    \small
    \begin{tabular}{@{}l c c ccc c cc c cc@{}}
    \toprule
    \multirow{2}{*}{Model} & 
    \multicolumn{3}{c}{HotpotQA} & &
    \multicolumn{2}{c}{NaturalQ} & &
    \multicolumn{2}{c}{TriviaQA} & &
    WikiHop\\
    \cmidrule{2-4} \cmidrule{6-7} \cmidrule{9-10} \cmidrule{12-12} 
    & Ans & Sup & Joint  &&  LA & SA && Full & Verified && MCQ  \\
    \midrule
    HGN \citep{fang2019hierarchical}         & \textbf{82.2} & 88.5 & \textbf{74.2} &&  -   &  -   &&  -   &  -   &&  -   \\
    GSAN                                     & 81.6 & 88.7 & 73.9 &&  -   &  -   &&  -   &  -   &&  -   \\
    ReflectionNet \citep{gong2020reflection} &  -   &  -   &  -   && 77.1 & \textbf{64.1} &&  -   &  -   &&  -   \\
    RikiNet-v2 \citep{liu2020rikinet}           &  -   &  -   &  -   && 76.1 & 61.3 &&  -   &  -   &&  -   \\
    Fusion-in-Decoder \citep{izacard2020fid} &  -   &  -   &  -   &&  -   &  -   && 84.4 & 90.3 &&  -   \\
    SpanBERT \citep{joshi2020spanbert}       &  -   &  -   &  -   &&  -   &  -   && 79.1 & 86.6 &&  -   \\ 
    MRC-GCN \citep{tang2020multi}            &  -   &  -   &  -   &&  -   &  -   &&  -   &  -   && 78.3 \\
    MultiHop \citep{chen2019multi}           &  -   &  -   &  -   &&  -   &  -   &&  -   &  -   && 76.5 \\
    Longformer \citep{beltagy2020longformer} & 81.2 & 88.3 & 73.2 &&  -   &  -   && 77.3 & 85.3 && 81.9 \\
    \midrule
    \bigb-\textsc{etc} & 81.2 & \textbf{89.1} & 73.6 && \textbf{77.8} & 57.9 && \textbf{84.5} & \textbf{92.4} && \textbf{82.3} \\
    \bottomrule
    \end{tabular}
    \vspace{2mm}
    \caption{Fine-tuning results on \textbf{Test} set for QA tasks. The Test results (F1 for HotpotQA, Natural Questions, TriviaQA, and Accuracy for WikiHop) have been picked from their respective leaderboard. For each task the top-3 leaders were picked not including \bigb-etc. \textbf{For Natural Questions Long Answer (LA), TriviaQA, and WikiHop, \bigb-ETC is the new state-of-the-art}. On HotpotQA we are third in the leaderboard by F1 and second by Exact Match (EM).}
    \label{tab:QATest}
\end{table}

\paragraph{Question Answering (QA)} 
We considered following four challenging datasets:
\vspace{-1mm}
\begin{enumerate}[leftmargin=6mm, itemsep=2mm, partopsep=0pt,parsep=0pt]
\item  Natural Questions~\citep{kwiatkowski2019natural}: 
For the given question, find a short span of answer (SA) from the given evidences as well highlight the paragraph from the given evidences 
containing information about the correct answer (LA).
\item HotpotQA-distractor~\citep{yang2018hotpotqa}: Similar to natural questions, it requires finding the answer (Ans) as well as the supporting facts (Sup) over different documents needed for multi-hop reasoning from the given evidences. 
\item TriviaQA-wiki~\citep{JoshiTriviaQA2017}: We need to provide an answer for the given question using provided Wikipedia evidence, however, the answer might not be present in the given evidence. On a smaller \emph{verified} subset of question, the given evidence is guaranteed to contain the answer. Nevertheless, we model the answer as span selection problem in this case as well.
\item  WikiHop~\citep{welbl2018constructing}: Chose correct option from multiple-choice questions (MCQ), by aggregating information spread across multiple documents given in the evidences.
\end{enumerate}
As these tasks are very competitive, multiple highly engineered systems have been designed specific each dataset confirming to respective output formats.
For a fair comparison, we had to use some additional regularization for training \bigb, details of which are provided in~\Cref{sec:app-expt-nlp:qa} along with exact architecture description.
We experiment using the base sized model and select the best configuration on the development set for each dataset (as reported in~\Cref{tab:QADev}).
We can see that \bigb-\textsc{etc}, with expanded global tokens consistently outperforms all other models.
Thus, we chose this configuration to train a large sized model to be used for evaluation on the hidden test set.

In~\Cref{tab:QATest}, we compare \bigb-\textsc{etc} model to top-3 entries from the leaderboard excluding \bigb. 
One can clearly see the importance of using longer context as both Longformer and \bigb outperform models with smaller contexts.
Also, it is worth noting that \bigb submission is a single model, whereas the other top-3 entries for Natural Questions are  ensembles, which might explain the slightly lower accuracy in exact answer phrase selection.

\paragraph{Classification} 
We experiment on datasets of different lengths and contents, specifically various document classification and GLUE tasks.
Following BERT, we used one layer with cross entropy loss on top of the first [CLS] token.
We see that gains of using \bigb are more significant when we have longer documents and fewer training examples.
For instance, using base sized model,
\bigb improves state-of-the-art for Arxiv dataset by about $\bm{5\%}$ \textbf{points}.
On Patents dataset, there is improvement over using simple BERT/RoBERTa, but given the large size of training data the improvement over SoTA (which is not BERT based) is not significant.
Note that this performance gain is not seen for much
smaller IMDb dataset.
Along with experimental setup detail, we present detailed results in~\Cref{sec:app-expt-nlp:cls} which show competitive performance.

\begin{table}[b]
    \centering
    \small
    \begin{tabular}{@{}p{1mm}l @{}p{3mm}@{} rrr @{}p{5mm}@{} rrr @{}p{5mm}@{} rrr@{}}
    \toprule
     \multicolumn{2}{l}{\multirow[b]{2}{*}{\hspace{-2mm}\normalsize{Model}}} & &
     \multicolumn{3}{c}{Arxiv} & & \multicolumn{3}{c}{PubMed} & & \multicolumn{3}{c}{BigPatent}\\
    \cmidrule{4-6} \cmidrule{8-10} \cmidrule{12-14}
    & & & R-1 & R-2 & R-L & & R-1 & R-2 & R-L & & R-1 & R-2 & R-L \\
    \midrule
    \multirow{10}{*}{\rotatebox[origin=c]{90}{Prior Art}}
    & SumBasic~\citep{nenkova2005impact}          & & 29.47 &  6.95 & 26.30 & & 37.15 & 11.36 & 33.43 & & 27.44 & 7.08 & 23.66\\
    & LexRank~\citep{erkan2004lexrank}          & & 33.85 & 10.73 & 28.99 & & 39.19 & 13.89 & 34.59 & & 35.57 & 10.47 & 29.03 \\
    & LSA~\citep{wiseman2017challenges}               & & 29.91 &  7.42 & 25.67 & & 33.89 &  9.93 & 29.70 & & - & - & - \\
    & Attn-Seq2Seq~\citep{sutskever2014sequence}    & & 29.30 &  6.00 & 25.56 & & 31.55 &  8.52 & 27.38 & & 28.74 & 7.87 & 24.66 \\
    & Pntr-Gen-Seq2Seq~\citep{see2017get} & & 32.06 &  9.04 & 25.16 & & 35.86 & 10.22 & 29.69 & &  33.14 & 11.63 & 28.55 \\
    & Long-Doc-Seq2Seq~\citep{cohan2018discourse} & & 35.80 & 11.05 & 31.80 & & 38.93 & 15.37 & 35.21 & & - & - & - \\
    & Sent-CLF~\citep{subramanian2019extractive}  & & 34.01 &  8.71 & 30.41 & & 45.01 & 19.91 & 41.16 & & 36.20 & 10.99 & 31.83 \\
    & Sent-PTR~\citep{subramanian2019extractive}  & & 42.32 & 15.63 & 38.06 & & 43.30 & 17.92 & 39.47 & & 34.21 & 10.78 & 30.07 \\
    & Extr-Abst-TLM~\citep{subramanian2019extractive} & & 41.62 & 14.69 & 38.03 & & 42.13 & 16.27 & 39.21 & & 38.65 & 12.31 & 34.09 \\
    & Dancer~\citep{gidiotis2020divide}  & & 42.70 & 16.54 & 38.44 & & 44.09 & 17.69 & 40.27 & & - & - & - \\
    \midrule
    \multirow{4}{*}{\rotatebox[origin=c]{90}{Base}}
    & Transformer & & 28.52 &  6.70 & 25.58 & & 31.71 &  8.32 & 29.42 & & 39.66 & 20.94 & 31.20 \\
    & \; + RoBERTa~\citep{rothe2019leveraging} & & 31.98 &  8.13 & 29.53  & & 35.77 & 13.85 & 33.32 & & 41.11 & 22.10 & 32.58 \\
    & \; + Pegasus~\citep{zhang2019pegasus}  & & 34.81 & 10.16 & 30.14 & & 39.98 & 15.15 & 35.89 & & 43.55 & 20.43 & 31.80 \\
    & \bigb-RoBERTa & & \underline{41.22} & \underline{16.43} & \underline{36.96} & & \underline{43.70} & \underline{19.32} & \underline{39.99} & & \underline{55.69} & \underline{37.27} & \underline{45.56} \\
    \midrule
    \multirow{3}{*}{\rotatebox[origin=c]{90}{Large}}
    & Pegasus (Reported)~\citep{zhang2019pegasus} & & 44.21 & 16.95 & 38.83 & & 45.97 & 20.15 & 41.34 & & 52.29 & 33.08 & 41.75 \\
    & Pegasus (Re-eval)                          & & 43.85 & 16.83 & 39.17 & & 44.53 & 19.30 & 40.70 & & 52.25 & 33.04 & 41.80 \\
    & \bigb-Pegasus & & \textbf{46.63} & \textbf{19.02} & \textbf{41.77}   & & \textbf{46.32} & \textbf{20.65} & \textbf{42.33}   & & \textbf{60.64} & \textbf{42.46} & \textbf{50.01}  \\
    \bottomrule
    \end{tabular}
    \vspace{2mm}
    \caption{Summarization ROUGE score for long documents.}
    \label{tab:long_sum_res}
\end{table}

\subsection{Encoder-Decoder Tasks}
\label{sec:seq2seq}
For an encoder-decoder setup, one can easily see that both suffer from quadratic complexity due to the full self attention.
We focus on introducing the sparse attention mechanism of \bigb only at the encoder side.
This is because, in practical generative applications, the length of output sequence is typically small as compared to the input. 
For example for text summarization, we see in realistic scenarios (c.f.~\Cref{sec:appn_summarization}~\Cref{tab:long_sum_data}) that the median output sequence length is $\sim 200$ where as the input sequence's median length is $>3000$.
For such applications, it is more efficient to use sparse attention mechanism for the encoder and full self-attention for the decoder.

\paragraph{Summarization}
Document summarization is a task of creating a short and accurate summary of a text document. 
We used three long document datasets for testing our model details of which are mention in~\Cref{tab:long_sum_data}.
In this paper we focus on abstractive summarization of long documents where using a longer contextual encoder should improve performance.
The reasons are two fold:
First, the salient content can be evenly distributed in the long document, not just in first 512 tokens, and this is by design in the BigPatents dataset~\citep{sharma2019bigpatent}.
Second, longer documents exhibit a richer discourse structure and summaries are considerably more abstractive, thereby observing more context helps.
As has been pointed out recently ~\citep{rothe2019leveraging,zhang2019pegasus}, pretraining helps in generative tasks, we warm start from our general purpose MLM pretraining on base-sized models as well as utilizing state-of-the-art summarization specific pretraining from Pegasus~\citep{zhang2019pegasus} on large-sized models.
The results of training \bigb sparse encoder along with full decoder on these long document datasets are presented in~\Cref{tab:long_sum_res}.
We can clearly see modeling longer context brings significant improvement. 
Along with hyperparameters, we also present results on shorter but more widespread datasets in~\Cref{sec:appn_summarization}, which show that using sparse attention does not hamper performance either.

\section{Experiments: Genomics} 
\label{sec:expt-bio}

There has been a recent upsurge in using deep learning for genomics data \citep{tampuu2019viraminer, zhang2019ncnet, busia2019deep}, which has resulted in improved performance on several biologically-significant tasks such as
promoter site prediction \citep{oubounyt2019deepromoter}, methylation analysis \citep{levy2020methylnet}, 
predicting functional effects of non-coding variant \citep{zhou2015predicting}, etc.
These approaches consume DNA sequence fragments as inputs, and therefore we believe longer input sequence handling capability of \bigb would be beneficial as many functional effects in DNA are highly non-local \citep{buldyrev1995long}.
Furthermore, taking inspiration from NLP, we learn powerful contextual representations for DNA fragments utilizing abundant unlabeled data (e.g. human reference genome, Saccharomyces Genome Database) via MLM pretraining.
Next, we showcase that our long input \bigb along with the proposed pretraining significantly improves performances in two downstream tasks.
Detailed experimental setup for the two tasks are provided in~\Cref{sec:apndx-expt-bio}.

\begin{wraptable}{r}{39mm}
    \vspace{-4mm}
    \centering
    \small
    \begin{tabular}{@{}lr@{}}
    \toprule
    Model &  BPC \\
    \midrule
    SRILM \cite{liang2012segmenting}  & 1.57  \\
    BERT (sqln. 512)  & 1.23 \\
    \midrule
    \bigb (sqln. 4096)   & \textbf{1.12} \\
     \bottomrule
    \end{tabular}
    \caption{MLM BPC}
    \label{tab:gml}
    \vspace{-3mm}
\end{wraptable}
\paragraph{Pre-training and MLM}
As explored in \citet{liang2012segmenting}, instead of operating on base pairs, we propose to first segment DNA into tokens so as to further increase the context length (\Cref{sec:apndx-expt-bio}, \Cref{fig:apndx_mlm_data}).
In particular, we build a byte-pair encoding~\citep{kudo2018sentencepiece} table for the DNA sequence of size 32K, with each token representing 8.78 base pairs on average.
We learn contextual representation of these token on the human reference genome (GRCh37)\footnote{\url{https://www.ncbi.nlm.nih.gov/assembly/GCF_000001405.13/}} using MLM objective.
We then report the bits per character (BPC) 
on a held-out set in \Cref{tab:gml}. 
We find that attention based contextual representation of DNA does improve BPC, which is further improved by using longer context.

 \begin{wraptable}{r}{35mm}
    \vspace{-4mm}
    \centering
    \small
    \begin{tabular}{@{}lr@{}}
    \toprule
    Model &  F1 \\
    \midrule
    CNNProm~\citep{umarov2017recognition}  & 69.7  \\
    DeePromoter~\citep{oubounyt2019deepromoter}  & 95.6 \\
    \midrule
    \bigb   & \textbf{99.9} \\
     \bottomrule
    \end{tabular}
    \caption{Comparison.}
    \label{tab:gpp}
    \vspace{-3mm}
\end{wraptable}
\paragraph{Promoter Region Prediction}
Promoter is a DNA region typically located upstream of the gene, which is the site of transcription initiation.
Multiple methods have been proposed to identify the promoter regions in a given DNA sequence~\citep{yang2017exploiting, lin2017identifying, bharanikumar2018promoterpredict, xiao2019ipsw, oubounyt2019deepromoter}, as it is an important first step in understanding gene regulation.
The corresponding machine learning task is to classify a given DNA fragment as promoter or non-promoter sequence. We use the dataset compiled by \citet{oubounyt2019deepromoter} which was built from Eukaryotic Promoter Database (EPDnew) \citep{dreos2013epd} 
\footnote{ \url{https://epd.epfl.ch/human/human_database.php?db=human}}. 
We finetuned the pretrained \bigb model from above, using the training data and report F1 on test dataset. 
We compare our results to the previously reported best method in \Cref{tab:gpp}.
We see that  
\bigb achieve nearly perfect accuracy with a $5\%$ jump from the previous best reported accuracy.

\begin{wraptable}{r}{57mm}
    \centering
    \small
    \begin{tabular}{@{}lrrr@{}}
    \toprule
    Model &  TF & HM & DHS \\
    \midrule
    gkm-SVM~\citep{ghandi2014enhanced}  & 89.6 & - & -  \\
    DeepSea~\citep{zhou2015predicting}  & 95.8 & 85.6 & \textbf{92.3} \\
    \midrule
    \bigb   & \textbf{96.1} & \textbf{88.7} & 92.1 \\
     \bottomrule
    \end{tabular}
    \caption{Chromatin-Profile Prediction}
    \label{tab:gnve} 
    \vspace{-3mm}
\end{wraptable}
\paragraph{Chromatin-Profile Prediction} Non-coding regions of DNA do not code for proteins.  
Majority of diseases and other trait associated single-nucleotide polymorphism are correlated to non-coding genomic variations \citep{zhou2015predicting, khurana2016role}.
Thus, understanding the functional effects of non-coding regions of DNA is a very important task. An important step in this process, as defined by \citet{zhou2015predicting}, is to predict large-scale chromatin-profiling from non-coding genomic sequence. 
To this effect, DeepSea \citep{zhou2015predicting}, compiled 919 chromatin-profile of 2.4M non-coding variants from Encyclopedia of DNA Elements (ENCODE)\footnote{\url{https://www.encodeproject.org/}} and Roadmap Epigenomics projects\footnote{\url{http://www.roadmapepigenomics.org/}}. 
The corresponding ML task is to predict, for a given non-coding region of DNA, these 919 chromatin-profile including $690$ transcription factors (TF) binding profiles for $160$ different TFs, $125$ DNase I sensitivity (DHS) profiles and $104$ histone-mark (HM) profiles. 
We jointly learn 919 binary classifiers to predict these functional effects from sequence of DNA fragments.
On held-out chromosomes, we compare AUC with the baselines in~\Cref{tab:gnve} and see that we significantly improve on  performance on the harder task HM, which is known to have longer-range correlations~\citep{gates2017histone} than others.

\section{Conclusion}
We propose \bigb: a sparse attention mechanism that is linear in the 
number of tokens. \bigb satisfies a number of theoretical results: 
 it is a universal approximator of sequence to sequence functions and is also
Turing complete.  Theoretically,  we use the power of extra global tokens
preserve the expressive powers of the model. 
We complement these results by showing that moving to sparse attention mechanism
do incur a cost. 
Empirically, \bigb  gives  \emph{state-of-the-art} performance on a number of NLP tasks such as 
question answering and long document classification. We  further introduce attention based 
contextual language model for DNA and fine-tune it for down 
stream tasks such as  promoter region prediction and predicting effects of non-coding variants.

\bibliographystyle{abbrvnat}
\bibliography{ref}

\newpage
\appendix
\begin{center}{
\Large
\textbf{Big Bird: Transformers for Longer Sequences -- Appendix}
}
\end{center}

\newcommand{\Gdpos}{\ensuremath{\mathbf{G}^{E}_{\delta}}}
\newcommand{\FCDpos}{\ensuremath{\mathcal{F}^{E}_{CD}}}
\newcommand{\tx}{\ensuremath{\tilde{x}}}
\newcommand{\tl}{\ensuremath{\tilde{f}}}

\section{Universal Approximators} \label{sec:apndx-universal}

\subsection{Notation}
\label{sec:apndx-enc-notation}
We begin by setting up some notations following \citet{Perez19} to formally describe the complete architecture of Transformers. 
A single layer of Transformer encoder is a parametric function $\Enc$ receiving a sequence $\mX = (\vx_1, ..., \vx_n)$ of vectors in $\R^d$ and returning a sequence $\mZ = (\vz_1, ..., \vz_n)$ of the same length.
Each $\vz_i$ is a $d$ dimensional vector as well.
We interchangeably treat the sequence $\mX$ as a matrix in $\R^{n\times d}$.
$\Enc$ has two components: 
\begin{enumerate}[leftmargin=6mm, itemsep=2mm, partopsep=0pt,parsep=0pt]
    \item An attention mechanism $\Attn$ that takes in the sequence $\mX$ and returns sequence $(\va_1, ..., \va_n)$ of the same length and dimensionality; and 
    \item A two layer fully connected network $O$ that takes in a vector in $\R^d$ and returns a vector in $\R^d$. 
\end{enumerate}
Then $i$-th output vector of $\Enc(\mX)$ is computed as follows:
\begin{align}
    \vz_i = O(\va_i) + \va_i \qquad\text{where}\qquad \va_i = \Attn(\mX)_i + \vx_i
\end{align}
Now it remains to define $\Attn$ and $O$ which we do next.

As described in~\Cref{sec:arch}, an attention mechanism is parameterized by three functions: $Q,K,V: \R^{d} \to \R^{m}$. 
In this paper, we assume that they are simply matrix products: $Q(\vx) = \vx W_Q $, 
$K(\vx) = \vx W_K $, and $V(\vx) = \vx W_V $, where $W_Q, W_K, W_V \in \R^{d \times m}$ and 
$W_V\in \R^{d \times d}$.
In reality a multi-headed attention is used, i.e. we have not only one,
but $H$-sets of Query/Key/Value weight matrices, $W_Q^h, W_V^h, W_K^h \text{ for }h=1,...,H$.
Thus, for a directed graph $D$ over $[n]$, the $i^\text{th}$ output vector of the generalized attention mechanism would be
\begin{align}
\Attn_D(\mX)_i &=  \sum_{h=1}^H \sigma \left((\vx_i W_Q^h) (\mX_{N(i)} W_K^h)^T  \right) \cdot (\mX_{N(i)} W_V^h ) \label{AT_app} \tag{AT}
\end{align}
where $N(i)$ denote the out-neighbors set of node $i$ in $D$.
In other words, the set of arcs (directed edges) in $D$ represents the set of inner products that our attention mechanism will consider.
Also recall that $\sigma$ is a scoring function such as softmax or hardmax.

Lastly, we define the output fully connected network as follows:
\begin{align*}
    O(\va_i) &= \relu \left(\va_i W_1 + b_1 \right)  W_2\cdot 
                    + b_2 \label{FF} \tag{FF}
\end{align*}
Here $W_1 \in \R^{ d\times q }$, $W_2 \in \R^{q \times d}$, $b_1\in\R^p$, and $b_2\in\R^d$ are parameters of 
output network $O$. 

\textbf{Additional Notation} We introduce a few pieces of additional notation that will be useful. 
Let $[a,b)_{\delta} = \{ a, a+\delta, \dots, a + \lfloor \frac{b-a}{\delta} \rfloor \cdot \delta \} $. Therefore,
$[0,1)_{\delta} = \{ 0, \delta, 2\delta, \dots, (1-\delta)\}$. 
We use $\mathbf{1} [ \mathcal{E}]$ to denote the indicator variable; it is $1$ if the event $\mathcal{E}$ occurs and $0$ otherwise. 

\subsection{Proof}
In this section, we will present the full proof of~\cref{thm:universal}. 
The proof will contain three parts. 
The first and the third part will largely follow standard techniques. The main innovation lies is in the second part.

\subsubsection{Approximate \texorpdfstring{$\mathcal{F}_{CD}$}{Fcd} by piece-wise constant functions}
    First, we consider a suitable partition of the region $(0,1)$ into a 
    grid of granularity $\delta$, which we denote by $G_\delta$.  We do this using Lemma~8 from \citet{Yun19}, which we restate for completeness:
    \begin{lemma}[Lemma 8~\citep{Yun19}] \label{lem:piecewise}
       For any given $f \in \mathcal{F}_{CD}$ and $1\leq p \leq \infty$, there exists a $\delta >0$ such that
        there exists a piece-wise constant function $\bar{f}$ with $d_{p}(f,\bar{f}) \leq \frac{\epsilon}{3}$. 
         Concretely, $\bar{f}$ is defined as 
    \[ \bar{f}(X) = \sum_{P \in \Gd} f(P) \cdot \mathbf{1} \left[ \norm{\relu(X-P)}_{\infty} \leq  \delta  \right]  \] 
    \end{lemma}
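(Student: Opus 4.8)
The plan is to reduce the lemma to the elementary fact that a continuous function on a compact set is uniformly continuous. Since $\FCD$ consists of functions continuous on the bounded domain $[0,1)^{n\times d}$ with respect to the $\ell_p$ topology, and all norms on the finite-dimensional space $\R^{n\times d}$ are equivalent, the restriction of $f$ to the closure $[0,1]^{n\times d}$ is uniformly continuous. Hence for any target tolerance $\eta>0$ there exists a $\delta>0$ such that $\norm{X-Y}_{\infty} < \delta$ implies $\norm{f(X)-f(Y)}_p < \eta$ for all $X,Y$ in the domain. I would also shrink $\delta$ if necessary so that $1/\delta$ is an integer, which lets the grid tile the cube exactly.

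First I would set up the discretization. The grid points $\Gd$ are the elements of $\{0,\delta,2\delta,\dots,1-\delta\}^{n\times d}$, and to each $P\in\Gd$ I associate the half-open cell $S_P = \{X : P_{ij} \le X_{ij} < P_{ij}+\delta \text{ for all } i,j\}$. These cells are pairwise disjoint and cover $[0,1)^{n\times d}$, so every input $X$ lies in exactly one $S_P$, and on that cell $P$ is the grid point with $\norm{X-P}_{\infty} < \delta$. The indicator $\mathbf{1}\left[\norm{\relu(X-P)}_{\infty} \le \delta\right]$ appearing in the statement is meant to encode exactly this cell membership, so that $\bar f$ takes the single constant value $f(P)$ on $S_P$; confirming that this $\relu$-based selector yields a genuinely well-defined piece-wise constant function (each $X$ contributing only the one term $f(P)$), together with the harmless observation that the cell boundaries form a set of Lebesgue measure zero, is the one part of the argument that is not purely mechanical.

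Once $\bar f$ is identified as the function equal to $f(P)$ on each $S_P$, the error bound is immediate. For $X\in S_P$ we have $\norm{f(X)-\bar f(X)}_p = \norm{f(X)-f(P)}_p < \eta$ by the choice of $\delta$, and therefore
\[ d_p(f,\bar f)^p = \int_{[0,1)^{n\times d}} \norm{f(X)-\bar f(X)}_p^p \, dX \le \eta^p \int_{[0,1)^{n\times d}} dX = \eta^p, \]
since the domain has unit volume; this gives $d_p(f,\bar f)\le \eta$, and choosing $\eta = \epsilon/3$ yields the claim. The case $p=\infty$ is handled identically after replacing the integral by the essential supremum, which is again bounded by $\eta$ on the same uniform-continuity grounds.

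I expect the main obstacle to be notational rather than conceptual: verifying that the specific construction with $\relu$ and the $\ell_\infty$ ball of radius $\delta$ realizes the cell partition $\{S_P\}$ without double-counting on the overlaps, so that $\bar f$ is a bona fide piece-wise constant function. Since the statement is quoted verbatim as Lemma~8 of \citet{Yun19}, one may alternatively cite it directly; but if reproving it, the grid bookkeeping above is the step to get right, while the uniform-continuity estimate and the $L^p$ integration are routine.
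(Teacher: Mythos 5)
Your overall strategy is the right one, and it is essentially the only proof there is to compare against: the paper itself does not prove this lemma but merely restates it from \citet{Yun19}, and the original proof is exactly your argument --- uniform continuity of $f$ on the compact domain, a $\delta$-grid of half-open cells $S_P$, the pointwise bound $\norm{f(X)-f(P)}_p < \eta$ on the cell of $P$, and integration over the unit-volume domain (with the essential supremum handling $p=\infty$). All of that part of your write-up is correct.

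The genuine problem is the one step you explicitly defer and label ``notational'': verifying that the indicator $\mathbf{1}\left[\norm{\relu(X-P)}_{\infty} \leq \delta\right]$ realizes the cell partition. It does not, and no amount of bookkeeping will make it do so. Since $\relu$ zeroes out negative entries, $\norm{\relu(X-P)}_{\infty} \leq \delta$ is equivalent to the one-sided condition $X_{ij} \leq P_{ij}+\delta$ for all $i,j$; the lower bound $X_{ij} \geq P_{ij}$ is never enforced. Consequently, for a given $X$, \emph{every} grid point $P$ with $P_{ij} \geq X_{ij}-\delta$ entrywise contributes to the sum --- for instance, the grid point with all entries equal to $1-\delta$ satisfies the condition for every $X$ in the domain --- so the displayed formula is a sum of many terms $f(P)$, not a function equal to $f(P)$ on the cell $S_P$, and it does not approximate $f$ at all. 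This is a defect in the paper's restatement rather than in your underlying argument: the source (\citet{Yun19}) uses the two-sided condition $X \in S_P$, i.e.\ $P_{ij} \leq X_{ij} < P_{ij}+\delta$ for all $i,j$. The correct resolution is therefore not to ``confirm'' the $\relu$ selector, as you propose to do, but to replace it with $\bar{f}(X) = \sum_{P \in \Gd} f(P)\cdot \mathbf{1}\left[X \in S_P\right]$; with that definition the cells are disjoint, cover $[0,1)^{n\times d}$, and your uniform-continuity and $L^p$ estimates go through verbatim.
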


    Since transformers can learn a positional embedding $E$, without any loss of generality, 
    we can consider the translated function. In particular, define
    \[ E = \begin{bmatrix}
    0 & 0 & 0 & \dots & 0  \\
    \delta^{-d} & \delta^{-d} & \delta^{-d} & \dots & \delta^{-d}  \\
    \delta^{-2d} & \delta^{-2d} & \delta^{-2d} & \dots & \delta^{-2d}  \\
    \vdots \\
    \delta^{-(n-1)d}  & \delta^{-(n-1)d}  & \delta^{-(n-1)d}  & \dots & \delta^{-(n-1)d}  \\
    \end{bmatrix} \] 
    
    We will try to approximate $g(X) = f(X - E)$ where $g$ is defined on the domain 
    $[0,1]^d \times [\delta^{-d}, \delta^{-d}+1]^d \times \dots \times [\delta^{-(n-1)d}, \delta^{-(n-1)d}+1]^d  $. To do so, we will apply a suitable modification of ~\Cref{lem:piecewise},
    which will consider the discretized grid 
    \[ \Gdpos := [0,1]_{\delta}^d \times [\delta^{-d}, \delta^{-d}+1]_{\delta}^d \times \dots \times [\delta^{-(n-1)d}, \delta^{-(n-1)d}+1]_{\delta}^d. \] 
    
    Therefore, it suffices to approximate a function $\bar{f}: \Gdpos \to \R^{n \times d}$ 
   defined as  
    \[ \bar{f}(X) = \sum_{P \in \Gdpos} f(P-E) \cdot \mathbf{1} \left[ \norm{\relu(X-P)}_{\infty} \leq  \delta  \right].  \] 

\subsubsection{Contextual Mappings and Sparse Attention Mechanisms}

Throughout this section, we will assume that we are given a function that has an extra global token 
at index $0$ and all vectors have an extra dimension appended to them. The latter assumption is  
without loss of generality as we can use the Feed-Forward Network to append sparse dimensions. 
In particular, we will associate $X \in \R^{(n+1) \times (d+1)}$ where we write $X = (x_0,x_1,\dots,x_n)$. 
Although our function is only defined for $\Gdpos \subset \R^{n \times d}$, 
we can amend the function in a natural way by making it ignore the first column. 
To avoid excessive clutter, we will assume that the function value is evaluated on the last $n$ columns.

The main idea in this section is the use of contextual mapping to enable Transformers 
to compute any discretized function. A contextual mapping is an unique encoding of each 
tuple  $(X,x_i) $ where  $X \in \Gdpos$, and  each column $x_i \in [\delta^{-(i-1)d}, \delta^{-(i-1)d}+1)^d_{\delta}$ for all $i \in [n]$. 
We restate the definition adapted to our setting below
\begin{definition}[Defn 3.1~\cite{Yun19}] (Contextual Mapping)
\label{defn:contextual-mapping}
A contextual mapping is a function mapping $q: \Gdpos \to \R^{n}$  if it satisfies the following:
    \begin{enumerate}
        \item For any $P \in \Gdpos$,  $q(P)$  contains distinct entries.
        \item For any two $P, P' \in \Gdpos$ with $P \neq P'$, all entries of $q(P)$ and $q(P')$ 
            are distinct. 
    \end{enumerate}
\end{definition}

The key technical novelty of the proof is computing a contextual mapping using only the 
sparse attention mechanism.  We create a 
``selective shift'' operator which only shifts entries of a vector that 
lie in  a certain range.  We will use this shift operator strategically to ensure that 
we attain a contextual mapping at the end of the process. 
The lemma below, which is based on parts of the proof of Lemma 6 of \cite{Yun19}, 
states that we can implement a suitable ``selective'' shift operator using a 
sparse attention mechanism. 
\begin{lemma}
    Given a function $\psi : \R^{(n+1)\times (d+1)} \times \R^2 \to \R^{(n+1) \times 1}$ and a vector $u \in \R^{d+1}$ and a sparse attention mechanism
    based on the directed graph $D$, we can implement a selective shift operator that receives as input
    a matrix $X \in \R^{(n+1) \times (d+1) }$  and outputs $X +  \rho \cdot \psi_u(X,b_1,b_2)$ where
     \[ 
    \psi_u(Z; b_1, b_2)_{i} =   \begin{cases}
        (\max_{j \in N(i)} u^T Z_{j} - \min_{j \in N(i)} u^T Z_{j})e_1  & \text{ if } b_1\leq  u^T Z_{j} \leq b_2\\
        0 & \text{ else. } 
    \end{cases}
    \]
    Note that $e_1 \in R^{d+1}$ denotes $(1,0,\dots,0)$. 
\end{lemma}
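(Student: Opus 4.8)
The plan is to realize the operator as a single generalized-attention layer (with two heads), together with the residual connection already built into the encoder, by exploiting the hardmax choice of the scoring function $\sigma$ and the extra appended coordinate to turn the attention (AT) into a max/min computation over the out-neighborhoods $N(i)$.

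First I would arrange the query/key maps so that the attention logits depend only on the key's projection. Using the constant coordinate available in each $Z_i$ (equivalently, the global token at index $0$), I choose $W_Q$ so that $Q(Z_i)$ is a fixed positive vector independent of $i$, and $W_K = \lambda u$ so that $Q(Z_i)K(Z_j)^T = c\lambda\, u^T Z_j$ for every $j \in N(i)$, with $\lambda$ a large scaling constant. Taking $\sigma$ to be hardmax then makes the head attend exactly to $j^{\star} = \argmax_{j\in N(i)} u^T Z_j$, and choosing $W_V = u\, e_1^T$ so that $V(Z_j) = (u^T Z_j)\, e_1$ makes the head output equal $(\max_{j\in N(i)} u^T Z_j)\, e_1$. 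A twin head with negated logits ($W_K = -\lambda u$) and value $V(Z_j) = -(u^T Z_j)\, e_1$ selects $\argmin_{j\in N(i)} u^T Z_j$ and contributes $-(\min_{j\in N(i)} u^T Z_j)\, e_1$. Summing the two heads yields precisely $(\max_{j\in N(i)} u^T Z_j - \min_{j\in N(i)} u^T Z_j)\, e_1$. The essential observation is that (AT) already restricts each query $i$ to its out-neighborhood $N(i)$, so the max and the min are automatically taken over exactly $N(i)$; this is where the sparse graph $D$ enters, and it is exactly the point at which the construction departs from the full-attention argument of \citet{Yun19}, yet without any extra effort.

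The remaining ingredient is the selective gating: the shift must be applied only when the query's own projection $u^T Z_i$ lies in the band $[b_1,b_2]$. I would implement this through the bias terms entering the scoring function together with a null (or self) entry of value $0$. By calibrating additive biases keyed to $b_1$ and $b_2$ in the attention logits, the hardmax is forced to concentrate on the zero-valued entry whenever $u^T Z_i < b_1$ or $u^T Z_i > b_2$, producing no shift, and to concentrate on the genuine max/min neighbors whenever $b_1 \le u^T Z_i \le b_2$. Finally, the scale $\rho$ is absorbed into $W_V$ (or into $\lambda$), and the additive form $X + \rho\cdot\psi_u(X,b_1,b_2)$ is furnished for free by the skip connection $\va_i = \Attn(\mX)_i + \vx_i$ of the encoder layer.

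I expect the selective gating to be the main obstacle. Recovering the max and min is a fairly direct adaptation of the hardmax trick, but forcing the shift to vanish \emph{exactly} outside $[b_1,b_2]$ is delicate: each query has access only to its own projection $u^T Z_i$ and to the projections of its neighbors, so the band-membership test must be encoded entirely in the interplay between the biases $b_1,b_2$ and the hardmax over the sparse neighborhood. Making this gating sharp, and keeping it compatible with the later replacement of hardmax by softmax and of the hard thresholds by \relu{} in Step 3, is the part that will require the most care.
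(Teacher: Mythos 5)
Your construction of the unconditional shift (two hardmax heads, one with $W_K=\lambda u$ recovering $\max_{j\in N(i)} u^T Z_j$ and one with negated keys/values recovering $-\min_{j\in N(i)} u^T Z_j$, combined through the residual connection) is fine, but the gating step, which you yourself flag as the main obstacle, has a genuine gap --- and it is not a matter of care, it fails structurally. In your setup the queries are constant, so the attention logits $Q(Z_i)K(Z_j)^T$ are completely independent of the query's own projection $u^T Z_i$; no choice of additive biases ``keyed to $b_1$ and $b_2$'' can then make the hardmax decision (null entry versus genuine neighbors) depend on whether $u^T Z_i$ lies in $[b_1,b_2]$, because nothing in the logits varies with $u^T Z_i$. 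Even if you restore query dependence, a single hardmax head with logits affine in the query cannot implement the band test: the region of query space where a fixed ``null'' key's logit dominates the maximum of the other keys' logits is an intersection of half-spaces, hence convex, whereas the complement of the band $[b_1,b_2]$ is a union of two disjoint rays. So ``attend to the zero entry exactly when $u^T Z_i \notin [b_1,b_2]$'' is unachievable in one head.

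The paper's proof (adapting Lemma 6 of Yun et al.\ to the sparse graph) resolves both problems at once by making the logit bilinear in query and key: it uses attention scores of the form $(u^T Z_i - b)\,(u^T Z_j)$ with value $(u^T Z_j)e_1$. The sign of $u^T Z_i - b$ then decides whether the hardmax over $N(i)$ returns $\max_{j\in N(i)} u^T Z_j$ or $\min_{j\in N(i)} u^T Z_j$, so a single head already implements a one-sided threshold (a convex condition, hence feasible). Taking the \emph{difference} of two such heads, with thresholds $b_1$ and $b_2$, gives the band gating for free: when $u^T Z_i$ is outside $[b_1,b_2]$ both heads select the same extremum and cancel, and when it is inside, one selects the max and the other the min, leaving exactly $(\max_{j\in N(i)} u^T Z_j - \min_{j\in N(i)} u^T Z_j)e_1$. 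To repair your proposal you would have to abandon the constant-query heads and the null-key gating and adopt precisely this query-sign-flip construction; note also that it is the difference, not the sum, of the two heads that matters, since the cancellation outside the band is what produces the ``$0$ else'' branch.
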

\begin{proof}
    Consider the function , which can be implemented by a sparse attention mechanism :
    \[ \tilde{\psi}(X,b)_i = \sigma_H \Big[ (u^T \cdot X_i)^T \cdot (u^T X_{N(i)} - b1_{N(i)}^T) e^{(1)} (u^T X_{N(i)})   \Big] \] 
    This is because the Key, Query and Value functions are simply affine transformations of $X$. 
    
    Given any graph $D$, the above function will evaluate to the following:
    \[ 
   \tilde{\psi}(Z; b)_{i} =  \begin{cases}
        (\max_{j \in N(i)} u^T Z_{j}) e_1 & \text{ if } u^T Z_{j} > b\\
        (\min_{j \in N(i)} u^T Z_{j}) e_1 & \text{ if } u^T Z_{j} < b \\
    \end{cases}
    \] 
    
    Therefore we can say that $\tilde{\psi}(Z; b_{Q}) - \tilde{\psi}(Z; b_{Q'})$ satisfies 
    \[ 
    \psi(Z; b_1, b_2)_{i} =   \begin{cases}
        (\max_{j \in N(i)} u^T Z_{j} - \min_{j \in N(i)} u^T Z_{j}) e_1  & \text{ if } b_1\leq  u^T Z_{j} \leq b_2\\
        0 & \text{ else } 
    \end{cases}
    \] 
\end{proof}

The following lemma, which is the heart of the proof, uses the above selective shift operators
to construct contextual mappings. 

\begin{lemma}
    \label{lem:contextual}
    There  exists  a function $g_c: \R^{(n+1) \times (d+1)} \to \R^{(n+1)} $ and 
    a unique vector $u$, such that for all $P \in \Gdpos$ $g_c(P) := \bkt{u}{g(P)}$ 
    satisfies the property that $g_c$ is a  contextual mapping of $P$. 
    Furthermore, $g_c \in \mathcal{T}_D^{2,1,1}$ using a composition of sparse attention layers
    as long as $D$ contains the star graph. 
\end{lemma}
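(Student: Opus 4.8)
The plan is to construct the contextual mapping by applying the selective shift operator from the previous lemma in a carefully scheduled sequence of layers, exploiting the star graph's global token at index $0$ to aggregate information about the whole matrix into a single unique code. First I would fix the unique vector $u = (1, \delta^{-1}, \delta^{-2}, \dots, \delta^{-d})^T$ (up to the extra appended coordinate), so that for each column the inner product $\bkt{u}{x_i}$ is an injective ``base-$\delta^{-1}$'' encoding of the $d$ entries of $x_i$ into a single scalar. Since each column $x_i$ lives in the shifted grid $[\delta^{-(i-1)d}, \delta^{-(i-1)d}+1)^d_\delta$, these per-column scalars land in disjoint intervals indexed by $i$, so the positional embedding $E$ already guarantees that columns at different positions occupy non-overlapping numeric ranges. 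This is the reason we translated $g$ by $E$ in Step 1.

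Next I would iterate the selective shift $\psi_u(\cdot; b_1, b_2)$ over a finely-indexed sequence of ranges $[b_1, b_2]$ chosen so that at each stage exactly one ``level set'' of scalar values is targeted. The key is that the shift applied to a column depends, through $\max_{j \in N(i)} u^T Z_j - \min_{j \in N(i)} u^T Z_j$, on the neighbourhood $N(i)$ in $D$. For the ordinary columns $i \in [n]$ in the star graph, $N(i) = \{0, i\}$, so each column can interact with the global token $x_0$; and crucially $N(0) = \{1, \dots, n\}$, so the global column at index $0$ sees \emph{every} other column. The strategy is: first use shifts localized to each column's own disjoint range to push the columns up into distinct, well-separated ``stacked'' values that encode $x_i$ together with its index $i$; then route all of this through the global token by applying shifts whose range sweeps over the full set of stacked values, so that after processing, the entry $g_c(P)_0$ at the global coordinate becomes a single scalar whose value is a bijective function of the entire tuple $(x_1, \dots, x_n)$. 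Finally, one more pass redistributes this global code so that every coordinate $g_c(P)_i$ is distinct both within a given $P$ (Property 1 of Definition~\ref{defn:contextual-mapping}) and across distinct $P \neq P'$ (Property 2).

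The composition count must be controlled to land in $\mathcal{T}_D^{2,1,1}$: each selective shift is a difference of two single-head attention evaluations, matching the ``$2$ heads, head size $1$, hidden size $1$'' signature, and the scalar output justifies the codomain $\R^{n+1}$. I would verify that the number of layers needed is finite (depending on $n$, $d$, $\delta$ but not on the particular $P$), so that the whole construction is a legitimate fixed transformer stack, and that it works for \emph{any} $D$ containing $S$ — extra edges beyond the star only enlarge each $N(i)$, which I would argue can be made harmless by choosing the value/key weights so that the scoring function $\sigma$ (hardmax) selects only the intended global or self contribution.

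The hard part will be the second step: proving that routing through the single global token, whose neighbourhood sees all $n$ columns simultaneously, genuinely produces an \emph{injective} code on all of $\Gdpos$ despite each ordinary column attending to only its own value and the global token. In the full-attention proof of \cite{Yun19} every column sees every other column, so the selective shift range can be tuned globally; here the information bottleneck is the star's center, and I expect the main obstacle to be showing that the max/min aggregation at node $0$, combined with the disjoint-interval placement from $E$, does not collapse two distinct inputs to the same scalar. Establishing this separation — essentially a careful accounting that the interval widths $\delta^{-(i-1)d}$ grow fast enough to keep all intermediate stacked values from colliding — is the crux, and is precisely where the global token earns its keep in the sparse setting.
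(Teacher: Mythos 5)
Your setup (the vector $u$ giving a base-$\delta^{-1}$ code per column, the disjoint per-column ranges induced by the positional shift $E$, and the global token as the aggregation hub) matches the paper, but your schedule of shift operations has a genuine gap that the paper's proof is specifically engineered to avoid. You propose two stages: first shift every ordinary column within its own range, then ``route'' everything through the global token by sweeping shift ranges over the stacked values. The problem is that each application of the selective shift to node $0$ changes $\bkt{u}{x_0}$ by a multiple of $\max_{j\in N(0)}\bkt{u}{x_j} - \min_{j\in N(0)}\bkt{u}{x_j}$, i.e., it extracts only \emph{two} scalars from the $n$ columns. In your stage 1, every column $i$ is shifted while the global value is still the initial constant $l_0$, so its new code $f_i$ is a function of $(l_i, l_0)$ alone; in stage 2 the columns no longer move (or, if the sweep re-hits them, each absorbs the same current global value), so the final global code is a function of $(l_0, \max_i f_i, \min_i f_i)$ and of how many times node $0$ was hit. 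For $n\ge 3$, two inputs differing only in a ``middle'' column therefore collapse to the same code, so Property 2 of the contextual-mapping definition fails, and no final redistribution pass can repair an already non-injective global code.

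The paper's fix is precisely the ingredient your plan lacks: the low and high shifts are \emph{interleaved}, phase by phase. In phase $k$, column $k$ is shifted first (its neighborhood is $\{0,k\}$, so it absorbs the global token's \emph{current} value $\tilde{l}_0^{k-1}$, which already encodes columns $1,\dots,k-1$), and only then is the global token shifted (absorbing $f_k$ as the new maximum and $l_{k+1}$ as the minimum), yielding the recursion $\tilde{l}_0^{k} = \delta^{-nd}\,(f_k - l_{k+1}) + \tilde{l}_0^{k-1}$ with $f_k = \delta^{-d}\,(\tilde{l}_0^{k-1} - l_k) + l_k$. Because the columns change \emph{between} successive global updates, each of the $n$ global updates injects one genuinely new scalar, and injectivity follows from a dominant-term argument. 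Carrying this out also forces you to track the evolving interval $[S_k, T_k)$ in which the global value lies after phase $k$ (the paper's invariants), since the shift targeting node $0$ must isolate it without touching any column; this bookkeeping, and the special last coordinate $\delta^{-nd}$ of $u$ that initially places the global code above all column codes, are both absent from your proposal. So your tools are the right ones, but without the alternating schedule and the $[S_k,T_k)$ invariants the construction you describe does not produce a contextual mapping.
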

\begin{proof}
    Define $u \in \R^{d+1} = [1,\delta^{-1},\delta^{-2},\dots, \delta^{-d+1},\delta^{-nd}]$ and let 
    $X_{0} = (0,\dots,0,1)$.  We will assume that $\bkt{x_i}{x_0}=0$, by assuming that all the 
    columns $x_1,\dots, x_n$ are appended by $0$. 
    
    To successfully encode the entire context in each token, we will interleave the shift operator
    to target the original columns $1,\dots,n$ and to target the global column $0$. After a column $i$ is targeted, its inner product with $u$ will encode the entire context of the first $i$ columns. 
    Next, we will shift the global token to take this context into account. This can be subsequently used by the  remaining columns.    
    
    For $i \in \{0,1,\dots,n\} $, we will use  $l_i$ to denote the innerproducts $\bkt{u}{x_i}$ at the beginning. For 
    $f_i = \bkt{u}{x_i}$ after the $i^{th}$ column has changed for $i \in \{1,\dots,n\}$ and we will use
    $f_0^k $ to denote $\bkt{u}{x_0}$ after the $k^{th}$ phase. We need to distinguish the global token  
    further as it's inner product will change in each phase. 
    Initially, given  $X \in \Gdpos$, the following are true:
    \begin{align*}
     \delta^{-(i-1)d} &\leq \bkt{u}{X_{i}} \leq \delta^{-id}-\delta \qquad \text{ for all } i \in [n]  \\
     \delta^{-(n+1)d} &= \bkt{u}{X_{0}} 
    \end{align*}
   Note that all $l_i$ ordered in distinct buckets $l_1 < l_2 < \dots < l_n <l_0$. 
    
We do this in phases indexed from $i  \in \{1,\dots, n\}$.  Each phase consists of two distinct parts:
    \newline
    \textbf{ The low shift operation:} These operation will be  of the form
        \[ X \leftarrow X +  \delta^{-d}\psi \left(X,v -\delta/2, v + \delta/2 \right) \]  for values
        $v \in [\delta^{-id}), \delta^{-(i+1)d})_{\delta}$. 
        The range is chosen so that only $l_i$ will be in the range and no other $l_j$ $j\neq i$
        is in the range. 
        This will shift exactly the $i^{th}$ column $x_i$ so that the new inner product
        $f_i = \bkt{u}{x_i} $  is substantially larger than $l_i$. Furthermore, no
        other column of $X$ will be affected. 
    \newline
    \textbf{ The high shift operation: }
    These operation will be  of the form
        \[ X \leftarrow X +  \delta^{-nd} \cdot \psi \left(X,v -\delta/2, v + \delta/2 \right)\] 
        for values $v \in [S_i, T_i)_{\delta}$. The range $[S_{i}, T_i)_{\delta}$ is chosen to 
        only affect the column $x_0$ (corresponding to the global token) and no other column. In particular, this 
        will shift the global token by a further $\delta^{-nd}$. Let $\tl_0^i$ denote the value of 
        $\tl^i_0 = \bkt{u}{x_0}$ at the end of $i^{th}$ high operation. 
    
    Each phase interleaves a shift operation to column $i$ and updates the global token. 
    After each phase, the updated $i^{th}$ column $f_i = \bkt{u}{x_i} $ will contain a unique token
    encoding the values of all the $l_1,\dots,l_i$. After the high update, $\tl_{0}^i = \bkt{u}{x_0}$
    will contain information about the first $i$ tokens. 
    
    Finally, we define the following constants for all $k \in \{0,1,\dots,n\}$. 
     \begin{align}
    T_k &= (\delta^{-(n+1)d} + 1)^{k} \cdot \delta^{-nd} -  \sum_{t=2}^k  (\delta^{-(n+1)d} + 1)^{k-t}( 2\delta^{-nd-d}  +  \delta^{-nd} +1) \delta^{-td} \notag \\
         & \qquad -   (\delta^{-(n+1)d} + 1)^{k-1}(\delta^{-nd-d}  +  \delta^{-nd} )\delta^{-d}  - \delta^{-(k+1)d} \label{eqn:upper-bound} \tag{UP}
    \end{align}
    \begin{align}
    S_k &= (\delta^{-(n+1)d} + 1)^{k} \cdot \delta^{-nd} -  \sum_{t=2}^k  (\delta^{-(n+1)d} + 1)^{k-t}( 2\delta^{-nd-d}  +  \delta^{-nd} +1) \delta^{-(t-1)d} \notag \\
         & \qquad -   (\delta^{-(n+1)d} + 1)^{k-1}(\delta^{-nd-d}  +  \delta^{-nd} ) - \delta^{-kd} \label{eqn:lower-bound} \tag{LP}
    \end{align}
    
    After each $k$ phases, we will maintain the following invariants:
    \begin{enumerate}
    \item  $S_k < \tl^k_0 < T_k$ for all $k \in \{ 0, 1, \dots, n\}$. 
    \item  $T_{k-1} \leq  f_k < S_k $
    \item  The order of the inner products after $k^{th}$ phase is 
     \[l_{k+1} < l_{k+2} \dots < l_n < f_1 < f_2< \dots <  f_{k} < \tl_0^k .\] 
    \end{enumerate}

    \paragraph{Base case}    
    The case $k=0$, is trivial as we simply set  $S_0 = \delta^{-(n+1)d}$, $T_0 = \delta^{-(n+1)\cdot d}+\delta$. 
    
    The first nontrivial case is $k=1$. 

   \paragraph{ Inductive Step }
   First, in the low shift operation is performed in the range $[\delta^{-(k-1)d}, \delta^{-kd})_{\delta}$
   Due to the invariant, we know that there exists only one column $x_k$ that is affected by this shift. 
   In particular, for column $k$, we will have $\max_{j \in N(k)} \bkt{u}{x_j} = \bkt{u}{x_{0}} = \tl^{k-1}_0$. The minimum is $l_k$. Thus the update will be
   $f_k = \delta^{-d} ( \tl_0^{k-1} - l_k ) + l_k $.  Observe that for small enough $\delta$,
   $f_k \geq \tl_0^{k-1}$.  Hence the total ordering, after this operation is
    \begin{align}
        l_k+1 < l_{k+2} \dots < l_n < f_1 < f_2< \dots <  \tl_0^{k-1} < f_{k}  \label{eqn:intermediate}
    \end{align}
    Now when we operate a higher selective shift operator in the range $[S_{k-1},T_{k-1})_{\delta}$.  
    Since only global token's innerproduct  $\tl_0^{k-1}$ is in this range, 
    it will be the only column affected by the shift operator. The global token operates  over the entire range, we know from~\Cref{eqn:intermediate} that,  $f_k = \max_{i \in [n]} \bkt{u}{x_i}$ and $l_{k+1} = \min_{i \in [n]} \bkt{u}{x_i}$. 
    The new value $\tl_0^k = \delta^{-nd} \cdot (f_k - l_{k+1} ) + \tl_0^{k-1}$. 
    Expanding and simplifying we get,  
    \begin{align*}
        \tl_0^k &= \delta^{-nd} \cdot (f_k - l_{k+1} ) + \tl_0^{k-1} \\
        &= \delta^{-nd} \cdot (   \delta^{-d} ( \tl_0^{k-1} - l_k ) + l_k - l_{k+1} ) + \tl_0^{k-1} \\
        &= \delta^{-(n+1)d} \cdot (    \tl_0^{k-1} - l_k )  +  \delta^{-nd}(l_k - l_{k+1}) + \tl_0^{k-1} \\
        &= (\delta^{-(n+1)d} + 1) \tl_0^{k-1} - (\delta^{-nd-d}  +  \delta^{-nd}) l_k - l_{k+1} \\
        \intertext{Expanding the above recursively, we get}
        &= (\delta^{-(n+1)d} + 1)^{k} \cdot \tl_0^{0} -  \sum_{t=2}^k  (\delta^{-(n+1)d} + 1)^{k-t}( 2\delta^{-nd-d}  +  \delta^{-nd} +1) l_t \\
         & \qquad -   (\delta^{-(n+1)d} + 1)^{k-1}(\delta^{-nd-d}  +  \delta^{-nd} )l_1  - l_{k+1}
         \end{align*}
         
    Since we know that $\tl_0^0 = \delta^{-nd}$ and each $l_i < \delta^{-id}$, we can substitute this to get~\Cref{eqn:upper-bound}
    and we can get an lower-bound~\Cref{eqn:lower-bound} by using $l_i \geq \delta^{-(i-1)d} $. 
    
    By construction, we know that $S_k \leq \tl_0^k < T_k$.  For sufficiently small $\delta$, 
    observe that $S_k \leq \tl_0^k < T_k$ all are essentially the dominant term $ \approx O(\delta^{-n(k+1)d - kd})$ and all the lower order terms do not matter.  As a result it is 
    immediate to see that that $f_k > \delta^{-d} (\tl_0^{k-1} - l_k) > T_{k-1}$ and hence we
    can see that the invariant 2 is also satisfied. Since only column $k$ and the global token are affected,
    we can see that invariant 3 is also satisfied. 
    
    After $n$ iterations, $\tl^n_0$ contains a unique encoding for any $P \in \Gdpos$. 
    To ensure that all tokens are distinct, we will add an additional layer 
    $X = X + \delta^{-n^2d} \psi(X,v -\delta/2, v+\delta/2)$ for all $v \in [S_1, T_n)_{\delta}$. 
    This ensures that for all $P, P' \in \Gdpos$, each entry of $q(P) $ and $q(P')$ are distinct. 
\end{proof}

The previous lemma shows that we can compute a contextual mapping using only sparse transforms. 
We now use the following lemma to show that we can use a contextual mapping and feed-forward layers
to accurately map to the desired output of the function $\bar{f}$. 

\begin{lemma}[Lemma 7~\cite{Yun19}]
Let $g_c$ be the function in~\Cref{lem:contextual}, we can construct
a function $g_v: \R^{(n+1) \times (d+1)} \to \R^{(n+1) \times d} $ composed of 
$O(n \delta^{-nd})$ feed-forward layers (with hidden dimension $q=1$) 
with activations in $\Phi$ such that 
$g_v$ is defined as  $g_v(Z) = [g_v^{tkn}(Z_{1}), \dots, g^{tkn}_v(Z_{n})]$, 
where  for all $j \in \{1,\dots, n\}$, 
        \[ g_v^{tkn}(g_c(L)_{j}) =  f(L)_{j}  \]
\end{lemma}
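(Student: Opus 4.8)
The plan is to exploit the fact that, once the previous lemma supplies a contextual mapping, converting codes to outputs is a purely \emph{token-wise} operation: it reads one scalar per column and never consults other columns, so it is oblivious to whether the attention graph $D$ was sparse or complete. Consequently the sparse structure plays no further role, and the construction of Lemma~7 in \citet{Yun19} transfers essentially verbatim; below I outline how I would carry it out.

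First I would record the structural facts inherited from \Cref{lem:contextual}. Since $\Gdpos$ is a finite grid with $|\Gdpos| = O(\delta^{-nd})$ points and $g_c$ is a contextual mapping, the set $\{\, g_c(L)_j : L \in \Gdpos,\ j \in [n]\,\}$ consists of exactly $N := n\,|\Gdpos| = O(n\,\delta^{-nd})$ pairwise distinct scalars. Each such value $c$ is produced by a unique pair $(L,j)$ and therefore determines a unique target $y(c) := f(L)_j \in \R^d$. Moreover, because $g_c(L)_j = \bkt{u}{g(L)_j}$ is an affine read-out, a single feed-forward map can place this scalar code in a designated coordinate of column $j$, so the subsequent layers need only operate on that coordinate.

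Next I would assemble $g_v^{tkn}$ as a composition of $N = O(n\,\delta^{-nd})$ feed-forward layers, one dedicated to each code value $c$. Using the activation class $\Phi$ (which, as in \citet{Yun19}, is rich enough to realize indicator-type bumps), each layer applies a bump supported on a window around its code $c$ that is narrow enough to exclude every other code, and uses this indicator---through its hidden unit of width $q=1$---to deposit the corresponding target $y(c)$ into the (zero-initialized) output coordinates, leaving them untouched when the code lies outside the window. Because all $N$ codes are distinct, at most one layer fires on any given column, the layers do not interfere, and the resulting token-wise map satisfies $g_v^{tkn}(g_c(L)_j) = f(L)_j$ for every $L \in \Gdpos$ and $j \in [n]$, which is exactly the claim after stacking columns into $g_v(Z) = [g_v^{tkn}(Z_1),\dots,g_v^{tkn}(Z_n)]$.

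The one point requiring genuine care---the main obstacle---is to certify that consecutive code values are separated by a uniform, \emph{a priori} known gap, for only then can a single hidden unit cleanly isolate one code. This separation must be extracted from the explicit constants $S_k$ and $T_k$ and the invariant total ordering $l_{k+1} < \dots < \tl_0^k$ established in \Cref{lem:contextual}: the geometric scaling of $u$ forces distinct inputs into disjoint, $\delta$-spaced buckets whose gaps grow like $\delta^{-\Theta(nd)}$, so a bump of width $O(\delta)$ suffices. Granted this quantitative separation, the remaining work---counting the layers and verifying non-interference---is routine bookkeeping that mirrors \citet{Yun19} exactly.
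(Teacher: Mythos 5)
Your proposal is correct, and it is essentially the same argument as the one the paper relies on: the paper does not reprove this lemma at all, but imports it verbatim as Lemma~7 of \citet{Yun19}, whose proof is exactly your construction --- a token-wise (attention-free, hence sparsity-irrelevant) map built from one width-$1$ feed-forward ``bump'' layer per distinct code value, with non-interference guaranteed by the fact that the contextual-mapping outputs are distinct and $\delta$-separated (all codes being integer multiples of $\delta$ by construction of $u$ and the grid $\Gdpos$). Your identification of the quantitative separation of codes as the one point needing care matches the role that the explicit constants $S_k, T_k$ play in \Cref{lem:contextual}.
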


\subsubsection{Approximating modified Transformers by Transformers}

The previous section assumed we used Transformers that used hardmax operator $\sigma_H$ and 
activations functions belonging to the set $\Phi$. This is without loss of generality as
following lemma shows.

\begin{lemma}[Lemma 9 \cite{Yun19}]
For each $g \in \bar{\mathcal{T}}^{2,1,1}$ and $1 \leq p \leq \infty$, $\exists g \in \mathcal{T}^{2,1,4}$ such that
$d_p(g,\bar{g}) \leq \epsilon/3$
\end{lemma}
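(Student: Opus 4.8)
The plan is to eliminate the two non-standard ingredients of the modified class $\bar{\mathcal{T}}^{2,1,1}$ one at a time: the piece-wise activation functions drawn from $\Phi$ in the feed-forward layers, and the hardmax scoring function $\sigma_H$ in the attention layers. Replacing each by its standard counterpart — $\relu$ and softmax respectively — while keeping the change in output below $\epsilon/3$ in $d_p$ yields a network in $\mathcal{T}^{2,1,4}$, where the increase in hidden width from $1$ to $4$ accounts precisely for the cost of emulating the $\Phi$-activations with ReLUs.

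First I would dispose of the activations. Each $\phi \in \Phi$ arising in the construction is piece-wise linear with a constant number of breakpoints, hence expressible \emph{exactly} as a linear combination of at most four $\relu$ units. Substituting this gadget into every feed-forward block reproduces the maps $g_c$ and $g_v$ verbatim, at the sole cost of widening the hidden layer to $q=4$. Since this is an exact, layer-by-layer rewrite, it contributes nothing to the $d_p$ error; all of the $\epsilon/3$ budget is therefore available for the attention step.

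The substantive step is approximating $\sigma_H$ by softmax, written $\sigma_S$. The idea is to scale the query--key logits by a large constant $\lambda$, using the fact that $\sigma_S(\lambda z) \to \sigma_H(z)$ as $\lambda \to \infty$ whenever the maximizing coordinate of $z$ is unique, with a rate governed by the gap between the largest and second-largest entry. Because the inputs fed to each attention layer live on the finite grid $\Gdpos$ (or in $\delta$-boxes around it), and because the selective-shift analysis guarantees that the relevant inner products $\bkt{u}{x_j}$ sit in distinct, well-separated buckets, the argmax is unique with a uniform positive gap on the portion of the domain carrying almost all the mass. I would then fix $\lambda$ large enough that, on that bulk region, each layer's softmax output differs from its hardmax output by an arbitrarily small amount, and absorb the remaining vanishing-measure neighborhoods of ties into the $d_p$ budget using boundedness of every function in play on the compact domain $[0,1]^{n\times d}$.

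The main obstacle I anticipate is not any single layer but the \emph{propagation} of the softmax error through the composition of all the selective-shift layers and the final feed-forward stage: a small perturbation introduced early could in principle push a later inner product across a bucket boundary and be amplified by the large shift multipliers $\delta^{-d}, \delta^{-nd}, \delta^{-n^2 d}$. To control this I would argue inductively that each map is Lipschitz on the bulk region with a constant depending only on $\delta$ and $n$, fix the (finite) depth first and only then choose the single scale $\lambda$ so that the accumulated, Lipschitz-amplified error stays below $\epsilon/3$, and verify that the bucket separation established in \Cref{lem:contextual} leaves enough slack that no perturbed logit ever crosses a boundary. Quantifying this slack against the shift magnitudes is the part demanding the most care.
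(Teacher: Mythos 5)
The paper gives no proof of this lemma at all: it is imported verbatim as Lemma~9 of \cite{Yun19} and used as a black box, so there is nothing internal to compare against. Your proposal is a faithful reconstruction of the argument in that cited source, following the same two steps: each piece-wise linear activation in $\Phi$ is rewritten exactly by a small ReLU gadget (this is precisely what raises the hidden width from $1$ to $4$), and the hardmax layers are approximated by temperature-scaled softmax, with the accumulated error controlled layer-by-layer over the fixed, finite depth before the scale $\lambda$ is chosen; your attention to the interaction between softmax error and the bucket boundaries of the selective-shift construction is exactly where the real work in the cited proof lies. One caveat: absorbing the tie-neighbourhoods ``into the $d_p$ budget using boundedness'' is a finite-measure argument and hence only valid for $p < \infty$; this matches the hypothesis $1 \leq p < \infty$ of the lemma as stated in \cite{Yun19} and is all the paper needs (its main theorem assumes $1 < p < \infty$), but it does not cover the $p = \infty$ endpoint that the transcribed statement above nominally includes.
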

    
Combining the above lemma with the \Cref{lem:contextual}, we get our main result:
\begin{theorem}
    Let $1\leq p \leq \infty$ and $\epsilon > 0$, there exists a transformer network 
    $g \in \mathcal{T}_D^{2,1,4}$
    which achieves a ratio of $d_{p}(f,g) \leq \epsilon$ where $D$ is the sparse graph. 
\end{theorem}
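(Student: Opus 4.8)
The plan is to follow the three-step template laid out in the main body, with the entire burden of novelty falling on Step 2. The statement I must prove is that for any $f \in \mathcal{F}_{CD}$, $1 \le p \le \infty$, and $\epsilon > 0$, there is a sparse transformer $g \in \mathcal{T}_D^{2,1,4}$ with $d_p(f,g) \le \epsilon$ whenever the sparse graph $D$ contains the star graph $S$. The overall strategy is a chain of $\epsilon/3$ approximations: first replace $f$ by a piecewise-constant $\bar f$ on the grid $\Gdpos$, then realize $\bar f$ exactly by a \emph{modified} transformer (using hardmax $\sigma_H$ and the restricted activation class $\Phi$), and finally approximate that modified transformer by a genuine transformer using softmax and ReLU. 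The triangle inequality then gives $d_p(f,g) \le \epsilon/3 + 0 + \epsilon/3 \le \epsilon$ after absorbing the piecewise-constant error and the softmax/ReLU error.

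Concretely, first I would invoke \Cref{lem:piecewise} (Lemma 8 of \citet{Yun19}) to produce $\bar f$ with $d_p(f,\bar f) \le \epsilon/3$, and then pass to the translated function $g(X) = f(X-E)$ on the disjoint, positionally-separated grid $\Gdpos$, so that the inner products $\bkt{u}{X_i}$ fall into distinct buckets $l_1 < l_2 < \cdots < l_n < l_0$. The decisive middle step is to build a contextual mapping using only sparse attention. Here I would apply \Cref{lem:contextual}: using the selective shift operator $\psi_u$ (implementable by sparse attention because $Q,K,V$ are affine), I interleave \emph{low} shifts that target one data column $x_k$ at a time with \emph{high} shifts that update the single global token $x_0$. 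Because $D$ contains the star graph, the global token sees all of $[n]$ and every data token sees the global token; this is exactly what lets a single token accumulate $\max_{i} \bkt{u}{x_i}$ and $\min_{i} \bkt{u}{x_i}$ at each phase despite sparsity. Maintaining the three invariants (the bounds $S_k < \tl_0^k < T_k$, the separation $T_{k-1} \le f_k < S_k$, and the total ordering) by induction yields a unique encoding $\tl_0^n$ of $P \in \Gdpos$, after which one extra shift layer makes every entry distinct, giving a bona fide contextual mapping $g_c \in \mathcal{T}_D^{2,1,1}$. Composing with the feed-forward value-mapping lemma (Lemma 7 of \citet{Yun19}) reproduces $\bar f$ exactly via $g_v^{tkn}(g_c(L)_j) = f(L)_j$.

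Finally I would apply the approximation lemma (Lemma 9 of \citet{Yun19}) to replace the hardmax and $\Phi$-activations of the modified network $\bar g \in \bar{\mathcal{T}}^{2,1,1}$ by softmax and ReLU at cost $\epsilon/3$ in $d_p$, landing in $\mathcal{T}_D^{2,1,4}$; summing the three errors closes the proof.

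The main obstacle is unquestionably \Cref{lem:contextual}. In the dense setting of \citet{Yun19} a single selective shift could read off the global range (largest minus smallest entry) because every query attends to every key; under sparse attention no individual data token can see the whole matrix, so the contextual code must be routed through the $O(1)$ global token. The delicate part is the bookkeeping of the recurrence $\tl_0^k = (\delta^{-(n+1)d}+1)\tl_0^{k-1} - (\delta^{-nd-d}+\delta^{-nd})l_k - l_{k+1}$ and verifying that the explicitly constructed windows $[S_k,T_k)_\delta$ in \Cref{eqn:upper-bound} and \Cref{eqn:lower-bound} isolate exactly the intended column at every phase, for $\delta$ small enough that the dominant term $\approx O(\delta^{-n(k+1)d-kd})$ swamps all lower-order corrections. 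Everything hinges on choosing the shift magnitudes ($\delta^{-d}$ for low, $\delta^{-nd}$ for high, $\delta^{-n^2 d}$ for the final disambiguation) so that the separation invariants are preserved inductively; getting these constants to interlock is the technical heart of the argument.
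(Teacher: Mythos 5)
Your proposal follows exactly the paper's own three-step argument: Lemma~8 of \citet{Yun19} for the piecewise-constant reduction on $\Gdpos$, the sparse selective-shift construction of the contextual mapping (\Cref{lem:contextual}) routed through the global token with the same low/high shift phases and invariants, composition with Lemma~7 of \citet{Yun19}, and Lemma~9 of \citet{Yun19} to pass from hardmax/$\Phi$ to softmax/ReLU. This matches the paper's proof in both structure and key technical content, so there is nothing further to add.
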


Since the sparsity graph associated with \bigb contains a star network, we know that it 
can express any continuous function from a compact domain.

\paragraph{Contemporary work on Universal Approximability of Sparse Transformers}
We would like to note that, contemporary work done by \citet{yun2020on}, also parallelly explored the ability of sparse transformers with linear connections to capture sequence-to-sequence functions on the compact domain.

\newpage
\section{Turing Completeness}
\label{sec:apndx-turing}

In this section, we will extend our results to the setting of \citet{Perez19}. Our 
exposition will largely use their proof structure but we will make a few changes. 
We  repeat some of the lemmas with the amendments to make the exposition 
self-contained.

\subsection{Notation}
\paragraph{Transformer Decoder}
We need both an encoder and a decoder in the transformer for simulating a Turing machine.
We utilize the same notation used in~\Cref{sec:apndx-enc-notation} for encoders. 
The decoder is similar to an encoder but with additional attention to an external pair of key-value vectors $(\mK^{\textbf{e}}\in\R^{n\times m},\mV^{\textbf{e}}\in\R^{n\times d})$, which usually come from the encoder stack.
A single layer of Transformer decoder is a parametric function $\Dec$ receiving a sequence $\mY_j=(\vy_1,\ldots, \vy_j)$ of vectors in $\R^d$ plus the external $(\mK^{\textbf{e}}, \mV^{\textbf{e}})$ and returning a sequence of vectors $\mZ_j=(\vz_1,\ldots,\vz_j)$ of the same length. Each $\vz_i$ is a $d$ dimensional vector as well.  $\Dec$ has three components, one more than $\Enc$:
\begin{enumerate}[leftmargin=6mm, itemsep=2mm, partopsep=0pt,parsep=0pt]
    \item An attention mechanism $\Attn$ that takes in the sequence $\mY_j$ and returns sequence $(\vp_1, ..., \vp_j)$ of the same length and dimensionality;  
    \item A cross-attention mechanism $\CrossAttn$ that takes in the sequence $(\vp_1, ..., \vp_j)$ plus the external $(\mK^{\textbf{e}}, \mV^{\textbf{e}})$ and returns sequence $(\va_1, ..., \va_j)$, with each $\va_i\in\R^d$; and
    \item A two layer fully connected network $O$ that takes in a vector in $\R^d$ and returns a vector in $\R^d$. 
\end{enumerate}
\vspace{-2mm}
Then $i$-th output vector of $\Dec(\mY_j; \mK^{\textbf{e}}, \mV^{\textbf{e}})$ is computed as follows:
\begin{flalign}
    && \vz_i &= O(\va_i) + \va_i & \label{eq:dec-ff} \\
    &\text{where} & \va_i &= \CrossAttn(\vp_i, \mK^{\textbf{e}}, \mV^{\textbf{e}}) + \vp_i & \label{eq:dec-ext} \\
    &\text{and} & \vp_i &= \Attn_D(\mY_j)_i + \vy_i \label{eq:dec-self}&
\end{flalign}
$\Attn_D$ and $O$ are as defined in~\Cref{sec:apndx-enc-notation} and it remains to define $\CrossAttn$.
The $i^\textrm{th}$ output vector of multi-head cross-attention attention is given by
\begin{align}
\CrossAttn(\mY_j)_i &=  \sum_{h=1}^H \sigma \left((\vy_i W_Q^h) (\mK^{(e)} W_K^h)^T  \right) \cdot (\mV^{(e)} W_V^h ) 
\end{align}
where $W_Q^h, W_K^h, W_V^h \in \R^{d \times m}$, $W_V^h\in \R^{d \times d}$, for all $h = 1, \ldots H$ heads.

\paragraph{Turning Machine}
We will use the same setup of Turning Machine that was used by \citet{Perez19} (see section B.4). 
Given a Turing Machine $M = (Q,\Sigma, \delta, q_{init},F)$, we use the following notation
\begin{align*}
    q^{(j)} &: \text{ state of Turing machine } M \text{ at time }j.  \\
    s^{(j)} &: \text{ symbol under the head of } M \text{ at time }j.  \\
    v^{(j)} &: \text{ symbol written by } M \text{ at time }j.  \\
    m^{(j)} &: \text{ head direction in the transition of } M \text{ at time }j. 
\end{align*}

\paragraph{Vector representations}
For a symbol $s\in \Sigma$, $\oh{s}$ denotes its one-hot vector representation in $\Q^{|\Sigma|}$.
All the transformer intermediate vectors used in our simulations have dimension $d=2|Q|+4|\Sigma|+16$.
Note that we use five extra dimension as compared to \citet{Perez19}.
We follow the convention used in \citet{Perez19} and write a a vector $\vv\in\Q^d$ arranged in four groups of values
as follows
\[
\begin{array}{rcllr}
\vv & = & [ 
& \vq_1,\vs_1,x_1, \\
&&& \vq_2,\vs_2,x_2,x_3,x_4,x_5,x_6, \\
&&& \vs_3,x_7,\vs_4, \\
&&& x_8,x_9,x_{10},x_{11},x_{12},x_{13},x_{14},x_{15},x_{16} & ] 
\end{array}
\]
where $\vq_i\in \Q^{|Q|}$, $\vs_i\in \Q^{|\Sigma|}$, and $x_i\in\Q$.

\subsection{Details of the Simulation}
\label{sec:details}

In this section, we give more details on the architecture of the encoder and decoder needed to implement our simulation strategy.

\begin{figure}[b]
    \vspace{-5mm}
    \centering
    \includegraphics[width=\linewidth]{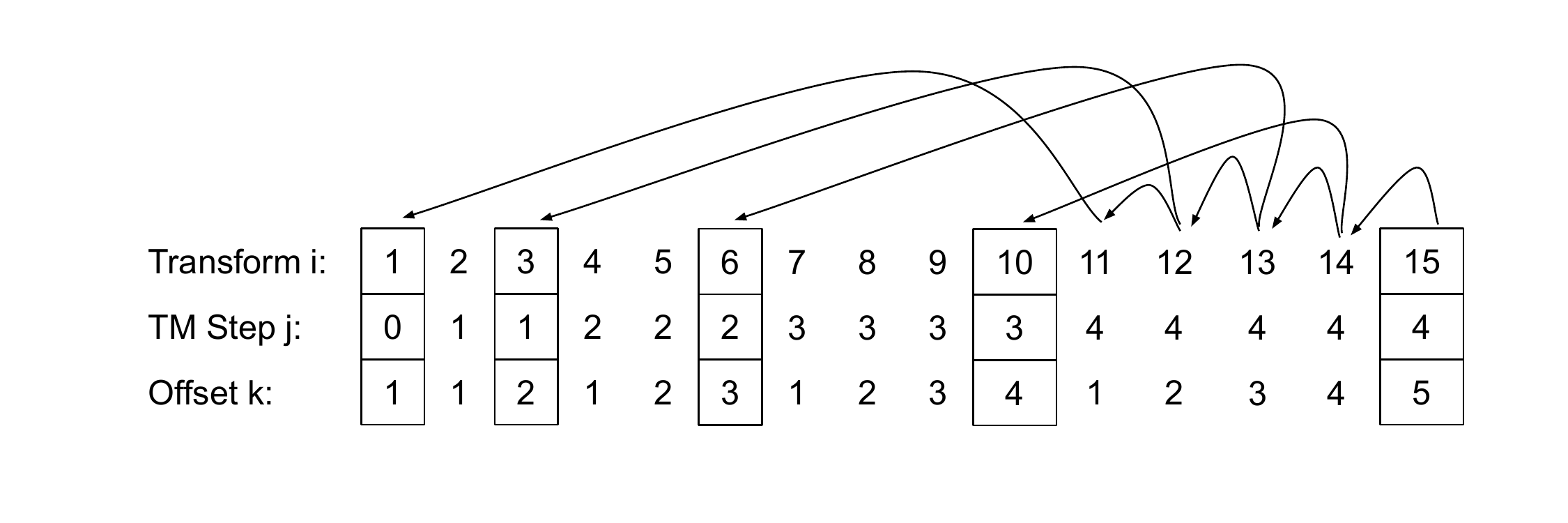}
    \vspace{-10mm}
    \caption{Mapping between transformer step and original Turing machine step.}
    \label{fig:apndx_turing}
\end{figure}

\paragraph{High Level Overview:}
Given the Turing machine $M$, we will show that a transformer with an appropriate encoder and decoder $\mathcal{T}_D$ can simulate each step of $M$'s execution.
Our simulation strategy will mostly follow \citet{Perez19}, except we will use a sparse attention mechanism.
The main idea is to maintain the current Turing machine state $q^{(j)}$ and symbol under the head $s^{(j)}$ as part of the decoder sequence $\mY$ for all time step $j$ so that we can always simulate the corresponding Turing machine transition $\delta(q^{(j)}, s^{(j)}) = (q^{(j)}, v^{(j)}, m^{(j)})$.
The key difference will rise in Lemma~B.4 of \citet{Perez19}, where full attention is used to select the appropriate symbol from tape history in one step.
To accomplish the same task with sparse attention, we will exploit the associative property of max and break down the symbol selection over multiple steps.
Thus, unlike \citet{Perez19} one decoding step of our sparse transformer $\mathcal{T}_D$ does not correspond to one step of the Turing machine $M$.
In particular, we will have two type of steps: compute step corresponding to update of $M$'s state and intermediate steps corresponding to aggregating the max (which in turn is used for symbol selection).
Let $i$ denote the step of $\mathcal{T}_D$ and $g(i)$ denote the step of $M$ being simulated at step $i$ of the decoder.
At each decoding step we want to maintain the current Turing machine state $q^{g(i)}$ and symbol under the $s^{g(i)}$ in $\vy_i$.
For roughly $O(\sqrt{i})$ intermediate steps the state will remain the same, while we aggregate information about relevant past output symbols through sparse attention.
To maintain the same state for intermediate steps, we introduce an extra switching layer (\Cref{sec:appndx-new-layer}).
Finally, at the next compute step we will make the transition to new state $q^{g(i)+1}$, new head movement $m^{g(i)}$, and new output symbol $v^{g(i)}$ to be written. 
Thereby we are able to completely simulate the given Turing machine $M$.
As a result, we can prove the following main theorem:
\begin{theorem}
There exists a sparse attention mechanism using $O(n)$ inner products such that the resulting class of Transformer Networks using this sparse attention mechanism is Turing Complete. 
\end{theorem}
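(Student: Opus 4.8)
The plan is to adapt the encoder--decoder simulation of \citet{Perez19} almost verbatim, changing only the single place where their argument relies on full attention. Following their framework, I would run the decoder autoregressively and, at each decoding step, encode the full configuration of the Turing machine $M$ --- the current state $q^{g(i)}$, the symbol $s^{g(i)}$ under the head, the last written symbol and the head direction --- inside the decoder vector $\vy_i$, using the vector layout already fixed in \Cref{sec:details}. The self-attention $\Attn_D$ of the decoder together with $O$ computes the local transition $\delta(q^{g(i)}, s^{g(i)})$, and the cross-attention $\CrossAttn$ reads the input tape from the encoder. Since the transition logic is purely local, every part of their construction transfers unchanged except for their Lemma~B.4, where full attention is used to address an arbitrary past time step and fetch the tape symbol currently under the head.

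The core of the proof is therefore to re-implement this one retrieval step using only the sparse graph $D$ (which, as for the star/\bigb pattern, contains a global token plus a bounded-width window). The key observation is that locating the symbol under the head reduces to an $\argmax$ over past positions of a suitable scoring function, and that the $\max$ operator is associative. Hence, rather than computing this maximum in a single full-attention sweep, I would spread it over a sequence of \emph{intermediate} decoding steps interleaved between consecutive \emph{compute} steps of $M$. At each intermediate step the decoder, via its sparse neighborhood and the global token, takes the running aggregate and combines it with the $O(1)$ freshly visible entries; after $O(\sqrt{i})$ such steps the aggregate equals the maximum over the entire relevant history, recovering exactly the symbol that Lemma~B.4 produced in one step. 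Formally this mirrors the selective-shift/aggregation technique already used for the universal-approximation proof, where a global token is repeatedly updated to absorb context.

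To make this interleaving well defined I would add the extra \emph{switching} layer announced in \Cref{sec:details}: it detects whether step $i$ is a compute step or an intermediate step and, in the latter case, copies the Turing-machine state forward unchanged, so the configuration $q^{g(i)}, s^{g(i)}$ stays frozen while the max is being aggregated. Only at the following compute step is the fully aggregated symbol used to fire the transition and write the new state, symbol and head move. I would then verify two bookkeeping facts: that the simulated-step map $g(i)$ is monotone and reaches every step of $M$, so the whole run of $M$ is reproduced; and that each decoding step uses only $O(n)$ inner products, giving the claimed linear bound for the entire class.

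The main obstacle I expect is the retrieval decomposition itself: I must choose the scoring function and the range thresholds so that the multi-step $\max$ aggregation over the sparsely visible positions equals the single-step full-attention selection \emph{exactly}, and so that the intermediate steps never perturb the frozen configuration (this is precisely what forces the five additional coordinates recorded in \Cref{sec:details} and the switching layer). Getting the sparse attention pattern, the per-step aggregate, and the causal ``react only to previous tokens'' constraint to all line up --- while keeping the number of inner products linear --- is the delicate part; the remainder of the construction is a direct transcription of \citet{Perez19}.
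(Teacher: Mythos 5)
Your proposal follows essentially the same route as the paper's own proof: you correctly isolate Lemma~B.4 of \citet{Perez19} as the only step that needs replacement, exploit the associativity of $\max$ to spread the tape-symbol retrieval over $O(\sqrt{i})$ intermediate decoding steps interleaved with compute steps, and introduce the same state-freezing switching layer (the paper's Layer~3) so that the Turing machine configuration is preserved while the aggregate is built up. The only cosmetic difference is your description of the decoder's sparse pattern as a global token plus window, whereas the paper uses a triangular-indexed graph in which node $j(j+1)/2+k$ attends to the compute node $k(k+1)/2$ and to its predecessor, but this realizes exactly the running-aggregate mechanism you describe.
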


\subsubsection*{Encoder}
As \citep{Perez19}, we use the same trivial single layer encoder where resulting $\mK^{(e)}$ contains position embedding and $\mV^{(e)}$ contains one-hot symbol representation.

\subsubsection*{Decoder}
\paragraph{Sparse Self-Attention mechanism for Decoder}
In this section, we will consider a particular instance of the sparse graph $D$ at decoder.
We define its edges to be given by the following relations:
$\forall j\in\N_{+}, 1\leq k\leq j+1$,  
\begin{align*}
&\left(\frac{j(j+1)}{2} + k, \frac{k(k+1)}{2} \right) \text{ and }\\ \\
&\left(\frac{j(j+1)}{2} + k, \frac{j(j+1)}{2} + k \right)  \text{ if } k>1 \text{ else } \left(\frac{j(j+1)}{2} + 1, \frac{j(j+1)}{2} \right).
\end{align*}

This graph can be seen as a special case of \bigb where first type 
of edges are realizations of random and second type of edges correspond to locality.
Also note that this graph satisfies the left-to-right constraint of 
decoder, i.e.~no node attends to a node in the future.

\paragraph{Embeddings and positional encodings}
Our construction needs a different positional encoding $\penc_{\Dec} :\N\to\Q^{d}$ for decoder: 
\begin{equation*}
\begin{array}{rcllr}
\penc_{\Dec}(i) & = & [ 
& \sz, \\
&&& \sz, \\
&&& \sz, \\
&&& 1,g(i)+1,\frac{1}{g(i)+1},\frac{1}{(g(i)+1)^2},h(i),0,0,0,0 & ] 
\end{array}
\end{equation*}
where $g(i) = \left\lfloor \frac{-1+\sqrt{1+8i}}{2} \right\rfloor$ and $h(i) = g(i+1) - g(i)$. Note that $h(i)$ reduces to a binary indicator variable $
\mathbf{1}\left\lbrace\frac{-1+\sqrt{1+8i}}{2}=\left\lfloor \frac{-1+\sqrt{1+8i}}{2} \right\rfloor\right\rbrace$.

\subsubsection*{Induction Setup}
We next show how to construct the decoder layers to produce the sequence of outputs $\vy_1,\vy_2,\ldots$,
where $\vy_i$ is given by:
\begin{equation*}
\begin{array}{rcllr}
{\vy}_i & = & [ 
& \oh{q^{g(i)}},\oh{s^{g(i)}},c^{g(i)}, \\
&&& \sz, \\
&&& \vzs,0,\oh{w^{(i)}}, \\
&&& 0,0,0,0,0,u_1^{(i)},u_2^{(i)},u_3^{(i)},u_4^{(i)} & ] 
\end{array}
\end{equation*}
That is, at step $i$ of our sparse decoder $\vy_i$, it will contain the information about the state of the turing machine $M$ at time $g(i)$, the symbol under the head of $M$ at time $g(i)$, and the current location of head of $M$ at time $g(i)$.
We also have a placeholder symbol $w$ and placeholder scalars $u_1,u_2, u_3$, whose role will be clear from our construction.

We consider as the starting vector for the decoder the vector
\begin{equation*}
\begin{array}{rcllr}
{\vy}_1 & = & [ 
& \oh{q_{\text{init}}},\oh{\#},0, \\
&&& \sz, \\
&&& \sz, \\
&&& \sz & ] 
\end{array}
\end{equation*}
We assume that the start head is at $c^{(0)}=0$, the initial state is $q^{(0)}=q_{\text{init}}$, and $s^{(0)}=\#$ as we initialize from clean tape.
We show the correctness of our construction by an inductive argument:
we describe the architecture piece by piece and at the same time will show for every $r\geq 0$ 
, our architecture constructs $\vy_{r+1}$ from the previous vectors 
$(\vy_0,\ldots,\vy_r)$.

Thus, assume that $\vy_1,\ldots,\vy_r$ satisfy the properties stated above. 
Since we are using positional encodings, 
the actual input for the first layer of the decoder is the sequence
\[
\vy_1+\penc_{\Dec}(1),\ \vy_2+\penc_{\Dec}(2),\ \ldots,\ \vy_{r}+\penc_{\Dec}(r).
\]
We denote by $\overline{\vy}_i$ the vector $\vy_i$ plus its positional encoding.
Thus we have $\forall \ 1 \leq i \leq r$ that
\begin{equation*}
\begin{array}{rcllr}
\overline{\vy}_i & = & [ 
& \oh{q^{g(i)}},\oh{s^{g(i)}},c^{g(i)}, \\
&&& \sz, \\
&&& \vzs,0,\oh{w^{(i)}}, \\
&&& 1,g(i)+1,\frac{1}{g(i)+1},\frac{1}{(g(i)+1)^2},h(i),u_1^{(i)},u_2^{(i)},u_3^{(i)},u_4^{(i)} &] 
\end{array}
\end{equation*}

\subsubsection{Layer 1: Simulate Transition Function}

In this layer, we use the cross-attention between encoder and decoder to access the 
input string and  a feed-forward network to simulate the transition function of $M$.
The first self attention in~\Cref{eq:dec-self} is not used in this layer and 
we just produce the identity.
This identity function is achieved by setting all queries, keys, values to be 0 
everywhere plus the residual connection.
Thus, we have  ${\vp}^1_i=\overline{\vy}_i$.

Since $\vp^1_i$ is of the form $[\lu,\ldots,\lu,1,g(i)+1,\lu,\ldots,\lu]$, we know
by Lemma B.1 of \citet{Perez19} that if we use $\vp^1_i$ to attend over the encoder we obtain
\begin{equation*}\label{eq:first-layer2}
\begin{array}{rcllr}
\CrossAttn(\vp^1_i,\mK^{\textbf{e}},\mV^{\textbf{e}}) & = & [ 
& \sz, \\
&&& \sz, \\
&&& \oh{\alpha^{g(i)+1}},\beta^{g(i)+1},\vzs, \\
&&& \sz & ] 
\end{array}
\end{equation*}
where $\alpha$ and $\beta$ are as defined in Eq. (21) of \citep{Perez19}.
Thus in~\Cref{eq:dec-ext} we finally produce the vector $\va^1_i$ given by
\begin{equation}\label{eq:ai_1}
\begin{array}{rcllr}
\va^1_{i} & = &&
\CrossAttn(\vp^1_i,\mK^{\textbf{e}},\mV^{\textbf{e}}) + \vp^1_i \\ 
& = & [ 
& \oh{q^{g(i)}},\oh{s^{g(i)}},c^{g(i)}, \\
&&& \sz, \\
&&& \oh{\alpha^{g(i)+1}},\beta^{g(i)+1},\oh{w^{(i)}}, \\
&&& 1,g(i)+1,\frac{1}{g(i)+1},\frac{1}{(g(i)+1)^2},h(i),u_1^{(i)},u_2^{(i)},u_3^{(i)},u_4^{(i)} & ] 
\end{array}
\end{equation}

As the final piece of the first decoder layer 
we use a function $O_1(\cdot)$ (\Cref{eq:dec-ff}) that satisfies the following lemma.
\begin{lemma}[Lemma B.2 \citep{Perez19}]\label{lem:M}
There exists a two-layer feed-forward network $O_1:\Q^d\to\Q^d$ such that with input vector $\va^1_{i}$ (\Cref{eq:ai_1}) produces as output
\begin{equation*}
\begin{array}{rcllr}
O_1(\va^1_i) & = & [ 
& \sz, \\
&&& \oh{q^{g(i)+1}},\oh{v^{g(i)}},m^{g(i)},0,0,0,0 \\
&&& \sz, \\
&&& \sz & ] 
\end{array}
\end{equation*}
\end{lemma}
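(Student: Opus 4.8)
The plan is to realize $O_1$ as a lookup table for the finite transition function $\delta : Q \times \Sigma \to Q \times \Sigma \times \{\text{head moves}\}$, exploiting the fact that the only relevant inputs already appear in one-hot form inside $\va^1_i$. Indeed, from~\Cref{eq:ai_1} the first group of $\va^1_i$ contains $\oh{q^{g(i)}}$ and $\oh{s^{g(i)}}$, and the quantities we must output, namely $\oh{q^{g(i)+1}}$, $\oh{v^{g(i)}}$, and $m^{g(i)}$, are by definition the components of $\delta(q^{g(i)}, s^{g(i)})$. Since $Q$ and $\Sigma$ are finite, $\delta$ is specified by a table of $|Q|\cdot|\Sigma|$ entries, and a two-layer $\relu$ network suffices to evaluate any such finite table once its argument is encoded as a pair of one-hot vectors.

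First I would build the hidden layer as a bank of $|Q|\cdot|\Sigma|$ selector neurons, one per pair $(q,s)$, each computing $\relu\!\left([\oh{q^{g(i)}}]_q + [\oh{s^{g(i)}}]_s - 1\right)$. Because the inputs are genuine one-hot vectors (guaranteed inductively by the construction of the $\vy_i$), the sum $[\oh{q^{g(i)}}]_q + [\oh{s^{g(i)}}]_s$ equals $2$ exactly when the current state is $q$ and the current head symbol is $s$, and is at most $1$ otherwise; after subtracting $1$ and applying $\relu$, precisely one selector fires with value $1$ while all others are $0$.

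The second (output) layer then reads off the table: for each pair $(q,s)$ with $\delta(q,s) = (q',v,m)$, I would route the corresponding selector, with weight $1$, into the coordinates of the second group that hold $\oh{q'}$, $\oh{v}$, and the scalar $m$, and with weight $0$ into every other coordinate. Since exactly one selector is active, the output is $[\vzero;\ \oh{q^{g(i)+1}},\oh{v^{g(i)}},m^{g(i)},0,0,0,0;\ \vzero;\ \vzero]$, matching the claim; in particular all coordinates outside the second group, including the five additional dimensions we introduced relative to \citet{Perez19} (which live in the fourth group), receive zero weight and are set to $0$.

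I expect the only real subtlety, as opposed to a genuine obstacle, to be bookkeeping: verifying that the one-hot invariant on $\oh{q^{g(i)}}$ and $\oh{s^{g(i)}}$ is preserved by the inductive hypothesis, so that the selectors act as exact indicators, and confirming that the extra dimensions contribute nothing to the output. Because $\delta$ depends only on the first-group coordinates, which occupy exactly the same positions as in the vector layout of \citet{Perez19}, the feed-forward construction of their Lemma~B.2 carries over essentially verbatim, and the remaining work reduces to the routine specification of $W_1, b_1, W_2, b_2$.
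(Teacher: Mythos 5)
Your proposal is correct and coincides with the intended argument: the paper does not actually prove this lemma but imports it verbatim from \citet{Perez19} (their Lemma B.2), and the proof there is exactly your lookup-table construction — a hidden bank of selector neurons $\relu\left([\oh{q^{g(i)}}]_q + [\oh{s^{g(i)}}]_s - 1\right)$, one per pair $(q,s)$, followed by a linear output layer that reads off the finite transition table $\delta$ and writes the result into the second coordinate group, leaving all other coordinates (including the extra dimensions of this paper's layout) at zero for the residual connection to preserve. The only cosmetic fix needed is that the selector for $(q,s)$ with $\delta(q,s)=(q',v,m)$ must feed the head-direction coordinate with weight equal to the value $m$ (which may be negative), rather than with weight $1$; with that adjustment your specification of $W_1, b_1, W_2, b_2$ is complete.
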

That is, function $O_1(\cdot)$ simulates transition $\delta(q^{g(i)},s^{g(i)})$ 
to construct $\oh{q^{g(i)+1}}$, $\oh{v^{g(i)}}$, and $m^{g(i)}$
besides some other linear transformations.

Thus, finally the output of the first decoder layer is
\begin{equation*}
\begin{array}{rcllr}
\vz^1_i = O_1(\va^1_i) + \va^1_i & = & [ 
& \oh{q^{g(i)}},\oh{s^{g(i)}},c^{g(i)}, \\
&&& \oh{q^{g(i)+1}},\oh{v^{g(i)}},m^{g(i)},0,0,0,0, \\
&&& \oh{\alpha^{g(i)+1}},\beta^{g(i)+1},\oh{w^{(i)}}, \\
&&& 1,g(i)+1,\frac{1}{g(i)+1},\frac{1}{(g(i)+1)^2},h(i),u_1^{(i)},u_2^{(i)},u_3^{(i)},u_4^{(i)} & ] 
\end{array}
\end{equation*}

\subsubsection{Layer 2: Finding Head Node}
In this layer, we only use the feed-forward network to evaluate the next location of the head.
The self-attention and cross-attention are set to be the identity function, so $\va_i^2=\vp_i^2=\vz_i^1$.
Recall that $c^{g(i)}$ is the cell to which $M$ is pointing to at time $g(i)$, and that it satisfies the following recursion $c^{g(i)+1}=c^{g(i)}+m^{g(i)}$, which can be expanded to see that
that $c^{g(i)+1}=m^{(0)}+m^{(1)}+\cdots + m^{g(i)}$.
Its not difficult to see that a two layer network with non-linearity can compute $c^{g(i)+1}/(g(i)+1)$ and $c^{g(i)}/(g(i)+1)$ from $c^{g(i)}$, $m^{g(i)}$, and $1/(g(i)+1)$ using the relation $c^{g(i)+1}=c^{g(i)}+m^{g(i)}$.
At the  end of layer 2, we obtain 
\begin{equation*}
\begin{array}{rcllr}
\vz^2_i  \ = \ O_2(\va^2_i) + \va^2_i & = & [ 
& \oh{q^{g(i)}},\oh{s^{g(i)}},c^{g(i)}, \\
&&& \oh{q^{g(i)+1}},\oh{v^{g(i)}},c^{g(i)+1},\frac{1}{g(i)+1},\frac{1}{(g(i)+1)^2},\frac{c^{g(i)+1}}{g(i)+1},\frac{c^{g(i)}}{g(i)+1}, \\
&&& \oh{\alpha^{g(i)+1}},\beta^{g(i)+1},\oh{w^{(i)}}, \\
&&& 1,g(i)+1,\frac{1}{g(i)+1},\frac{1}{(g(i)+1)^2},h(i),u_1^{(i)},u_2^{(i)},u_3^{(i)},u_4^{(i)} & ] 
\end{array}
\end{equation*}

\subsubsection{Layer 3: Distinguishing Node Type}
\label{sec:appndx-new-layer}
This is an additional layer (not present in the work of \cite{Perez19}), where we propagate computations 
in our sparse graph. 
In particular, we will use this layer to ``compute'' or accumulate state in intermediate nodes. 
We make this clear below.
The self-attention and cross-attention are all set to be the identity function, so $\va_i^3=\vp_i^3=\vz_i^2$.
In this layer, we only use the dense attention layers to select the newly computed states or to continue 
with previous states.
Using idea similar to Lemma B.6 of \citep{Perez19}, we can construct a dense network such that
\[
O([\vx,\vy,\vz,b])) = 
\begin{cases} 
[\vzero,\vzero,\vzero,0] & \text{if }b=1, \\
[\vzero,\vz-\vy,-\vz,0] & \text{if }b=0.
\end{cases} 
\]
The negatives are generated to offset results from skip connection. 
We utilize such network to switch Turing machine state and position embedding for intermediate steps to the values received from previous time step and do nothing for compute nodes. 
We use $h(i)$ as the flipping bit $b$.
Thus, at end of layer 3, we obtain
\begin{equation*}
\begin{array}{rcllr}
\vz^3_i \ = \ O_3(\va^3_i) + \va^3_i & = & [ 
& \sz, \\
&&& \oh{\hat{q}^{(i)}},\oh{\hat{v}^{(i)}},\hat{c}^{(i)},\frac{1}{g(i)+1},\frac{1}{(g(i)+1)^2},\frac{c^{g(i)+1}}{g(i)+1},\hat{u}_4^{(i)}, \\
&&& \oh{\hat{\alpha}^{(i)}},\hat{\beta}^{(i)},\vzs, \\
&&& 1,\hat{u}_1^{(i)},\hat{u}_2^{(i)},\hat{u}_3^{(i)},h(i),0,0,0,0 & ] 
\end{array}
\end{equation*}
where we used $h(i)$ for selecting old states. In particular,
\vspace{-3mm}
\begin{itemize}[leftmargin=6mm, itemsep=2mm, partopsep=0mm,parsep=0mm]
    \item We copy the input state and head position as is for intermediate nodes. We do not need to transition to next Turing machine states in these nodes.
\end{itemize}

\begin{table}[h]
    \small
    \vspace{-3mm}
    \hspace*{12mm}
    \begin{tabular}{lclcl}
    $
    \hat{q}^{(i)} = 
    \begin{cases} 
    q^{g(i)+1} & \text{if }h(i)=1 \\
    q^{g(i)} & \text{if }h(i)=0
    \end{cases} 
    $ ,
    &
    $
    \hat{v}^{(i)} = 
    \begin{cases} 
    v^{g(i)} & \text{if }h(i)=1 \\
    w^{(i)} & \text{if }h(i)=0
    \end{cases} 
    $ ,
    &
    $
    \hat{c}^{(i)} = 
    \begin{cases} 
    c^{g(i)+1} & \text{if }h(i)=1 \\
    c^{g(i)} & \text{if }h(i)=0
    \end{cases} 
    $ .
    \end{tabular}
    \vspace{-3mm}
\end{table}

\begin{itemize}[leftmargin=6mm, itemsep=2mm, partopsep=0mm,parsep=0mm]
    \item To preserve the symbol under the head for intermediate nodes, we copy the previous symbol to $\alpha$ location and set $\beta=g(i)+1$, as the symbol at $\alpha$ location will be copied as the symbol under head for next transformer step by the final transformation layer if $\beta=g(i)+1$. Thus, we correctly preserve the previous symbol under head as Turing machine does not transition these nodes. For compute nodes, things happen as usual.
\end{itemize}
\begin{table}[h]
    \small
    \vspace{-3mm}
    \hspace*{12mm}
    \begin{tabular}{lclcl}
    $
    \hat{\alpha}^{(i)} = 
    \begin{cases} 
    \alpha^{g(i)+1} & \text{if }h(i)=1 \\
    s^{g(i)} & \text{if }h(i)=0
    \end{cases} 
    $ ,
    &&
    $
    \hat{\beta}^{(i)} = 
    \begin{cases} 
    \beta^{g(i)+1} & \text{if }h(i)=1 \\
    g(i)+1 & \text{if }h(i)=0
    \end{cases} 
    $ .
    \end{tabular}
    \vspace{-3mm}
\end{table}
\begin{itemize}[leftmargin=6mm, itemsep=3mm, partopsep=0mm,parsep=0mm]
    \item Finally for the intermediate nodes, we copy the position embedding corresponding to current best symbol $w$, which is stored in $u_1,u_2,u_3$. For compute node, we let the position embedding correspond to current Turing machine step.
\end{itemize}
\vspace{-3mm}
\begin{table}[ht]
    \small
    \hspace*{12mm}
    \begin{tabular}{lclcl}
    $
    \hat{u}_1^{(i)} = 
    \begin{cases} 
    g(i)+1 & \text{if }h(i)=1 \\
    u_1^{(i)} & \text{if }h(i)=0
    \end{cases} 
    $ ,
    &&
    $
    \hat{u}_2^{(i)} = 
    \begin{cases} 
    \frac{1}{(g(i)+1)} & \text{if }h(i)=1 \\
    u_2^{(i)} & \text{if }h(i)=0
    \end{cases}  
    $ ,
    \\\\
    $
    \hat{u}_3^{(i)} = 
    \begin{cases} 
    \frac{1}{(g(i)+1)^2} & \text{if }h(i)=1 \\
    u_3^{(i)} & \text{if }h(i)=0
    \end{cases} 
    $ ,
    &&
    $
    \hat{u}_4^{(i)} = 
    \begin{cases} 
    \frac{c^{g(i)}}{g(i)+1} & \text{if }h(i)=1 \\
    u_4^{(i)} & \text{if }h(i)=0
    \end{cases} 
    $ .
    \end{tabular}
\end{table}

For further simplification note that $g(i+1) = g(i) $ if $h(i) = 0$ else $ g(i) + 1$ when $h(i) = 1$. With this fact, we can conclude that $\hat{q}^{(i)}=q^{g(i+1)}$ and $\hat{c}^{(i)}=c^{g(i+1)}$. Thus, we can write,
\begin{equation*}
\begin{array}{rcllr}
\vz^3_i \ & = & [ 
& \sz, \\
&&& \oh{{q}^{g(i+1)}},\oh{\hat{v}^{(i)}},{c}^{g(i+1)},\frac{1}{g(i)+1},\frac{1}{(g(i)+1)^2},\frac{c^{g(i)+1}}{g(i)+1},\hat{u}_4^{(i)}, \\
&&& \oh{\hat{\alpha}^{(i)}},\hat{\beta}^{(i)},\vzs, \\
&&& 1,\hat{u}_1^{(i)},\hat{u}_2^{(i)},\hat{u}_3^{(i)},h(i),0,0,0,0 & ] 
\end{array}
\end{equation*}

\subsubsection{Layer 4: Finding next symbol on tape}
To find the symbol on tape under next head position $c^{g(i)+1}$, we try to find what was written last at the location $c^{g(i)+1}$.
To facilitate this, following \citep{Perez19}, we define $\ell(j)$ to be the last time (previous to $j$) in which $M$ was pointing to position $c^{(j)}$,
or it is $j-1$ if this is the first time that $M$ is pointing to $c^{(j)}$.
Recall $j$ is the Turing machine step counter, which is different from sparse transformer step $i$. \citep{Perez19} could utilize full attention mechanism to find $v^{\ell(j+1)}$ at one go, but we have to do it over multiple steps owing to our sparse attention mechanism.

We use similar query, key, value functions as used for full attention by \citep{Perez19} $\forall i $:
\begin{equation*}
\begin{array}{rcllr}
Q_4(\vz^3_i)  & = & [ 
& \sz \\
&&& \sz, \\
&&& \sz, \\
&&& 0, \frac{c^{g(i)+1}}{g(i)+1}, \frac{1}{g(i)+1}, \frac{1}{3(g(i)+1)^2}, 0,0,0,0,0 & ] \\
\end{array}
\end{equation*}
\begin{equation*}
\begin{array}{rcllr}
K_4(\vz^3_i)  & = & [ 
& \sz \\
&&& \sz, \\
&&& \sz, \\
&&& 0, \hat{u}_2^{(i)}, \hat{u}_4^{(i)}, \hat{u}_3^{(i)},0,0,0,0,0 & ] \\
V_4(\vz^3_i)  & = & [ 
& \sz, \\
&&& \sz, \\
&&& \vzs,0,\oh{\hat{v}^{(i)}}, \\
&&& 0,0,0,0,0, \hat{u}_1^{(i)}, \hat{u}_2^{(i)}, \hat{u}_3^{(i)} , \hat{u}_4^{(i)} & ] 
\end{array}
\end{equation*}
It is clear that the three functions are linear transformations and thus they can be defined by feed-forward networks. Notice that the query vector is always formed using current time step position embedding, whereas key and value vectors are formed using copied over entries for intermediate nodes and using current entries only for compute node.

\citet{Perez19} find the desired $v^{l(j+1)}$ as $v^{m(j)}$ using full attention, where
\begin{equation*}
    m(t) = \argmin_{m\in\{0,...,t\}} \chi_t^j = \argmin_{m\in\{0,...,t\}} | \langle Q_4(\vz_j^3), K_4(\vz_m^3) \rangle |
\end{equation*}
Note the minimization is only over Turing machine steps, i.e. over compute nodes in our case.
We show below that we can estimates $m(j)$ by parts using sparse attention mechanism. The main idea is just to notice that minimization problem $\min_{m\in\{0,...,t\}} \chi_t^j$ can be expressed as $\min\{ \cdots \min\{ \min\{\chi^j_0, \chi^j_1\}, \chi^j_2 \}, ..., \chi^j_t \} $ by the associativity property.

By definition of our graph $D$, at every intermediate node $i$ of the form $j(j+1)/2+k$, i.e. where $k>0$, $g(i)=j$ and $h(i)=0$, we will attend over node $k(k+1)/2$ and best till now copied from $i-1$.
The node $k(k+1)/2$ is never an intermediate node as $h(k(k+1)/2)=1$ for all $k$ and in fact corresponds to Turing machine's step $k$.
This will help us select the key and value corresponding to min between node $k(k+1)/2$ and $i-1$.
In other words, at node $i$ of the form $j(j+1)/2+k$ we would have evaluated $m(k)$ and corresponding value selected:
\begin{equation*}
    w^{(j(j+1)/2+k+1)} = \hat{v}^{m(k-1)}
\end{equation*}
and similarly for $u$'s.
So after going through all the intermediate nodes, finally at the next compute node, i.e. when $k=j+1$, we will obtain the minimum value over all of $0,1,...,j$.
This implies at a compute node will be able to recover $\ell(g(i)+1)$ and its corresponding value as shown in Lemma B.4 of \citep{Perez19}.
Then we have that $\vp^4_i$ is given by
\begin{equation}
\begin{array}{rcllr}
\vp^4_i & = && \Attn_D (\mZ_i^3) + \vz^3_i \\
& = & [ 
& \sz, \\
&&& \oh{q^{g(i+1)}},\oh{\hat{v}^{(i)}},c^{g(i+1)},0,\frac{c^{g(i)+1}}{g(i)+1},\hat{u}_4^{(i)}, \\
&&& \oh{\hat{\alpha}^{(i)}},\hat{\beta}^{(i)},\oh{w^{(i+1)}}, \\
&&& 1,\hat{u}_1^{(i)},\hat{u}_2^{(i)},\hat{u}_3^{(i)},h(i),{u}_1^{(i+1)},{u}_2^{(i+1)},{u}_3^{(i+1)},{u}_4^{(i+1)} & ] 
\end{array}
\end{equation}
The cross-attention and feed-forward network are set to be identity, so $\vz_i^4=\va_i^4=\vp_i^4$.

\subsubsection{Final transformation}
We finish our construction by using the final transformation function $F(\cdot)$ from the corresponding lemma from~\citet{Perez19}, with a slight modification.
\begin{lemma}[Lemma B.5 \citep{Perez19}]
\label{lem:yr+1}
There exists a function $F:\Q^d\to \Q^d$ defined by a feed-forward network such that
\begin{equation*}
\begin{array}{rcllr}
F(\vz^4_r) & = & [ 
& \oh{q^{g(r+1)}},\oh{s^{g(r+1))}},c^{g(r+1)}, \\
&&& \sz, \\
&&& \vzs,0,\oh{w^{(r+1)}}, \\
&&& 0,0,0,0,0,{u}_1^{(r+1)}, {u}_2^{(r+1)}, {u}_3^{(r+1)}, {u}_4^{(r+1)} ] \\ 
& = & & \vy_{r+1}
\end{array}
\end{equation*}
\end{lemma}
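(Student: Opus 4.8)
The plan is to realize the map $F$ of this final lemma (the sparse analogue of Lemma~B.5 of \citet{Perez19}) as a constant-size two-layer feed-forward network with $\relu$ activations that performs two jobs on the per-token vector $\vz^4_r$: recovering the symbol under the head at the next step, $\oh{s^{g(r+1)}}$, and reformatting the remaining coordinates into the layout of $\vy_{r+1}$ while cancelling the skip connection. Because $F$ is purely token-wise, every operation below is local to $\vz^4_r$, which by the previous layers already carries the hatted quantities $\oh{\hat{\alpha}^{(r)}}$, $\hat{\beta}^{(r)}$ and the accumulated $\oh{w^{(r+1)}}$.

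First I would implement the symbol-selection gadget. The symbol at the new head position $c^{g(r+1)}$ is either the fresh input symbol $\hat{\alpha}^{(r)}$ (when that cell is read for the first time) or the last symbol written there, which Layer~4 has accumulated into $w^{(r+1)}$ through the associative max. As in \citet{Perez19}, the choice is governed by $\hat{\beta}^{(r)}$: the function $F$ copies $\oh{\hat{\alpha}^{(r)}}$ as the head symbol exactly when $\hat{\beta}^{(r)}=g(r)+1$, and otherwise uses $\oh{w^{(r+1)}}$. Since $\hat{\beta}^{(r)}$ and $g(r)+1$ range over a fixed rational grid, the equality test $\mathbf{1}[\hat{\beta}^{(r)}=g(r)+1]$ is an exact $\relu$ indicator, and the resulting bit can drive the same bit-gated selection between one-hot vectors that Layer~3 employed (Lemma~B.6 of \citet{Perez19}).

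The crucial point -- and the step I expect to be the main obstacle -- is that this single selection formula must be correct simultaneously for compute nodes and for intermediate nodes, and this is precisely what the switching Layer~3 was engineered to guarantee. At an intermediate node ($h(r)=0$) we have $g(r+1)=g(r)$, and Layer~3 has set $\hat{\alpha}^{(r)}=s^{g(r)}$ and $\hat{\beta}^{(r)}=g(r)+1$, so the gate fires and $F$ copies the preserved symbol, yielding $s^{g(r+1)}=s^{g(r)}$ as required. At a compute node ($h(r)=1$) the hatted entries are the genuine cross-attention values of \citet{Perez19}, and the same gate selects $\hat{\alpha}^{(r)}$ when $c^{g(r+1)}$ is a fresh cell and $\oh{w^{(r+1)}}$ when that cell has been written before. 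I would therefore verify this case split in detail, checking that the rational values of $\hat{\beta}^{(r)}$ are separated on the grid finely enough for the $\relu$ indicator to be exact, and that the gate never fires spuriously in either regime.

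Finally, the reformatting is linear. In $\vz^4_r$ the state block $\oh{q^{g(r+1)}}$ and head position $c^{g(r+1)}$ already sit in the working (second) group, so $F$ need only route them, together with the freshly computed $\oh{s^{g(r+1)}}$, into the leading group; the retrieved $\oh{w^{(r+1)}}$ and the carried registers $u_1^{(r+1)},\dots,u_4^{(r+1)}$ are already in their target slots and pass through unchanged; and every remaining working coordinate must be zeroed. As in Layer~3, I would have $F$ emit the appropriate negative entries so that adding the decoder's skip connection cancels the stale values, leaving exactly $\vy_{r+1}$. Together with the selection gadget this gives $F(\vz^4_r)=\vy_{r+1}$ and closes the inductive step, establishing the decoder construction that underlies the Turing-completeness claim.
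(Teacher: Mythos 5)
Your proposal is correct and takes essentially the same approach as the paper: the paper's own proof of this lemma is merely an invocation of Lemma~B.5 of \citet{Perez19} together with the stated modification that $w$ and $u_1,\dots,u_4$ pass through, relying---exactly as you do---on the Layer~3 switching so that the single gate ``copy $\oh{\hat{\alpha}^{(r)}}$ iff $\hat{\beta}^{(r)}=g(r)+1$, else use $\oh{w^{(r+1)}}$'' is simultaneously valid at compute and intermediate nodes. Your explicit $\relu$ selection gadget and linear reformatting simply fill in details the paper leaves to the citation; the only (harmless) deviation is that you realize $F$ by emitting negatives to cancel a residual connection, whereas the lemma's $F$ outputs $\vy_{r+1}$ directly.
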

The modification is to let $w, u_1, u_2, u_3$ to pass through.
This yields the desired input to transformer at next time step for both intermediate and compute node, thereby concluding our induction.

\newpage
\section{Limitations} 
\label{sec:apndx-limit}

Finally, we show that sparse attention mechanisms can not universally replace dense attention mechanisms, i.e.~there is no free lunch.
We demonstrate a natural task which can be solved by the full attention mechanism in $O(1)$-layers.
However, under standard complexity theoretic assumptions, we show that this problem will require 
$\tilde{\Omega}(n)$-layers for any sparse attention layers with $\tilde{O}(n)$ edges (not just \bigb).
(We use the standard notation $\tilde{\Omega}(n)$  to hide the dependence on poly-logarithmic factors. )

We consider the simple problem of finding the furthest vector for each vector 
in the given sequence of length $n$ and dimension $d \in \Omega(\log^2n)$. The assumption on 
the dimension is mild , as in many situations the dimension $d =768$ is actually comparable to 
the number of $n$. 

\begin{task}
Given $n$ unit vectors $\{u_1,\dots,u_n\}$, each in $\R^d$ where $d=\Theta(\log^2n)$, 
compute $f(u_1,\dots,u_n) \to (u_{1^*}, \dots, u_{n^*})$  where for a fixed $j \in [n]$, we define 
$ j^* = \argmax_{k} \|u_k - u_j\|_2^2$.
\end{task}

Finding vectors that are furthest apart boils down to minimizing inner product search 
in case of unit vectors. For a full-attention mechanism with appropriate query and keys, 
this task is very easy as we can evaluate all pair-wise inner products. 

The impossibility for sparse-attention follows from hardness results stemming from  
Orthogonal Vector Conjecture (OVC) \citep{abboud2015tight,abboud2014consequences,williams2005new,backurs2015edit}, which is a widely used assumption in fine-grained complexity. Informally, it states that 
one cannot determine if the minimum inner product among $n$ Boolean vectors is $0$ in 
subquadratic time.  
\begin{conjecture}[Orthogonal Vectors Conjecture] \label{conj:ovc}
For every $\epsilon>0$, there is a $c \geq 1$ such that given $n$ Boolean vectors in 
$d$ dimension, cannot determine if there is a pair of orthogonal vectors in $O(n^{2-\epsilon})$
time on instances with $d  \geq  c \log n$. 
\end{conjecture}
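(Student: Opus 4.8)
The Orthogonal Vectors Conjecture is a hardness \emph{hypothesis} rather than a theorem, so it cannot be established unconditionally; the natural plan is to derive it from a still more fundamental assumption, namely the Strong Exponential Time Hypothesis (SETH), following the classical reduction of \citet{williams2005new}. Concretely, I would prove the contrapositive: I would show that any algorithm solving Orthogonal Vectors (OV) on $n$ Boolean vectors of dimension $d = c\log n$ in time $O(n^{2-\eps})$ (for a fixed $\eps>0$ and \emph{every} constant $c$) would refute SETH by yielding a $2^{(1-\delta)N}$-time algorithm for CNF-SAT for some $\delta > 0$ and arbitrarily large clause width.

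First I would set up the split-and-list reduction. Given a $k$-CNF formula $\phi$ on $N$ variables with clause set $C$, partition the variables into two halves $A$ and $B$ of size $N/2$ each. Enumerating all $2^{N/2}$ partial assignments to $A$, I would associate to each partial assignment $\alpha$ a Boolean vector $\bm{a}_\alpha \in \set{0,1}^{|C|}$ whose $\ell$-th coordinate is $0$ precisely when $\alpha$ already satisfies clause $\ell$ and $1$ otherwise; I would do the same for $B$ to obtain vectors $\bm{b}_\beta$. The key observation is that a joint assignment $(\alpha,\beta)$ satisfies $\phi$ exactly when every clause is satisfied by at least one side, which is equivalent to $\bkt{\bm{a}_\alpha}{\bm{b}_\beta} = 0$. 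Hence $\phi$ is satisfiable if and only if the resulting collection of $n = 2^{N/2+1}$ vectors contains an orthogonal pair, and a subquadratic OV solver would decide satisfiability in time $O(n^{2-\eps}) = O(2^{(1-\eps/2)N})$, contradicting SETH.

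The hard part will be controlling the dimension: the naive construction produces vectors of dimension $|C|$, which for general formulas is polynomial in $N$ and therefore only $\mathrm{poly}(\log n)$, not the $O(\log n)$ demanded in the conjecture. To close this gap I would invoke the Sparsification Lemma of Impagliazzo, Paturi and Zane, which lets one replace $\phi$ by a disjunction of at most $2^{\eps' N}$ many $k$-CNF formulas, each having only $O(N)$ clauses, while preserving satisfiability. Running the reduction on each sparse sub-formula yields vectors of dimension $|C| = O(N) = O(\log n)$, matching the conjecture's regime, and the extra $2^{\eps' N}$ factor multiplies the total running time to $2^{(1-\eps/2+\eps')N}$, which is still $2^{(1-\delta)N}$ for $\delta = \eps/2 - \eps' > 0$ once $\eps'$ is chosen small enough. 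Carefully bookkeeping this blow-up against the claimed speed-up, and handling the dependence of the dimension constant $c$ on the clause width $k$ --- since refuting SETH requires driving $k \to \infty$ while keeping the OV savings $\eps$ fixed --- is where the main technical care is required; the combinatorial core, that orthogonality encodes the absence of a commonly unsatisfied clause, is immediate once the encoding is fixed.
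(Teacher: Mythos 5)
First, note what the paper actually does with this statement: it does \emph{not} prove it. The Orthogonal Vectors Conjecture is invoked as a standing assumption from fine-grained complexity, with citations to \citet{williams2005new} and others, and is used only as the hypothesis from which the $\tilde{\Omega}(n^{1-o(1)})$-layer lower bound for sparse attention is derived in~\Cref{sec:apndx-limit}. So there is no in-paper proof to match; your decision to instead derive OVC conditionally from SETH is the right reading of the situation, and your sketch is essentially the standard argument of \citet{williams2005new}: split-and-list to encode $k$-CNF satisfiability as a bipartite orthogonality question on $2^{N/2}$ vectors per side, then the Sparsification Lemma of Impagliazzo, Paturi and Zane to bring the dimension down from $|C| = \mathrm{poly}(N)$ to $O(N) = O(\log n)$, at a multiplicative cost of $2^{\eps' N}$ subformulas absorbed into the savings $\delta = \eps/2 - \eps'$. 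Your handling of the quantifiers is also correct: negating the conjecture gives an $O(n^{2-\eps})$ algorithm for a \emph{fixed} $\eps$ and \emph{every} constant $c$, which is exactly what lets the dimension constant grow with the clause width $k$ while the savings stay fixed, so SETH is refuted.

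One genuine (though standard and fixable) gap: you fold both sides into a single set of $n = 2^{N/2+1}$ vectors, but the equivalence ``$\phi$ satisfiable $\iff$ an orthogonal pair exists'' is then false as stated. Two vectors $\bm{a}_\alpha, \bm{a}_{\alpha'}$ arising from the \emph{same} half can be orthogonal --- every clause satisfied by $\alpha$ or by $\alpha'$ --- without any single assignment satisfying $\phi$, since $\alpha$ and $\alpha'$ assign the same variables inconsistently. You must force orthogonal pairs to cross sides, e.g.\ by appending two guard coordinates, $(1,0)$ to every $A$-side vector and $(0,1)$ to every $B$-side vector, so that monochromatic inner products are at least $1$ while cross inner products are unchanged; this does not affect the dimension asymptotics. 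With that repair your argument goes through, with the caveat you yourself state clearly: this establishes OVC only conditionally on SETH, which is all that is possible and is consistent with how the paper (which assumes OVC outright) uses it.
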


Using~\cref{conj:ovc}, we show a reduction to show that a 
transformer $g \in \mathcal{T}_D^{H=O(d),m=O(d),q=O(d)}$ for any sparse directed graph $D$ 
which completes Task $1$  must require a superlinear number of layers. 

\begin{proposition}
    There exists a single layer full-attention network $g\in\mathcal{T}^{H=1,m=2d,q=0}$ that can 
    evaluate Task 1,  i.e. $g(u_1,...,u_n) = [u_{1^*},\dots, u_{n^*}]$, but for any sparse-attention network in $\mathcal{T}_D^{H=O(d),m=O(d),q=O(d)}$ with
    graph $D$ having $\tilde{O}(n)$ edges (i.e.~inner product evaluations), would require $\tilde{\Omega}(n^{1-o(1)})$ layers.
\end{proposition}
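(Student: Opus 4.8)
The plan is to prove the two halves separately, with the lower bound being the substantive part.

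\textbf{Easy direction (full attention suffices).} For unit vectors $\norm{u_k - u_j}_2^2 = 2 - 2\bkt{u_j}{u_k}$, so $j^* = \argmax_k \norm{u_k-u_j}_2^2 = \argmin_k \bkt{u_j}{u_k}$. I would exhibit a single head ($H=1$) whose query and key maps satisfy $Q(u_j)K(u_k)^T = -\bkt{u_j}{u_k}$ --- e.g.\ choosing $W_Q,W_K\in\R^{d\times 2d}$ with $W_Q W_K^T = -I_d$, the extra $d$ coordinates giving slack to sharpen scores and break ties --- together with a value map $V=\mathrm{id}$. Under the hardmax scoring $\sigma$ the head concentrates on $k=j^*$ and reads off $u_{j^*}$; with the trivial output network ($q=0$) this is what the layer produces. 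This direction is routine, and I would only verify the score/selection bookkeeping (including the harmless residual term).

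\textbf{Hard direction (lower bound via OVC).} The strategy is a fine-grained reduction: I assume some $g\in\mathcal{T}_D^{H=O(d),m=O(d),q=O(d)}$ over a graph $D$ with $\tilde{O}(n)$ edges and $L$ layers evaluates Task~1 exactly, and show that $L=O(n^{1-\epsilon})$ would refute~\Cref{conj:ovc}. Given an OVP instance of $n$ Boolean vectors in dimension $c\log n$, I first discard any zero vector (which is orthogonal to everything, settling the instance in $\tilde{O}(n)$ time), pad the remaining vectors with zeros to dimension $\Theta(\log^2 n)$ (preserving all inner products, hence orthogonality, and matching the dimension required by the Task), and normalize to unit vectors $u_1,\dots,u_n$ --- an $\tilde{O}(n)$ preprocessing step. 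Because each $u_i$ has non-negative entries, $\bkt{u_i}{u_j}\ge 0$ for all $i,j$, with equality iff the original pair is orthogonal. Running $g$ returns $u_{j^*}$ for every $j$; since the closest vector to $u_j$ is $u_j$ itself, the furthest index satisfies $j^*\neq j$, so $\min_j \bkt{u_j}{u_{j^*}} = \min_{i\neq j}\bkt{u_i}{u_j}$ is exactly the global minimum inner product over distinct pairs. Computing these $n$ inner products and testing whether the minimum is $0$ costs a further $\tilde{O}(n)$ time and decides OVP.

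The crux is the running-time accounting. Under the standard bounded-precision/word-RAM assumption already invoked for the Turing-completeness result, I would argue that one layer of $g$ costs $\tilde{O}(n)$: there are $\tilde{O}(n)$ arcs, each contributing $O(H)=O(d)$ score inner products of length $m=O(d)$ plus the analogous value aggregations, and each of the $n$ tokens passes through a feed-forward block of width $q=O(d)$; since $d=\Theta(\log^2 n)$ is polylogarithmic, every $d^2$ factor is absorbed into $\tilde{O}(\cdot)$. Hence evaluating all $L$ layers costs $\tilde{O}(nL)$, and with the $\tilde{O}(n)$ reduction overhead the whole procedure solves OVP in $\tilde{O}(nL)$ time. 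If $L=O(n^{1-\epsilon})$ for some constant $\epsilon>0$, this is $\tilde{O}(n^{2-\epsilon})=O(n^{2-\epsilon/2})$, contradicting~\Cref{conj:ovc}; therefore $L=\tilde{\Omega}(n^{1-o(1)})$. The main obstacle I anticipate is making this bookkeeping airtight rather than the reduction itself: I must fix a computational model in which each inner product and activation is $\tilde{O}(1)$, confirm that the adversarial graph $D$ still has only $\tilde{O}(n)$ arcs per layer so the $\tilde{O}(nL)$ bound is honest, and check that padding to dimension $\Theta(\log^2 n)$ keeps OVC applicable (it does, since hardness at $d\ge c\log n$ is inherited by any larger $d$). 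The selection and normalization steps are straightforward; everything hinges on the near-linear per-layer cost forcing superlinear depth.
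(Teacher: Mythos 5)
Your proposal is correct and follows essentially the same two-part strategy as the paper: an explicit single-layer full-attention construction whose scores equal negated inner products (the paper uses the embedding $x_i=[u_i;0]$ with $Q([a;b])=-a$, $K([a;b])=a$, $V([a;b])=[0;a]$ so the residual lands in separate coordinates, which you should adopt over $V=\mathrm{id}$ to avoid outputting $u_j+u_{j^*}$), followed by a contradiction-style reduction from the Orthogonal Vectors Conjecture using the $\tilde{O}(n)$ per-layer cost with $d=\Theta(\log^2 n)$ polylogarithmic. Your write-up of the OVP preprocessing (padding, normalization, non-negativity, and the $\tilde{O}(n)$ post-processing via $\bkt{u_j}{u_{j^*}}$) fills in details the paper's proof leaves implicit, but the argument is the same one.
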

\begin{proof}
We will break this proof into two parts:
\paragraph{Part 1: The full attention mechanism can solve the problem in $O(1)$ layer}
We begin by providing an explicit construction of a single layer full self-attention that can evaluate Task 1.

\textbf{Step 1} 
We embed each $u_i$ in the input into $\R^{2d}$ as follows:
\begin{equation}
    x_i := E(u_i) = [u_i; 0]
\end{equation}

\textbf{Step 2}
Construct query, key, value functions as follows:
\begin{equation}
\begin{aligned}
    Q([a; b]) &= -a \\
    K([a; b]) &= a \\
    V([a; b]) &= [0; a] \\
\end{aligned}
\end{equation}

Then $\mathrm{Attn}(Q(x_i), K(X), V(X) = [0; u_{\argmax_{j} \langle -u_i, u_j \rangle}] $. Then,
\begin{equation}
    a_i = \mathrm{Attn}(Q(x_i), K(X), V(X)) + x_i = [u_i; u_{\argmax_{j} \langle -u_i, u_j \rangle}] = [u_i; u_{i^*}]
\end{equation}

\textbf{Step 3} 
Let $O(a_i) = 0$, then the output $z_i = [u_i; u_{i^*}]$ as desired.

To complete the argument, observe that it now only takes $O(n)$ inner products to check
if there is a pair of orthogonal vectors as we need only compare $\bkt{u_i}{u_{i^*}}$. 

\paragraph{Part 2: Every Sparse Attention Mechanism will need $\tilde{\Omega}(n^{1-\epsilon})$ layers}
    We prove by contradiction that it is impossible to solve Task 1 by any 
    $g\in\mathcal{T}_D^{H=O(d),m=O(d),q=O(d)}$ sparse-attention graph $D$ with $\tilde{O}(n)$ edges.

    Suppose we can solve Task 1 using a network $g\in\mathcal{T}_D^{H=O(d),m=O(d),q=O(d)}$ that has 
    $l$ layers. Recall that all the computation we do in one layer is:
    \begin{equation}
    \begin{aligned}
        a_i &= \Attn_D(Q(x_i), K(X_{N(i)}), V(X_{N(i)}) + x_i \\
        x_i &= O(a_i) + a_i
    \end{aligned}
    \end{equation}
    where $\mathrm{Attn}_D$ is defined in \cref{AT_app}.

    Thus, total computation per layer is $\tilde{O}(nd^3)$ and consequently $\tilde{O}(nld^3)$ for the 
    whole network consisting of $l$ layers.
    
    We can use the result of Task 1 to solve the orthogonal vector (OV) problem (defined 
    in~\Cref{conj:ovc}) in linear time. So in total, we will be able to solve any instance of OV in $\tilde{O}(nld^3)$ time.
    
    Now if $l=O(n^{1-\epsilon})$ for any $\epsilon > 0$ and $d=\Theta(\log^2 n)$, then it 
    appears that we are able to solve OV in $\tilde{O}(n^{2-\epsilon})$ which contradicts~\Cref{conj:ovc}.
    Therefore, we need at least $\tilde{\Omega}(n^{1-o(1)})$ layers.
\end{proof}


\newpage
\section{Implementation details}
\label{sec:apndx-impl}

We optimize the code for modern hardware.  Hardware accelerators like GPUs and TPUs truly shine on 
coalesced memory operations which load blocks of contiguous bytes at once.  Thus, its not very efficient 
to have  small sporadic  look-ups caused by a sliding window or random element queries. We alleviate this by 
``blockifying'' the  lookups.

\begin{figure}[b]
    \centering
    \begin{subfigure}{.24\textwidth}
    \includegraphics[width=\linewidth]{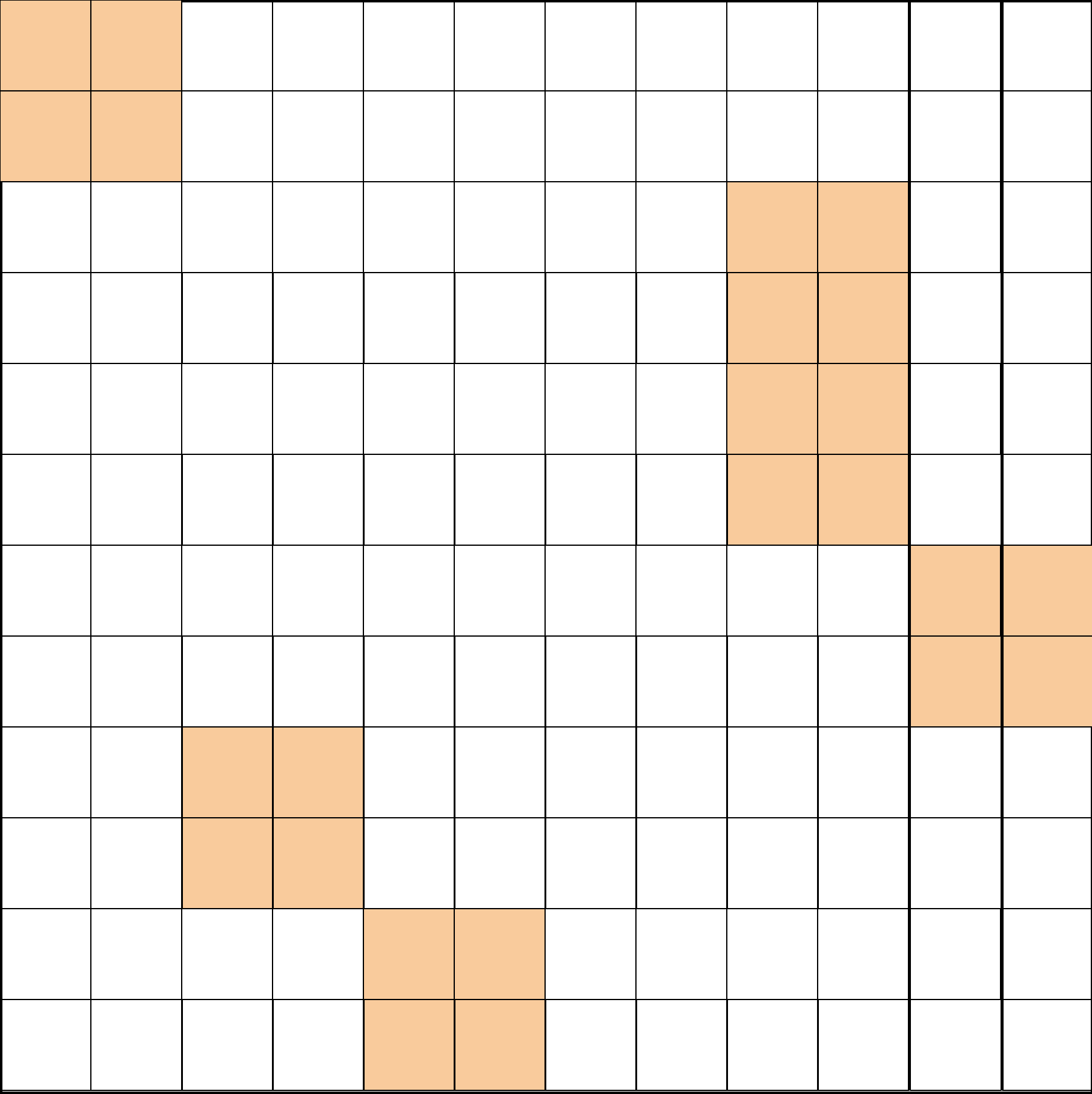}
    \caption{Random Attention \label{fig:apndx_rnd_atn}}
    \end{subfigure}\hfill
    \begin{subfigure}{.24\textwidth}
    \includegraphics[width=\linewidth]{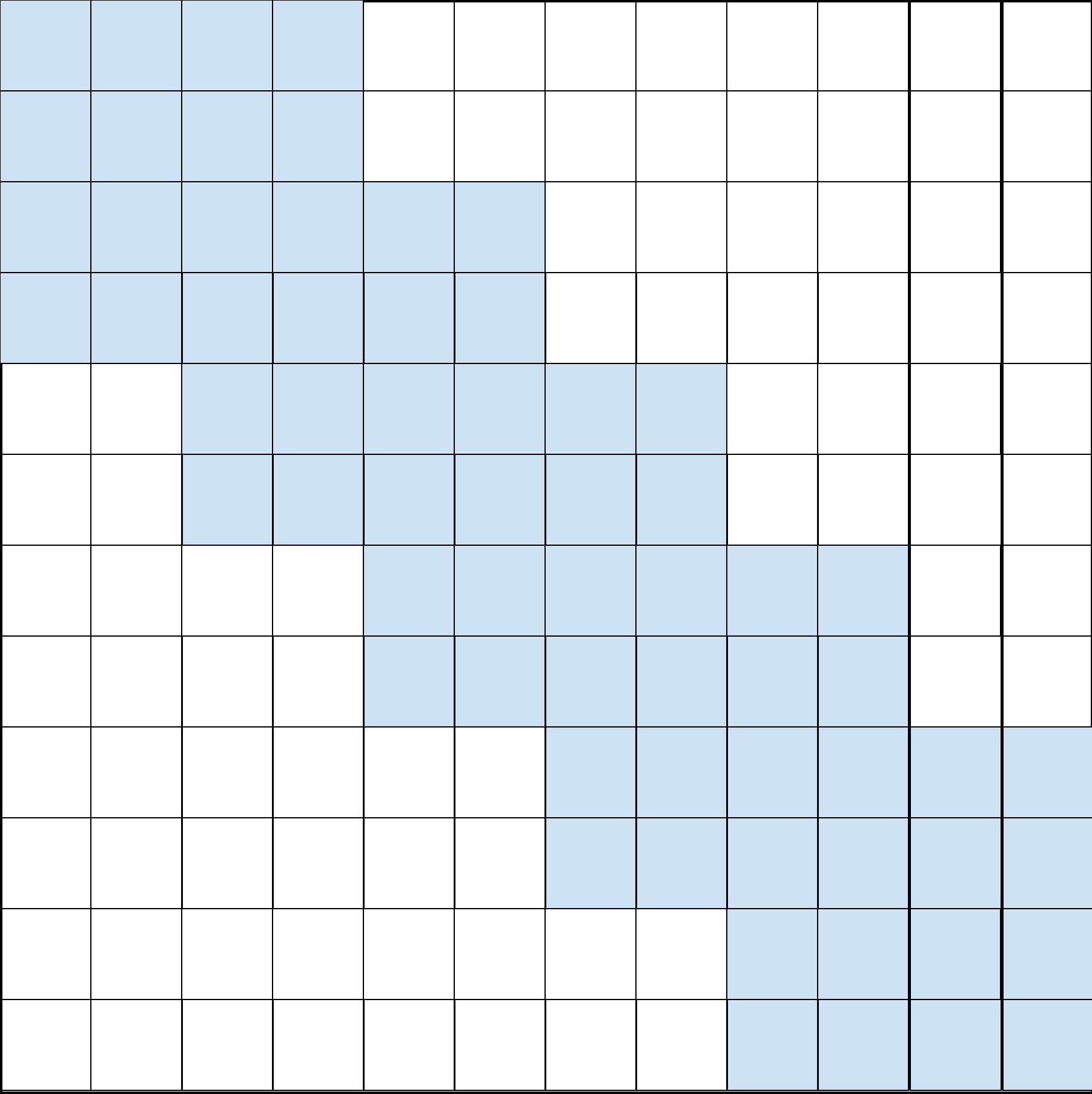}
    \caption{Window Attention \label{fig:apndx_wnd:atn}}
    \end{subfigure}\hfill
    \begin{subfigure}{.24\textwidth}
    \includegraphics[width=\linewidth]{figures/GlobalAttention.pdf}
    \caption{Global Attention \label{fig:apndx_gbl_atn}}
    \end{subfigure}\hfill
    \begin{subfigure}{.24\textwidth}
    \includegraphics[width=\linewidth]{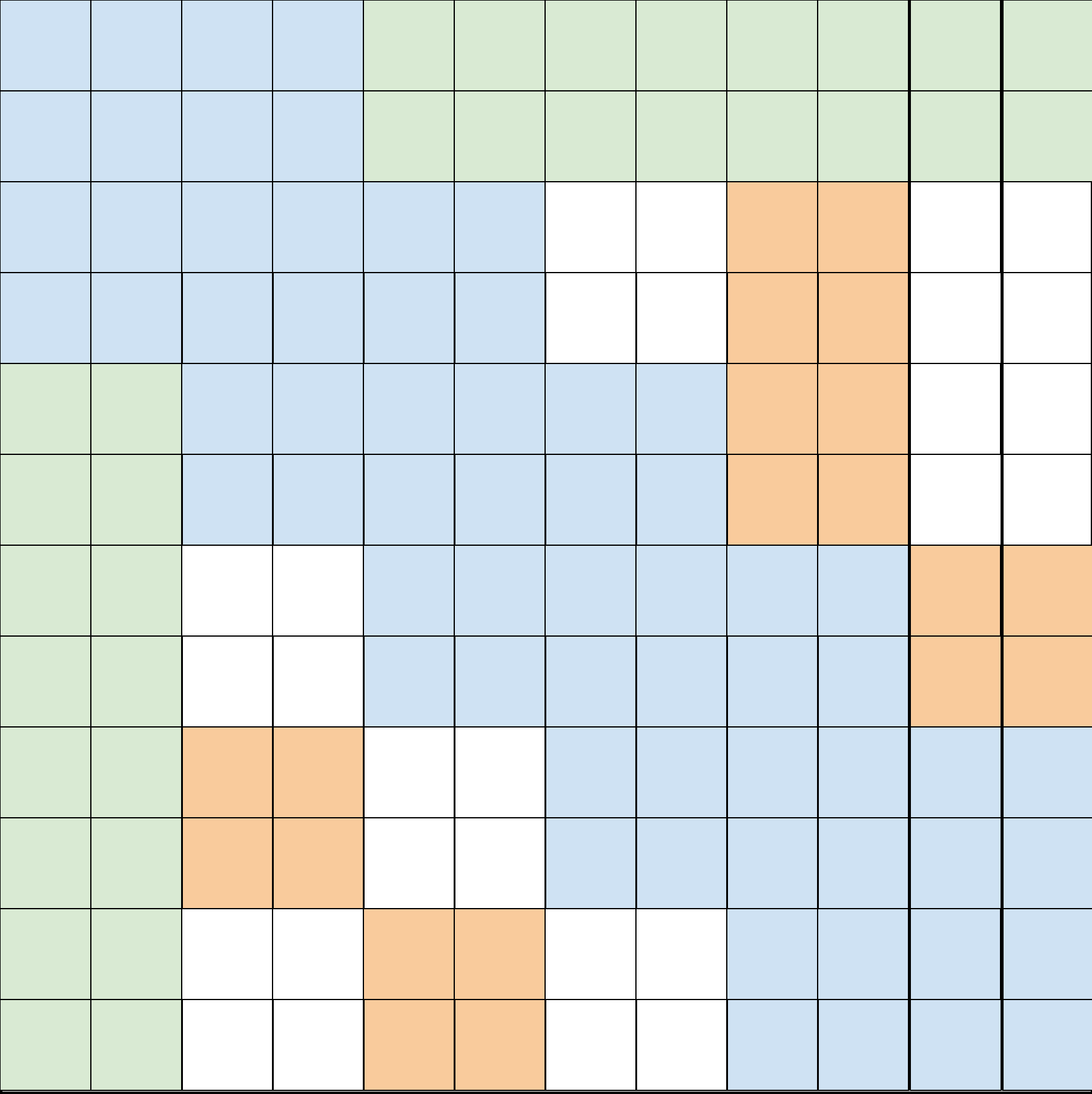}
    \caption{\bigb \label{fig:apndx_bigb_atn}}
    \end{subfigure}
    \hfill
    \caption{Building blocks of the \emph{block-attention} mechanism used in \bigb with block size = $2$. This implies the attention matrix is split into blocks of size $2 \times 2$. All the previous \bigb parameters work on each block as a unit. White color indicates absence of attention. (a) random attention with $r=1$, (b) sliding window attention with $w=3$ (c) global attention with $g=1$. (d) the combined \bigb model.}
    \label{fig:apndx_my_label}
\end{figure}

\paragraph{GPU/TPU and Sparsity}
Ideally, if the adjacency matrix $A$ described in~\Cref{sec:arch} is sparse, one would hope this would be sufficient to speed up the implementation.  
Unfortunately, it is well known~\citep{gray2017gpu,yao2019balanced}, that such sparse multiplications cannot be efficiently implemented in GPUs. GPUs have thousands of cores performing operations in parallel. Thus, we cannot efficiently perform the sparse matrix multiplication mentioned in section~\Cref{sec:arch}.

As a result we propose to first blockify the attention pattern i.e. we pack sets of query and keys together and then define attention on these blocks. 
It is easier to explain this process using the example 
shown in \Cref{fig:apndx_my_label}. Suppose, there are $12$ query and $12$ key vectors to attend to. Using a block size of $2$, we split the query matrix into $12/2 = 6$ blocks and similarly the key matrix into $12/2 = 6$ blocks. Then the three different building components of \bigb are defined on the block matrix. In particular the three different components are:
\begin{enumerate}
    \item Random attention: Each query block attends to $r$ random key blocks. In~\Cref{fig:apndx_rnd_atn}, $r=1$ with block size $2$. This implies that each query block of size $2$ randomly attends to a key block of size $2$. 
    \item Window local attention: While creating the block, we ensure that the number of query blocks and the number of key blocks are the same. This helps us in defining the block window attention. Every query block with index $j$ attends to key block with index $j - (w-1)/2$ to $j + (w-1)/2$, including key block $j$. In~\Cref{fig:apndx_wnd:atn}, $w=3$ with block size $2$. It means that each query block $j$ (size $2$ queries) attends to key block $j-1, j, j+1$.
    \item Global attention: Global attention remains the same as defined in~\Cref{sec:arch}, but we compute it in terms of blocks. In \Cref{fig:apndx_gbl_atn}, $g=1$ with block size $2$. For \bigb-\textsc{itc} this implies that one query and key block, attend to everyone. 
\end{enumerate}
The resulting overall attention matrix is shown in~\Cref{fig:apndx_bigb_atn}.
Unfortunately, simply trying to compute this attention score as multiplying arbitrary pairs of query and key vectors would require use of gather operation, which is inefficient. 
Upon closer examination of window and global attention, we observe that we can compute these attention scores without using a gather operation. 
\begin{figure}
    \vspace{-15mm}
    \centering
    \begin{subfigure}[b]{0.84\textwidth}
         \centering
         \includegraphics[trim=10mm 15mm 10mm 50mm, clip,width=\textwidth,page=1]{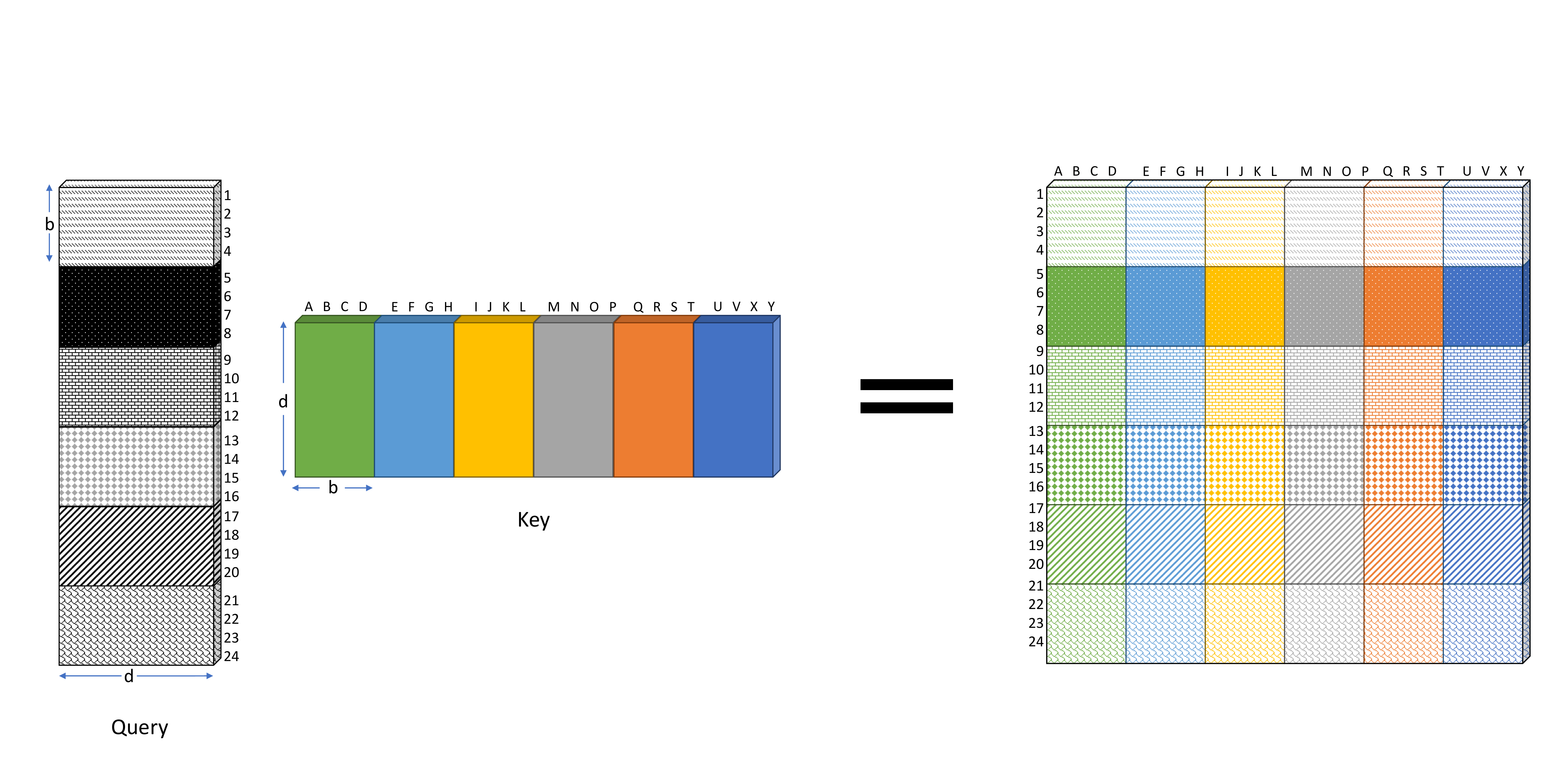}
         \caption{Full all pair attention can be obtained by direct matrix multiplication between the query and key matrix. Groupings just shown for guidance.}
         \label{fig:apndx_full_attn_score}
     \end{subfigure}
     \hfill
     \begin{subfigure}[b]{0.84\textwidth}
         \centering
         \includegraphics[trim=10mm 36mm 10mm 50mm, clip,width=\textwidth,page=2]{figures/BigBird-Figures.pdf}
         \caption{Block diagonal attention can be computed by ``blockifying'' the query and key matrix}
         \label{fig:apndx_block_diag_attn_score}
     \end{subfigure}
     \hfill
     \begin{subfigure}[b]{0.84\textwidth}
         \centering
         \includegraphics[trim=10mm 10mm 10mm 10mm, clip,width=\textwidth,page=3]{figures/BigBird-Figures.pdf}
         \caption{Window local attention obtained by ``blockifying'' the query/key matrix, copying key matrix, and rolling the resulting key tensor (Obtaining rolled key-block tensor is illustrated in detail in~\Cref{fig:apndx_copy-roll}). This ensures that every query attends to at least one block and at most two blocks of keys of size $b$ on each side.}
         \label{fig:apndx_wind_diag_attn_score}
     \end{subfigure}
     \begin{subfigure}[b]{0.84\textwidth}
         \centering
         \includegraphics[trim=10mm 10mm 10mm 10mm, clip,width=\textwidth,page=4]{figures/BigBird-Figures.pdf}
         \caption{Window + Random attention obtained by following the procedure above along with gathering some random key blocks.}
         \label{fig:apndx_rand_att}
     \end{subfigure}
    \caption{Idea behind fast sparse attention computation in \bigb.}
    \label{fig:apndx_bigb_calc_idea}
    \vspace{-2mm}
\end{figure}

Recall, full dense attention scores can be calculated by simple matrix product of query and key matrix with a cost of $O(n^2d)$, as illustrated in~\Cref{fig:apndx_full_attn_score}.
Now note that if we blockify the query and key matrix and multiply, then with only $O(nbd)$ cost we will obtain the block diagonal portion of the attention score, as depicted in~\Cref{fig:apndx_block_diag_attn_score}.
To elaborate this lets assume that $Q, K \in \R^{n \times d}$ are the query and key matrix corresponding to $n$ tokens such that $Q_{i.} = x_i W_Q$ and $K_{i.} = x_i W_K$.
We reshape $n \times d$ query matrix, $Q$, and key matrix, $K$, along the sequence length to obtain $\lceil n/b \rceil \times b \times d$ tensors $Q'$ and $K'$ respectively.
Now we multiply the two tensors as
\begin{equation}
    A_{jst} = \sum_{u} Q'_{jsu} K'_{jtu}, \qquad j=0,1,...,\lceil n/b \rceil 
\end{equation}
The resulting $A$ tensor of size $\lceil n/b \rfloor \times b \times b $ can be reshaped to correspond to the block diagonal portion of the full attention pattern.
Now to extend the attention from block diagonal to a window, i.e. where query block with index $j$ attends to key block with index $j - (w-1)/2$ to $j + (w-1)/2$, we make $w$ copies of the reshaped key tensor $K'$.
We ``roll'' each copy of key-block tensor incrementally along the first axis of length $\lceil n/b \rceil$ as illustrated in~\Cref{fig:apndx_copy-roll}.
Multiplying these $w$ rolled key-block tensors with the query-block tensor would yield the desired window attention scores (\Cref{fig:apndx_wind_diag_attn_score}).
Likewise the global component, we can always include the first $g$ blocks from key tensor corresponding to the global tokens.
Finally, for the random attention, which is very small ($r=3$ for all of our experiments), we resort to using gather ops (\Cref{fig:apndx_rand_att}). 
Also note by design, each query block attends to exactly $r$ random blocks.

Thus, the result of all the three components is basically a compact dense tensor $K''$ of size $\lceil n/b \rceil \times (g+w+r)b \times d$ as shown in~\Cref{fig:apndx_blk_cmp}.
Computing the final attention score then just boils down to a dense tensor multiplication, at which TPU/GPU are very efficient.
Specifically, we need to multiply $Q'$ (size: $\lceil n/b \rceil \times b \times d$) and $K''$ (size: $\lceil n/b \rceil \times (g+w+r)b \times d$) with a cost of $O(n(g+w+r)bd)$ to yield the desired attention score tensor of size $\lceil n/b \rceil \times b \times (g+w+r)b $, which can be reshaped to obtain all the attention scores according to the BigBird pattern.

\begin{figure}
    \vspace{-7mm}
    \centering
    \includegraphics[trim=10mm 15mm 10mm 10mm, clip,width=0.7\textwidth,page=5]{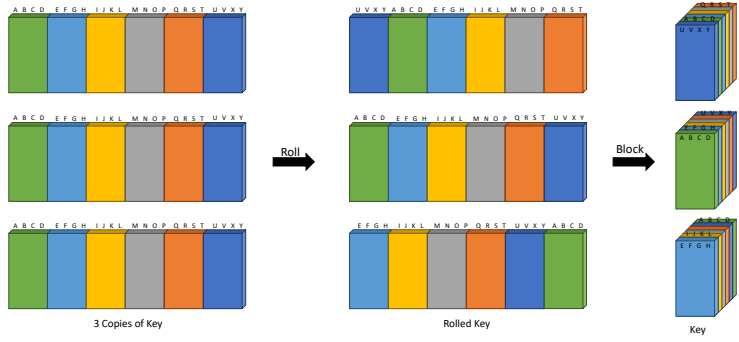}
    \caption{Construction of rolled key-block tensor. Make $w$ copies of the key matrix. Index the copies as  $-(w-1)/2 \leq j \leq (w-1)/2$. Roll $j^\text{th}$ copy by $j$ blocks. Positive roll means circular shift entries left and likewise for negative roll corresponds to right shift. Finally, reshape by grouping the blocks along a new axis to obtain the key-blocked tensor. For illustration purpose $w=3$ is chosen.}
    \label{fig:apndx_copy-roll}
    \vspace{-2mm}
\end{figure}

\begin{figure}[hbt]
    \centering
    \includegraphics[width=0.8\linewidth]{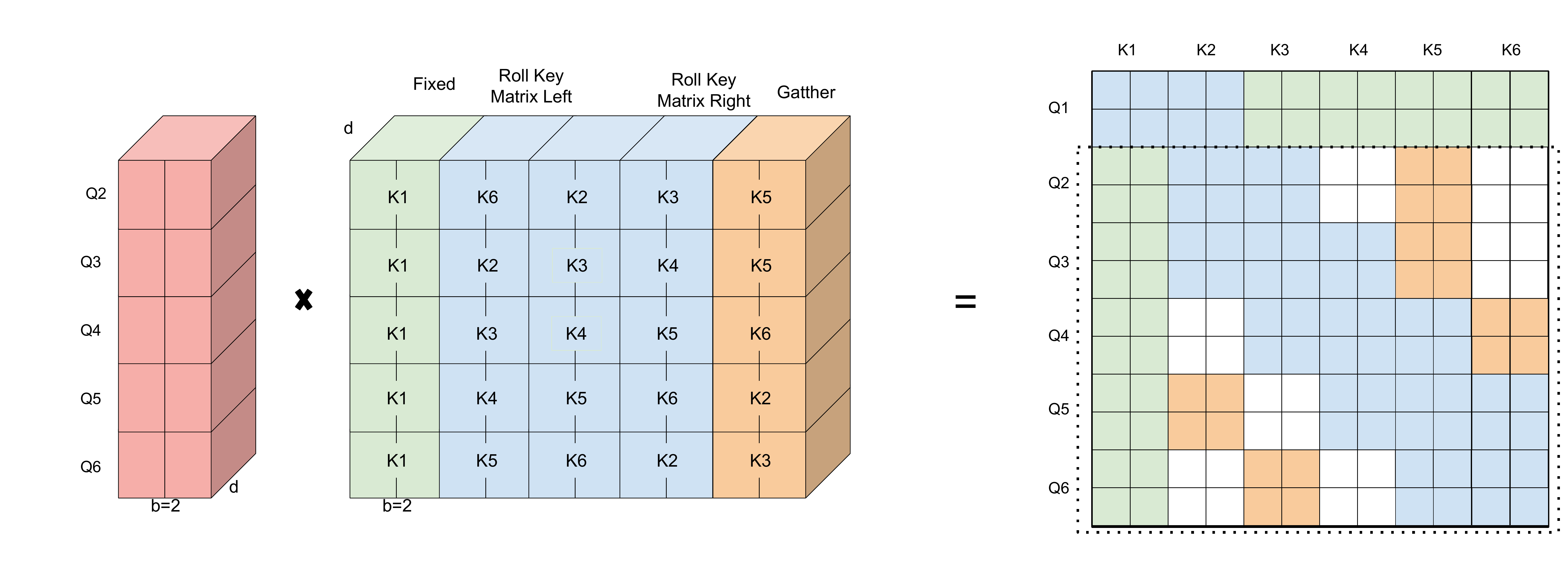}
    \caption{Overview of \bigb attention computation. Structured block sparsity helps in compactly packing our operations of sparse attention, thereby making our method efficient on GPU/TPU.
    On the left, we depict the transformed dense query and key tensors.
    The query tensor is obtained by simply blocking and reshaping while the final key tensor by concatenating three transformations: 
    The first green columns, corresponding to global attention, is fixed. 
    The middle blue columns correspond to window local attention and can be obtained by appropriately rolling as illustrated in~\Cref{fig:apndx_copy-roll}.
    For the final orange columns, corresponding to random attentions, we need to use computationally inefficient gather operation.
    Dense multiplication between the query and key tensors efficiently calculates the sparse attention pattern (except the first row-block, which is computed by direct multiplication), using the ideas illustrated in~\Cref{fig:apndx_bigb_calc_idea}.
    The resultant matrix on the right is same as that shown in~\Cref{fig:apndx_bigb_atn}. 
    }
    \label{fig:apndx_blk_cmp}
    \vspace{-3mm}
\end{figure}

\newpage
\section{NLP experiments details}
\label{sec:apndx-expt-nlp}

\subsection{MLM Pretraining}
\label{sec:app-expt-nlp:mlm}

We use four publicly available datasets Books \citep{zhu2015aligning}, CC-News \citep{guu2020realm}, Stories \citep{trinh2018simple} and Wikipedia to pretrain \bigb. 
We borrow the sentencepiece vocabulary as RoBERTa (which is in turn borrowed from GPT2).
We split any document longer than $4096$ into multiple documents and we join documents that were much smaller than $4096$.
Following the original BERT training, we mask $15\%$ of tokens in these four datasets, and train to predict the mask. We warm start from RoBERTa's checkpoint. 
We train two different models: \bigb-\textsc{itc}-base and \bigb-\textsc{etc}-base. The hyper-parameters for these two models are given in~\Cref{tab:app_mlm_param}. In all experiments we use a learning
rate warmup over the first 10,000 steps, and linear
decay of the learning rate. 

Similar to the norm, we trained a large version of model as well, which has 24 layers with 16 heads and hidden dimension of 1024.
Following the observation from RoBERTa, we pretrain on a larger batch size of 2048 for this size.
For \bigb-\textsc{itc} the block length was kept same as base size, but for \bigb-\textsc{etc} the block length was almost doubled to 169. All the remaining parameters were the same.

\begin{table}[bht]
\small
\centering
\begin{tabular}{@{}l c r c r @{}}
\toprule
Parameter & & \bigb-\textsc{itc} & & \bigb-\textsc{etc} \\
\midrule
 Block length, $b$ & & $64$ & & 84\\
 $\#$ of global token, $g$ & & $2\times b$ & & $256$ \\
 Window length, $w$  & & $3\times b$ & & $3\times b$ \\
 $\#$ of random token, $r$ & & $3\times b$ & & $0$ \\
 Max. sequence length & & $4096$ & & $4096$\\
 $\#$ of heads & & $12$ & & $12$\\
 $\#$ of hidden layers & & $12$ & & $12$ \\
 Hidden layer size & & $768$ & & $768$ \\
 Batch size & & $256$ & & $256$ \\
 Loss & & MLM & & MLM \\
 Activation layer & & gelu & & gelu \\ 
 Dropout prob & & $0.1$ & & $0.1$ \\
 Attention dropout prob & & $0.1$ & & $0.1$ \\
 Optimizer & & Adam & & Adam\\
 Learning rate & & $10^{-4}$  & & $10^{-4}$\\
 Compute resources & & $8 \times 8$ TPUv3 & & $8 \times 8$ TPUv3\\
\bottomrule
\end{tabular}
\vspace{2mm}
\caption{Hyperparameters for the two \bigb base models for MLM.}
\label{tab:app_mlm_param}
\end{table}

\begin{table}[b]
    \centering
    \small
    \parbox{.45\linewidth}{
    \centering
    \begin{tabular}{@{}lrr@{}}
    \toprule
    Dataset & $\#$ tokens & Avg. doc len. \\
    \midrule
        Books \citep{zhu2015aligning} & $1.0$B & $37$K \\
        CC-News \citep{guu2020realm} & $7.4$B & $561$ \\
        Stories \citep{trinh2018simple} & $7.7$B & $8.2$K \\
        Wikipedia &  $3.1$B & $592$ \\
    \bottomrule
    \end{tabular}
    \vspace{2mm}
    \caption{Dataset used for pre training.}
    \label{tab:mlm_data}
    }
    \quad \quad
    \parbox{.45\linewidth}{
    \centering
    \begin{tabular}{@{}lcc@{}}
    \toprule
    Model &  Base & Large \\
    \midrule
    RoBERTa (sqln: 512)  & 1.846 & 1.496 \\
    Longformer (sqln: 4096)  & 1.705 & 1.358 \\
    \bigb-\textsc{itc} (sqln: 4096)   & 1.678 & 1.456 \\
    \bigb-\textsc{etc} (sqln: 4096)   & \textbf{1.611} & \textbf{1.274} \\
     \bottomrule
    \end{tabular}
    \vspace{2mm}
    \caption{MLM performance on held-out set.}
    \label{tab:mlm_bpc}
    }
\end{table}

\begin{table}
    \small
    \centering
    \begin{tabular}{@{}lrrrrrr@{}}
    \toprule
    & & \multicolumn{2}{c}{Instances} & & \multicolumn{2}{c}{Instance Length} \\
    \cmidrule{3-4} \cmidrule{6-7}
    Dataset & & Training & Dev & & Median & Max \\
    \midrule
    HotpotQA-distractor \citep{yang2018hotpotqa} & & $90447$ & $7405$ & & $1227$ & $3560$ \\
        Natural Questions \citep{kwiatkowski2019natural} & & $307373$ & $7830$ & & $3258$ & $77962$ \\
        TriviaQA \citep{JoshiTriviaQA2017} & & $61888$ & $7993$ & & 4900 & 32755 \\
        WikiHop \cite{welbl2018constructing} & & $43738$ & $5129$ & & $1541$ & $20337$ \\
    \bottomrule
    \end{tabular}
    \vspace{2mm}
        \caption{Question Answering Datasets}
    \label{tab:qa_data}
\end{table}

\begin{table}[t]
\small
\centering
\begin{tabular}{@{}l c rr c rr c rr c rr@{}}
\toprule
Parameter & & \multicolumn{2}{c}{HotpotQA} & & \multicolumn{2}{c}{NaturalQ} & & \multicolumn{2}{@{}c@{}}{TriviaQA} & & \multicolumn{2}{@{}c@{}}{WikiHop}\\
\cmidrule{3-4} \cmidrule{6-7} \cmidrule{9-10} \cmidrule{12-13}
Global token location & & \textsc{itc} &\textsc{etc} & & \textsc{itc} & \textsc{etc} & &\textsc{itc} & \textsc{etc}& &\textsc{itc} & \textsc{etc} \\
\midrule
$\#$ of global token, $g$ & & $128$ & $256$ & & $128$  & $230$ & & $128$  & $320$ & & $128$ & $430$   \\
 Window length, $w$  & & $192$ & $252$ & & $192$ & $252$ & &  $192$ & $252$ & & $192$ & $252$  \\
 $\#$ of random token, $r$ & & $192$ &  $0$ & & $192$ &  $0$ & & $192$ &  $0$ & & $192$ &  $0$ \\
 Max. sequence length & & $4096$ & $4096$ & & $4096$ & $4096$ & & $4096$ &  $4096$ & & $4096$ &  $4096$\\
 $\#$ of heads & & $12$ & $12$ & & $12$ & $12$ & & $12$ & $12$ & & $12$ & $12$\\
 $\#$ of hidden layers & & $12$ & $12$ & & $12$ & $12$ & & $12$ & $12$ & & $12$ & $12$ \\
 Hidden layer size & & $768$ & $768$ & & $768$ & $768$ & & $768$ & $768$ & & $768$ &   $768$ \\
 Batch size & & $32$ & $32$ & & $128$ & $128$ & & $32$ &  $32$ & & $64$ &  $64$ \\
 \multirow{2}{*}{Loss} & &   \multicolumn{2}{c}{cross-entropy} & & \multicolumn{2}{c}{cross-entropy} & & \multicolumn{2}{@{}c@{}}{cross-entropy} & & \multicolumn{2}{@{}c@{}}{cross-entropy} \\
 & &   \multicolumn{2}{c}{golden spans} & &   \multicolumn{2}{c}{golden spans} & & \multicolumn{2}{@{}c@{}}{noisy spans~\citep{clark2017simple}} & & \multicolumn{2}{@{}c@{}}{ans choices} \\
 Compute resources & & \multicolumn{2}{c}{$4 \times 2$ TPUv3} & & \multicolumn{2}{c}{$4 \times 8$ TPUv3} & &  \multicolumn{2}{@{}c@{}}{$4 \times 2$ TPUv3} & &  \multicolumn{2}{@{}c@{}}{$4 \times 4$ TPUv3}\\
\bottomrule
\end{tabular}
\vspace{2mm}
\caption{Hyperparameters of base \bigb model used for Question Answering i.e.~the numbers reported in \Cref{tab:QADev}}
\label{tab:app_qa_dev}
\end{table}

\begin{table}[t]
\small
\centering
\begin{tabular}{@{}l c r c r c r c r@{}}
\toprule
Parameter & &  HotpotQA & & NaturalQ & & TriviaQA & & WikiHop \\
\midrule
Global token location & & \textsc{etc} & & \textsc{etc} & & \textsc{etc} & &\textsc{etc} \\
 $\#$ of global token, $g$ & & $256$ & & $230$ & & $320$ & & $430$  \\
 Window length, $w$  & & $507$ & & $507$ & & $507$ & & $507 $ \\
 $\#$ of random token, $r$ & & $0$ & & $0$ & & $0$ & & $0$ \\
 Max. sequence length & & $4096$ & & $4096$ & & $4096$ & & $4096$\\
 $\#$ of heads & & $16$ & & $16$ & & $16$ & & $16$\\
 $\#$ of hidden layers & & $24$ & & $24$ & & $24$ & & $24$ \\
 Hidden layer size & & $1024$ & & $1024$ & & $1024$ & & $1024$ \\
 Batch size & & $32$ & & $64$ & & $32$ & & $64$ \\
 Loss & & cross-entropy & & cross-entropy & & cross-entropy & & cross-entropy \\
 Num epochs & & $\{5, 9\}$ & & $\{3,5\}$ & & $\{3,5\}$ & & $\{5, 10\}$ \\
 Optimizer  & & Adam  & & Adam & & Adam & & LAMB\\
 Learning rate  & & $3\times 10^{-5}$ & & $\{5, 10\}\times 10^{-5}$ & & $\{3, 5\}\times 10^{-5}$ & & $\{2,5 \}\times 10^{-5}$ \\
 Compute resources & & $4 \times 4$ TPUv3 & & $4 \times 8$ TPUv3 & & $4 \times 4$ TPUv3 & & $4 \times 8$ TPUv3\\
\bottomrule
\end{tabular}
\vspace{2mm}
\caption{Hyperparameters of large \bigb model for Question Answering submitted for test i.e.~the numbers reported in~\Cref{tab:QATest}}
\label{tab:app_qa}
\end{table}

\subsection{Question Answering}
\label{sec:app-expt-nlp:qa}

The detailed statistics of the four datasets used are given in~\Cref{tab:qa_data}. 
All the hyperparameters for \bigb, used for creating~\Cref{tab:QADev} are shown in~\Cref{tab:app_qa_dev} and those submitted to get~\Cref{tab:QATest} 
are shown in \Cref{tab:app_qa}. 
We use two types of regularization in training:

\begin{itemize}[leftmargin=12mm, itemsep=0mm, partopsep=0pt,parsep=0pt]
    \item We used a variant of contrastive predictive coding~\citep{oord2018representation} as a dual encoder model.
    \item We use position embedding for \textsc{itc} and relative position encoding~\citep{shaw2018self} for \textsc{etc}.
\end{itemize}

Next, we will mention the dataset/task specific part of the model.

\paragraph{HotpotQA}
The data consists of each question with multiple evidence paragraphs. 
We filtered 16 QA where the answer was not in the given  evidences. 
For \bigb-\textsc{itc}, we use first $128$ global tokens. 
For \bigb-\textsc{etc}, we have one global token for each question token, one for each evidence paragraph, and one for each sentence within the paragraph, for a maximum of $256$ global token.
We use a dense layer on the output corresponding to global token of the evidence paragraph to predict whether its a supporting fact with a threshold over the output logits.
The answer type (yes/no/span) is predicted with a single dense layer from the global CLS token.
For span based answers, the spans are predicted with dense layers on the sequence with the distance between start and end positions to be no more than 30 words. 
The spans are ranked by sum of start and end logits.

\paragraph{Natural Questions}
Here also the data consists of question with supporting evidence, but in form of a single, potentially long, document and not multiple paragraphs. 
We largely follow the setup of \citep{alberti2019bert}.
For documents, that are longer than
4096, a sliding window approach is used
with stride of 2048.
We use CLS token at the beginning, followed by the question followed by a separator token followed by the document as input.
For \bigb-\textsc{itc}, we make the first $128$ tokens as global. For \bigb-\textsc{etc}, we make a global token for CLS, question, and one token for each of the paragraphs.
We train four predictors at the final layer to predict long answer start, long answer end, short answer start and short answer end respectively.
Instead of independently predicting the start and end of answers we first predict the start and then predict the best end location beyond the start.
For short answer, we limit the distance between start and end positions to be no more than 38 words.
The answer type (null, yes, no, short, long) is predicted from CLS token output embedding.
When the logit for a yes/no answer is higher than the logits for short, long or null answer, we replace the short answer with a corresponding yes/no text.

\paragraph{TriviaQA}
The data consists of question-answer pairs with Wikipedia articles as the ``noisy'' supporting evidence.
We call them noisy because the given Wikipedia articles may or may not contain the answer.
Moreover, the answer entities is not annotated to appropriate span in the article, rather all occurrences found using fuzzy string matching are listed.
We use CLS token at the beginning, followed by the question followed by a separator token followed by the document as input.
For \bigb-\textsc{itc}, we make the first $128$ tokens as global. 
For \bigb-\textsc{etc}, we make a global token for CLS, question, and one token for each sentence up to a maximum of 320 global tokens.
Given the noisy nature of answer span, we follow~\citet{clark2017simple} for training.
We use a dense layer on the sequence to predict the answer span for each article independently, with the distance between start and end positions to be no more than 16 words.
For each article the span with maximum start logit + end logit is chosen.
Then we normalize over all the documents associated with that question.

\paragraph{WikiHop}
For each question in WikiHop, we are given upto $79$ candidates, and $63$ supporting paragraphs.
In our \bigb-\textsc{itc} model, following~\citet{beltagy2020longformer}, we concatenate the answer and the question with special tokens, 
\texttt{[q] Question [/q] [ans] Ans1 [/ans] $\ldots$ [ans] AnsN [/ans]} along with the context. 
As the start of the text, always contains questions followed by answers, we make the first $128$ token attend globally. 
In \bigb-\textsc{etc} model, we do not need to insert special \texttt{[ans]}, \texttt{[/ans]} etc.~as we design global tokens appropriately.
Along with global tokens for question, we have one per candidate answer up to a maximum of 430.
Further, we linked answer tokens to their mentions using relative position label. 
Lastly, we use a dense layer that takes in the output vector corresponding to a candidate answer, and predicts a score for the current candidate to be the correct answer.
We apply this dense layer to each candidate independently and 
the candidate with the best score is picked as our final answer.

It is worthwhile to note that explicitly designed attention connection in \textsc{etc} works slightly better, the random connection based \textsc{itc} is pretty competative.

\subsection{Relationship to Contemporary Work} \label{sec:app-related-work}

\paragraph{Longformer}
\citet{child2019generating} introduced localized sliding window to reduce computation.
A more recent version, which includes localized sliding windows and global tokens was introduced independently by
Longofrmer\citep{beltagy2020longformer}. Although \bigb contains additional random tokens, there are also differences in the way global and local tokens are realized. In particular even when there is no random token, as used to get SoTA in question answering, there are two key differences between Longformer and \bigb-etc (see \citep{ainslie2020etc}):
\begin{enumerate}
    \item We use global-local attention with relative
position encodings enables it to better handle structured inputs
    \item Unlike Longformer, we train the global tokens using CPC loss and learn their use during finetuning.
\end{enumerate}

\subsection{Classification}
\label{sec:app-expt-nlp:cls}
We try two types of classification task.

\begin{table}[bht]
\small
\centering
\begin{tabular}{@{}l c r r r r r @{}}
\toprule
Parameter & & IMDb & Arxiv & Patents & Hyperpartisan & Yelp-5  \\
\midrule
 Batch size & & 64 & 64 & 64 & 32 & 32 \\
 Learning rate & & $1\times 10^{-5}$ & $3\times 10^{-5}$ & $5\times 10^{-5}$ & $5\times 10^{-6}$ & $2\times 10^{-5}$\\
 Num epochs & & 40 & 10 & 3 & 15 & 2 \\
 TPUv3 slice  & & $4 \times 4$ & $4 \times 4$ & $4 \times 4$ & $4 \times 2$ & $4 \times 8$ \\
 $\#$ of heads & & \multicolumn{4}{c}{12} & 16\\
 $\#$ of hidden layers & & \multicolumn{4}{c}{12} & 24 \\
 Hidden layer size & & \multicolumn{4}{c}{768} & $1024$ \\
 Block length, $b$ & & \multicolumn{5}{c}{64} \\
 Global token location & & \multicolumn{5}{c}{\textsc{itc}} \\
 $\#$ of global token, $g$ & & \multicolumn{5}{c}{$2\times b$} \\
 Window length, $w$  & & \multicolumn{5}{c}{$3\times b$ } \\
 $\#$ of random token, $r$ & & \multicolumn{5}{c}{$3\times b$} \\
 Max. sequence length & & \multicolumn{5}{c}{4096}\\
 Vocab size & & \multicolumn{5}{c}{$50358$}\\
 Activation layer & & \multicolumn{5}{c}{gelu} \\ 
 Dropout prob & & \multicolumn{5}{c}{0.1} \\
 Attention dropout prob & & \multicolumn{5}{c}{0.1} \\
 Loss & & \multicolumn{5}{c}{cross-entropy} \\
 Optimizer & & \multicolumn{5}{c}{Adam}\\
\bottomrule
\end{tabular}
\vspace{2mm}
\caption{Hyperparameters for document classification.}
\label{tab:app_dc}
\end{table}

\begin{table}
    \centering
    \small
    \begin{tabular}{@{}lrrrrr@{}}
    \toprule
    Model & IMDb~\citep{maas2011learning} & Yelp-5~\citep{zhang2015character} & Arxiv~\citep{he2019long}  & Patents~\citep{lee2020patent} & Hyperpartisan~\citep{kiesel2019semeval}\\
    \midrule
    \# Examples & 25000 & 650000 & 30043 & 1890093 & 645 \\
    \# Classes & 2 & 5 & 11 & 663 & 2 \\
    Excess fraction & 0.14 & 0.04 & 1.00 & 0.90 & 0.53 \\
    \midrule
    SoTA & \citep{thongtan2019sentiment} 97.4 & \citep{abreu2019hierarchical} 73.28 & \citep{olson2019adapting} 87.96 &  \citep{olson2019adapting} 69.01 &
    \citep{jiang2019team} 90.6 \\
    RoBERTa & $95.0 \pm 0.2$ & 71.75 & 87.42 &  67.07 & $87.8 \pm 0.8$ \\
    \bigb  & $95.2\pm0.2$ & 72.16 & \textbf{92.31} & 69.30 & $\mathbf{92.2 \pm 1.7}$ \\
     \bottomrule
    \end{tabular}
    \vspace{2mm}
        \caption{Classification results. We report the F1 micro-averaged score for all datasets. Experiments on smaller IMDb and Hyperpartisan datasets are repeated 5 times and the average performance is presented along with standard deviation.}
    \label{tab:cls}
\end{table}

\begin{table}[bht]
\small
\centering
 \begin{tabular}{@{}lcccccccc@{}}
    \toprule
System             &  MNLI-(m/mm) & QQP  & QNLI & SST-2 & CoLA & STS-B & MRPC & RTE  \\
                   & 392k         & 363k & 108k & 67k   & 8.5k & 5.7k  & 3.5k & 2.5k   \\ 
\midrule
BERT               & 84.6/83.4    & 71.2 & 90.5 & 93.5  & 52.1 & 85.8  & 88.9 & 66.4 \\
XLNet              & 86.8/-       & 91.4 & 91.7 & 94.7  & 60.2 & 89.5  & 88.2 & 74.0 \\
RoBERTa            & 87.6/-       & 91.9 & 92.8 & 94.8  & 63.6 & 91.2  & 90.2 & 78.7 \\
\bigb     & 87.5/87.3    & 88.6 & 92.2 & 94.6  & 58.5 & 87.8  & 91.5 & 75.0 \\
    \bottomrule
   \end{tabular}
   \vspace{2mm}
      \caption{GLUE Dev results on base sized models. 
   Number of training examples is reported below each task.
   MCC score is reported for CoLA, F1 score is reported for MRPC, Spearman correlation is reported for STS-B, and accuracy scores are reported for the other tasks.}
   \label{tab:glue_dev}
\end{table}

\paragraph{Document classification}
We experiment on datasets of different lengths and contents,
as listed in ~\Cref{tab:cls}.
In particular, we look at sentiment analysis (IMDb~\citep{maas2011learning} and Yelp-5~\citep{zhang2015character}) task and topic assignment (Arxiv~\citep{he2019long}, Patents~\citep{lee2020patent}, and Hyperpartisan~\citep{kiesel2019semeval}) task.
Following BERT, we used one layer with cross entropy loss on top of the first [CLS] token from the \bigb encoder consuming 4096 tokens.
We report the results of document classification experiments in~\Cref{tab:cls}.
We compare against state-of-the-art (SoTA) methods for each dataset and plain RoBERTa model with 512 tokens truncation.
In all experiments we use a learning rate warmup over the first 10\% steps, and linear decay of the learning rate and detail list of remaining hyperparameters are provided in
\Cref{tab:app_dc}.
For better quantitative evaluation, we compute the fraction of the dataset that exceeds 512 tokens, i.e. the length at which the document are often truncated.
We see that gains of using \bigb are more significant when we have longer documents and fewer training examples.
For instance, using base sized model,
\bigb improves state-of-the-art for Arxiv dataset by about $\bm{5\%}$ \textbf{points}.
On Patents dataset, there is improvement over using simple BERT/RoBERTa, but given the large size of training data the improvement over SoTA (which is not BERT based) is not significant.
Note that this performance gain is not seen for much
smaller IMDb dataset.
Along with experimental setup detail, we present detailed results in~\Cref{sec:app-expt-nlp:cls} which show competitive performance.

\paragraph{GLUE}
The General Language Understanding Evaluation (GLUE) benchmark \citep{wang2018glue}, test language models on 8 different natural language understanding tasks.
We used the same training parameters as mentioned in \url{https://github.com/pytorch/fairseq/blob/master/examples/roberta/README.glue.md}. Our model parameters are $b=64, g=2\times b, w = 3 \times b, r = 3 \times b$ ( we used the \bigb-\textsc{itc} base model pretrained on MLM task). 
We compare the performance of \bigb to BERT, XLNet \citep{yang2019xlnet} and RoBERTa in \Cref{tab:glue_dev}. We find that even on task that have a much smaller context, our performance is competitive to full attention models.

\subsection{Summarization\label{sec:appn_summarization}}
As discussed in~\Cref{sec:seq2seq},
given the small length of output sequence, we used sparse \bigb attention only for encoder, while keeping the full attention for decoder.
The number of hidden layers, number of heads, and hidden dimension is same for encoder and decoder.
The hyperparameters are detailed in \Cref{tab:app_sum}.
We summarize our result in~\Cref{tab:app_sum_num}. In all experiments, we use a learning
rate warmup over the first 10,000 steps, and square root
decay of the learning rate.

\begin{table}[bht]
\small
\centering

\begin{tabular}{@{}l r r c r @{}}

\toprule
Parameter & \multicolumn{2}{r}{Base: \bigb-RoBERTa} & & Large: \bigb-Pegasus \\
\midrule

 Block length, $b$ & & $64$ & & $64$\\
 Global token location & & \textsc{itc} & & \textsc{itc} \\
 $\#$ of global token, $g$ & & $2\times b$ & & $2\times b$ \\
 Window length, $w$  & & $3\times b$ & & $3\times b$ \\
 $\#$ of random token, $r$ & & $3\times b$ & & $3\times b$ \\
 \multirow{3}{*}{Max. encoder sequence length} & & BBC-XSUM: \qquad \phantom{.}1024 & & 1024 \\
 & & CNN/DM: \qquad \phantom{.}2048  & & 2048 \\
 & & Others: \qquad \phantom{.}3072 & & 3072 \\
 \multirow{3}{*}{Max. decoder sequence length} & & BBC-XSUM: \qquad \phantom{10.}64 & & 64 \\
 & & CNN/DM: \qquad \phantom{1.}128  & & 128 \\
 & & Others: \qquad \phantom{1.}256  & & 256 \\
 Beam size & & 5 & & 5 \\
 \multirow{2}{*}{Length penalty} & & BBC-XSUM: \qquad \phantom{10}0.7 & & 0.7 \\
 & & Others: \qquad \phantom{10}0.8  & & 0.8 \\
 $\#$ of heads & & $12$ & & $16$\\
 $\#$ of hidden layers & & $12$ & & $16$ \\
 Hidden layer size & & $768$ & & $1024$ \\
 Batch size & & $128$ & & $128$ \\
 \multirow{2}{*}{Loss} & & teacher forced & & teacher forced \\
 & & cross-entropy & & cross-entropy \\
 Activation layer & & gelu & & gelu \\ 
 Dropout prob & & $0.1$ & & $0.1$ \\
 Attention dropout prob & & $0.1$ & & $0.1$ \\
 Optimizer & & Adam & & Adafactor\\
 Learning rate & & $1\times10^{-5}$  & & $1\times10^{-4}$\\
  Compute resources & & $4 \times 4$ TPUv3 & & $4 \times 8$ TPUv3\\
\bottomrule
\end{tabular}
\vspace{2mm}
\caption{Encoder hyperparameters for Summarization. We use full attention in decoder}
\label{tab:app_sum}
\end{table}

\begin{table}[bht]
    \centering
    \small
    \begin{tabular}{@{}lrrrrrccrcc@{}}
    \toprule
    & & \multicolumn{3}{c}{Instances} & & \multicolumn{2}{c}{Input Length}& & \multicolumn{2}{c}{Output Length} \\
    \cmidrule{3-5} \cmidrule{7-8} \cmidrule{10-11}
    Dataset & & Training & Dev & Test & & Median & 90\%-ile & &  Median & 90\%-ile \\
    \midrule
       Arxiv \citep{cohan2018discourse} & & 203037 & 6436 & 6440 & & 6151 & 14405 & & 171 & 352 \\
       PubMed \citep{cohan2018discourse} & & 119924 & 6633 & 6658 & & 2715 & 6101 & & 212 & 318 \\
       BigPatent \citep{sharma2019bigpatent} & &  1207222 & 67068 & 67072 & & 3082 & 7693  & & 123 & 197  \\
    \bottomrule
    \end{tabular}
    \vspace{2mm}
     \caption{Statistics of datasets used for summarization.}
    \label{tab:long_sum_data}
\end{table}

Following success of several recent works~\citep{rothe2019leveraging,liu2019roberta}, we warm start our encoder-decoder \bigb transformer model with pretrained weights and the weights between encoder and decoder are shared.
In particular, the query/key/value matrix of self-attention and all the feedforward layers are shared between encoder and decoder.
The only variable that is initialized randomly is the encoder-decoder attention.
For base sized model, we utilize our MLM pretrained model on 4096 sequence length from~\Cref{sec:app-expt-nlp:mlm}, which is in turn initialized using the public RoBERTa checkpoint.
For the large size model, we lift weight from the state-of-the-art Pegasus model~\citep{zhang2019pegasus}, which is pretrained using an objective designed for summarization task.

To check if sparse attention causes significant degradation as compared to full attention, we further experiment on two shorter but popular datasets, where full attention can be used without significantly truncating the document. 
The statistics of these two datasets are in~\Cref{tab:sum_data}.
We see that our performance is competitive, which shows that sparse attention can achieve similar performance to a full attention models. 

\begin{table}
    \small
    \centering
    \begin{tabular}{@{}lrrrrrrrrrr@{}}
    \toprule
    & & \multicolumn{3}{c}{Instances} & & \multicolumn{2}{c}{Input Length}& & \multicolumn{2}{c}{Output  Length} \\
    \cmidrule{3-5} \cmidrule{7-8} \cmidrule{10-11}
    Dataset & & Training & Dev & Test & & Median & 90\%-ile & &  Median & 90\%-ile \\
    \midrule
       BBC XSum \citep{narayan2018don}  & &     204044 & 11332 & 11334 & &  359 & 920 & & 25 & 32 \\
       CNN/DailyMail  \citep{hermann2015teaching} & &  287113 & 13368 & 11490 & &   777 & 1439 & & 59 & 93 \\
    \bottomrule
    \end{tabular}
    \vspace{2mm}
    \caption{Shorter summarization dataset statistics.}
    \label{tab:sum_data}
\end{table}

\begin{table}
    \centering
    \small
    \begin{tabular}{@{}llrrrrrrrr@{}}
    \toprule
     \multicolumn{2}{l}{\multirow[b]{2}{*}{\hspace{-2mm}\normalsize{Model}}} & & 
     \multicolumn{3}{c}{BBC XSum} & & \multicolumn{3}{c}{CNN/DailyMail}\\
    \cmidrule{4-6} \cmidrule{8-10}
    & & & R-1 & R-2 & R-L & & R1 & R2 & R-L \\
    \midrule
    \multirow{9}{*}{\rotatebox[origin=c]{90}{Prior Art}}
    & Lead & & $16.30$ & $1.61$ & $11.95$ & & $39.60$ & $ 17.70$ & $ 36.20$ \\
    & PtGen~\citep{see2017get} & & $29.70$ & $ 9.21$ & $23.24$ & & $39.53$ & $17.28$ & $36.38$ \\
    & ConvS2S~\citep{gehring2017convolutional} & & $31.89$ & $11.54$ & $25.75$ & & $-$ & $-$ & $-$ \\
    & MMN~\citep{kim2018abstractive} & & $32.00$ & $12.10$ & $26.00$ & & $-$ & $-$ & $-$ \\
    & Bottom-Up~\citep{gehrmann2018bottom} & & $-$ & $-$ & $-$  & & $41.22$ & $18.68$ & $38.34$ \\
    & TransLM~\citep{khandelwal2019sample}  & & $-$ & $-$ & $-$  & & $39.65$ & $17.74$ & $ 36.85$ \\
    & UniLM~\citep{dong2019unified}  & & $-$ & $-$ & $-$  & & $43.47$ & $20.30$ & $40.63$ \\
    & Extr-Abst-BERT~\citep{liu2019text} & & 38.81  & 16.50  & 31.27 & &  42.13     & 19.60 & 39.18               \\
    & BART~\citep{lewis2019bart}  & & 45.14  & 22.27 & 37.25 & & 44.16 & 21.28 & 40.90 \\
    \midrule
    \multirow{4}{*}{\rotatebox[origin=c]{90}{Base}}
    & Transformer~\citep{vaswani2017attention}                & & 29.61 &  9.47 & 23.17 & & 34.89 & 13.13 & 32.12 \\
    & \; + RoBERTa~\citep{rothe2019leveraging} & & \underline{39.92} & \underline{17.33} & \underline{32.63} & & 39.44 & 18.69 & 36.80 \\
    & \; + Pegasus~\citep{zhang2019pegasus} & & 39.79 & 16.58 & 31.70 & & \underline{41.79} & \underline{18.81} & \underline{38.93} \\  
    & \bigb-RoBERTa & & 39.52 & 17.22 & 32.30 & & 39.25 & 18.46 & 36.61   \\
    \midrule
    \multirow{3}{*}{\rotatebox[origin=c]{90}{Large}}
    & Pegasus (Reported)~\citep{zhang2019pegasus} & & 47.60 & 24.83 & 39.64 & & 44.16 & 21.56 & 41.30 \\
    & Pegasus (Re-eval) & & \textbf{47.37} & \textbf{24.31} & \textbf{39.23} & & \textbf{44.15} & \textbf{21.56} & \textbf{41.05} \\
    & \bigb-Pegasus  & & 47.12 & 24.05 & 38.80 & & 43.84 & 21.11 & 40.74  \\
     \bottomrule
    \end{tabular}
    \vspace{2mm}
    \caption{Summarization ROUGE score for shorter documents.}
    \label{tab:app_sum_num}
\end{table}

\newpage
\section{Genomics experiments details}
\label{sec:apndx-expt-bio}

In this section we provide details of the experimental setup for \bigb on genomics data.

\subsection{Pretraining}
\label{sec:apndx-expt-bio:mlm}
We try to keep the experimental setup as close to a typical NLP pipeline.
In this regard, we take human reference GRCh37\footnote{\url{https://www.ncbi.nlm.nih.gov/assembly/GCF_000001405.39}} and convert it into documents $\mathcal{D}$. Each document $d\in \mathcal{D}$ is a sequence of sentences, where each sentence is a sequence of fragments of DNA. We construct the documents as follows:
\begin{enumerate}[leftmargin=6mm, itemsep=2mm, partopsep=0pt,parsep=0pt]
    \item Start with empty document set $D = \emptyset$.
    \item For each chromosome $C$, repeat the following procedure 10 times.
    \begin{enumerate}[leftmargin=6mm, itemsep=2mm, partopsep=0pt,parsep=0pt]
         \item Pick uniformly at random a starting point $q$ between base pairs 0 and 5000 from the 5' end.
    \item Repeat until $q > |C|$
    \begin{enumerate}[leftmargin=6mm, itemsep=2mm, partopsep=0pt,parsep=0pt]
    \vspace{1mm}
        \item Pick uniformly at random $s$ a number between 50 and 100 to denote number of sentences per document.
         \item Constructs a document $d$ containing $s$ sentences using consecutive base pairs (bps). The length of each sentence is chosen uniformly at random between 500-1000. Thus the resulting document has $25,000$ - $100,000$ bps. 
        \item $D = D \bigcup d$
        \item $q = q + |d|$
    \end{enumerate}
\end{enumerate}
\end{enumerate}
By this procedure we end-up with approximately $450K$ documents.

\begin{figure}
    \centering
    \includegraphics[width=\linewidth]{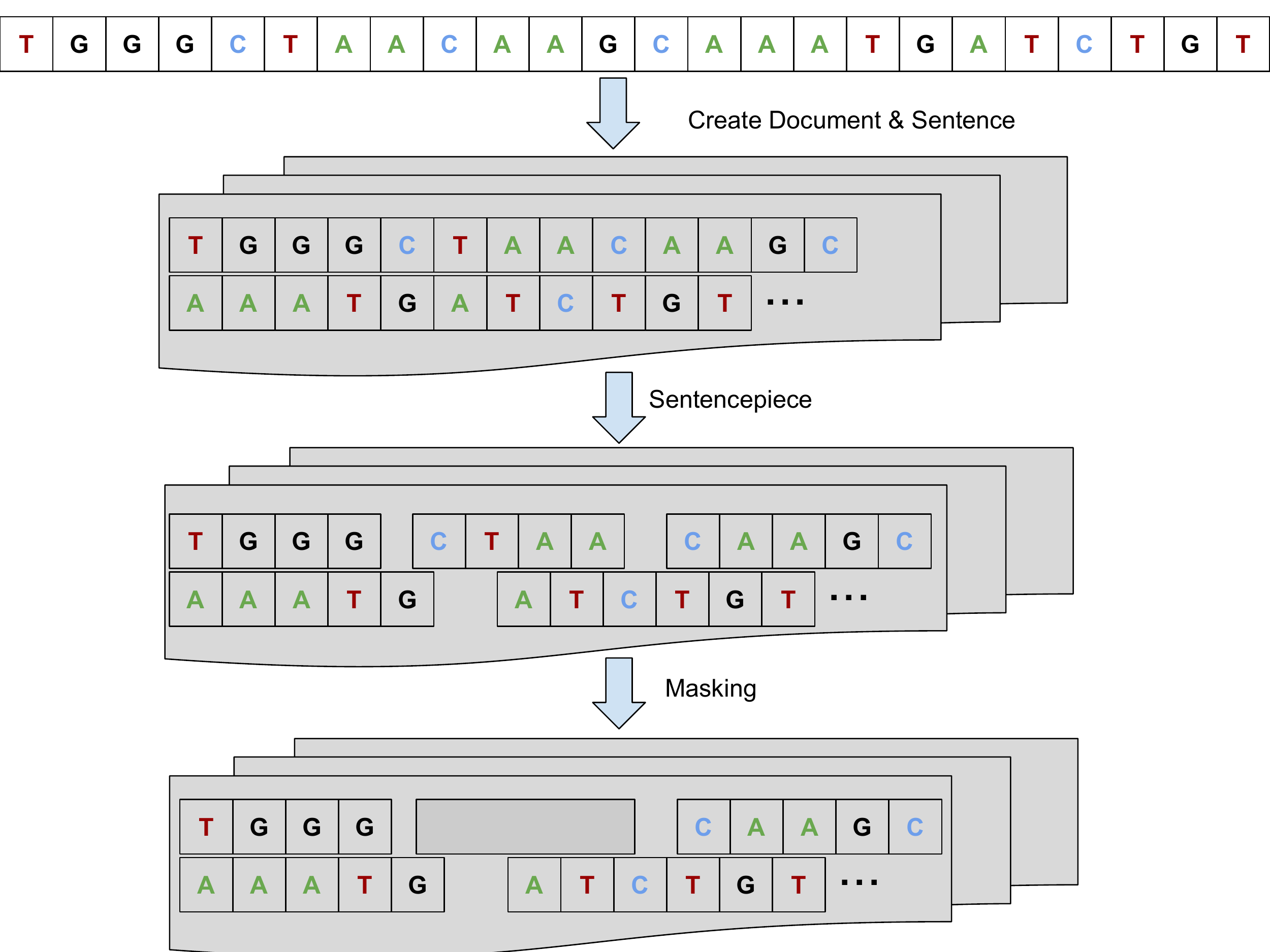}
    \caption{Visual description of how the masked language modeling data was generated from raw DNA dataset. The raw DNA sequences of GRCh37, where split at random positions to create documents with 50-100 sentences where each sentence was 500-1000 base pairs (bps). Thus each document had a continuous strand of 25000-100,000 bps of DNA. This process was repeated 10 times to create 10 sets of document for each chromosome of GRCH37. The resulting set of documents was then passed through Sentencepiece that created tokens of average 8bp. For pretraining we used masked language model and masked $10\%$ of the tokens and trained on predicting the masked tokens.}
    \label{fig:apndx_mlm_data}
\end{figure}

Next we run sentencepiece~\citep{kudo2018sentencepiece} tokenization on the resulting documents. In particular, using 5 characters as the building blocks (four for bases - A, T, C, G and one for missing symbol N), we construct a byte pair encoding table of size 32k, with each token representing 8.78 base pairs on average.

Using the above constructed documents, we construct a dataset for two pretraining tasks following \citet{devlin2018bert}:
\begin{itemize}[leftmargin=6mm, itemsep=2mm, partopsep=0pt,parsep=0pt]
    \item \textbf{Masked Language Model (MLM):} 
    In order to train a deep bidirectional representation, BERT training introduces the MLM task, where we simply mask out 15\% of the input tokens at random, and then predict those masked tokens.
    We can simply replace such masked out of the tokens with a [MASK] placeholder, but it leads to a distribution mis-match for downstream tasks which will not have such placeholders. To mitigate with this issue, out of the 15\% of the tokens selected for masking:
    \begin{itemize}
        \item 80\% of the tokens are actually replaced with the token [MASK].
        \item 10\% of the time tokens are replaced with a random token.
        \item 10\% of the time tokens are left unchanged, but are still predicted at output.
    \end{itemize}
    We run this entire sequence through the \bigb transformer encoder and then predict corresponding to the masked positions, based on the context provided by the other non-masked tokens in the sequence.

    \item \textbf{Next Sentence Prediction (NSP):} 
    In order to understand relationship between two sequences, BERT training introduces the NSP task, where we predict if a given pair of sequences are contiguous or not.
    During training the model gets as input pairs of sequences separated by [SEP] token along with a [CLS] token at the start. Overall the input pattern is: [CLS] sequence A [SEP] sequence B [SEP]. For 50\% of the time the second sequence comes from true sequence after the first one. Remaining 50\% of the time it is a a random sequence from the full dataset. The model is then required to predict this relationship using the output corresponding to the [CLS] token, which is fed into a simple binary classification layer.
\end{itemize}

\begin{wrapfigure}{r}{0.32\textwidth}
    \vspace{-2mm}
    \centering
    \includegraphics[width=0.35\textwidth]{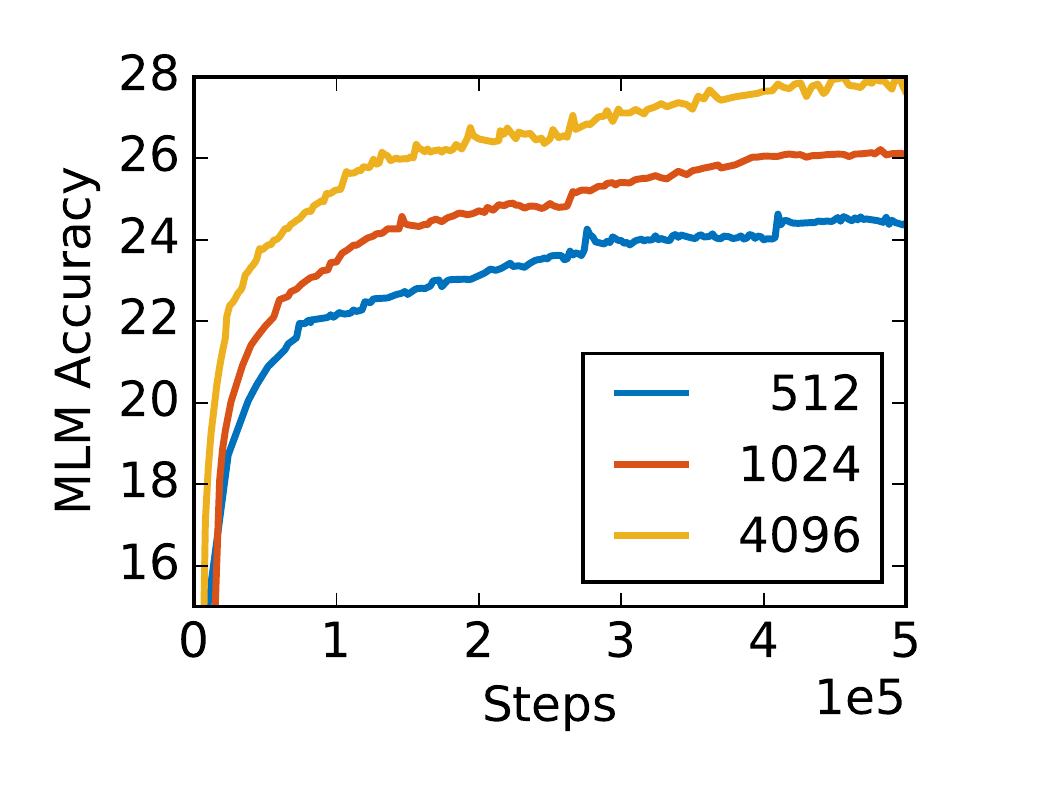}
    \vspace{-5mm}
    \caption{\bigb accuracy with context length.}
    \label{fig:apndx_dna_mlm}
\end{wrapfigure}
The sequence of steps is visually elaborated in \Cref{fig:apndx_fep_data}.
The model is trained with both MLM and NSP together. Training hyperparameter is provided in second columns of \Cref{tab:app_bio}. In all experiments we use a learning
rate warmup over the first 10,000 steps, and linear
decay of the learning rate.

We additionally performed a simple ablation study to validate the hypothesis, that similar to NLP, having a larger context improves performance. 
We use MLM task described above to test how \bigb performed with sequences of different length. 
Accuracy on MLM task with increasing sequence length is shown in \Cref{fig:apndx_dna_mlm}. 
Not only longer context improves final accuracy, it also leads to faster learning, as we have now more opportunities for masking.

\begin{figure}
\vspace{-5mm}
    \centering
    \includegraphics[width=\linewidth]{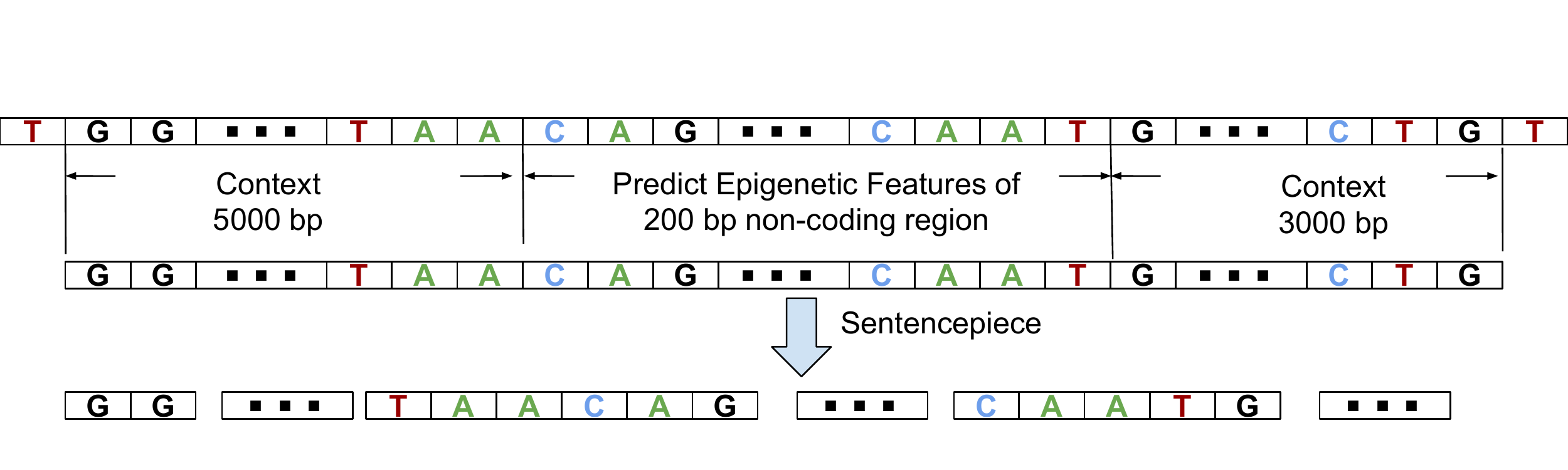}
    \caption{Visual description of the DNA segment from which we predict the chromatin profile for a given non-coding region of the raw DNA sequences of GRCh37. We take 8000 bps of DNA before and after the given non-coding region as context. The complete fragment of DNA including the context on both side, is then tokenized to form our input sequence of tokens. The task is to predict  919  chromatin profile including $690$ transcription factors (TF) binding profiles for $160$ different TFs, $125$ DNase I sensitivity (DHS) profiles and $104$ histone-mark (HM) profiles}
    \label{fig:apndx_fep_data}
\end{figure}

\subsection{Promoter Region Prediction}
The promoter region plays an important role in transcription initiation and thus its recognition is an important area of interest in the field of bioinformatics.
Following \citet{oubounyt2019deepromoter}, we use datasets from Eukaryotic Promoter Database (EPDnew) \citep{dreos2013epd}, which contains
29,597 promoter region in the human genome.
Around the transcription start site (TSS), we extract a sequence of 8000 bp (-5000~+3000 bp) from the human reference genome GRCh37.
Since EPDnew uses newer GRCh38, we convert to GRCh37 coordinates using LiftOver~\citep{kent2002human}.

Following \citet{oubounyt2019deepromoter} for each promoter region example, a negative example (non-promoter sequences) with the same size of the positive one is constructed as follow:
The positive sequence is divided into 20 subsequences. Then, 12 subsequences are picked randomly and substituted randomly. The remaining 8 subsequences are conserved. This process is illustrated in Figure 1 of \citep{oubounyt2019deepromoter}. Applying this process to the positive set results in new non-promoter sequences with conserved parts from promoter sequences (the unchanged subsequences, 8 subsequences out of 20). These parameters enable generating a negative set that has 32 and 40\% of its sequences containing conserved portions of promoter sequences.

We prefix and append each example with [CLS] and [SEP] token respectively.
The output corresponding to the [CLS] token from \bigb transformer encoder is fed to a simple binary classification layer.
We fine-tune the pretrained \bigb from~\Cref{sec:apndx-expt-bio:mlm} using hyper-parameters described in~\Cref{tab:app_bio}.
We note that high performance is not surprising due to the overlap in the nature of negative example generation and MLM pretraining.

\subsection{Chromatin-Profile Prediction}
The first step of sequence-based algorithmic framework for predicting non-coding effects is to build a model to predict, large scale chromatic profile \citep{zhou2015predicting}. 
\begin{table}[b]
\small
\centering
\begin{tabular}{@{} l r r c r c r @{}}
\toprule
Parameter & & Pretraining & & Promoter Region & & Chromatin-Profile \\
\midrule
 Block length, $b$ & & $64$ & & $64$& & $64$\\
Global token location & & \textsc{itc} & & \textsc{itc}& & \textsc{itc} \\
 $\#$ of global token, $g$ & & $2\times b$ & & $2\times b$& & $2\times b$ \\
 Window length, $w$  & & $3\times b$ & & $3\times b$& & $3\times b$ \\
 $\#$ of random token, $r$ & & $3\times b$ & & $3\times b$& & $3\times b$ \\
 Max. Sequence Length & & $4096$ & & $4096$& & $4096$\\
 $\#$ of heads & & $12$ & & $12$& & $12$\\
 $\#$ of hidden layers & & $12$ & & $12$& & $12$ \\
 Hidden layer size & & $768$ & & $768$ & & $768$ \\
 Batch Size & & $256$ & & $256$& & $256$ \\
 Vocab Size & & $32000$ & & $32000$& & $32000$\\
 \multirow{2}{*}{Loss} & & \multirow{2}{*}{MLM+NSP} & & \multirow{2}{*}{BCE} & & 919 x +ve upweighted \\
  & &  & &  & & BCE \\
 Dropout prob & & $0.1$ & & $0.1$ & & $0.1$ \\
 Optimizer & & Adam & & Adam & & Adam\\
 Learning rate & & $0.0001$  & & $0.0001$ & & $0.0001$\\
 $\#$ of steps & & $1000000$ & & 711 && 500000 \\
  Compute Resources & & $8 \times 8$ TPUv3 & & $8 \times 8$ TPUv3 & & $8 \times 8$ TPUv3\\
\bottomrule
\end{tabular}
\vspace{2mm}
\caption{Table of hyperparameters for Computational biology.}
\label{tab:app_bio}
\end{table}
In this paper, we use the dataset provided in \citet{zhou2015predicting}\footnote{ \url{http://deepsea.princeton.edu/media/code/deepsea_train_bundle.v0.9.tar.gz}}, to train \bigb to predict the chromatic profile.

Each training sample consists of a 8,000-bp sequence from the human GRCh37 reference genome centered on each 200-bp bin and is paired with a label vector for 919 chromatin features.
As before, we prefix and append each example with [CLS] and [SEP] token respectively.
The output corresponding to the [CLS] token from \bigb transformer encoder is fed to a linear layer with 919 heads. Thus we jointly predict the 919 independent binary classification problems.
We fine-tune the pretrained \bigb from~\Cref{sec:apndx-expt-bio:mlm} using hyper-parameters described in~\Cref{tab:app_bio}.
As the data is highly imbalanced data (way more negative examples than positive examples), we upweighted loss function for positive examples by factor of 8.

We used training and testing split provided by \citet{zhou2015predicting} using chromosomes and strictly non-overlapping. Chromosome 8 and 9 were excluded from training to test chromatin feature prediction performances, and the rest of the autosomes were used for training and validation. 4,000 samples on chromosome 7 spanning the genomic coordinates 30,508,751–35,296,850 were used as the validation set. 

As the predicted probability for each sequence in DeepSea~\citet{zhou2015predicting} was computed as the ensemble average of the probability predictions for the forward and complementary sequence pairs, we also predict using an ensemble of two \bigb model trained independently.

\end{document}